\theoremstyle{plain}
\newtheorem{theorem}{Theorem}[section]
\newtheorem{proposition}[theorem]{Proposition}
\newtheorem{lemma}[theorem]{Lemma}
\newtheorem{corollary}[theorem]{Corollary}
\theoremstyle{definition}
\newtheorem{assumption}[theorem]{Assumption}
\theoremstyle{remark}
\newtheorem{remark}[theorem]{Remark}
\icmltitlerunning{Active Hypothesis Testing for Correlated Combinatorial Anomaly Detection}
\begin{document}

\twocolumn[
  \icmltitle{Active Hypothesis Testing for Correlated Combinatorial Anomaly Detection}



  \icmlsetsymbol{equal}{*}

  \begin{icmlauthorlist}
    \icmlauthor{Zichuan Yang}{sch}
    \icmlauthor{Yiming Xing}{sch}
  \end{icmlauthorlist}

  \icmlaffiliation{sch}{School of Mathematical Sciences, Tongji University, Shanghai, P.R. China}

  \icmlcorrespondingauthor{Zichuan Yang}{2153747@tongji.edu.cn}
  \icmlcorrespondingauthor{Yiming Xing}{yimingx4@tongji.edu.cn}

  \icmlkeywords{Combinatorial Pure Exploration, Active Hypothesis Testing, Multi-Armed Bandits, Optimal Experimental Design, Anomaly Detection, Cyber-Physical Systems}

  \vskip 0.3in
]



\printAffiliationsAndNotice{}  

\begin{abstract}
  We study the problem of identifying an anomalous subset of streams under correlated noise, motivated by monitoring and security in cyber-physical systems. This problem can be viewed as a form of combinatorial pure exploration, where each stream plays the role of an arm and measurements must be allocated sequentially under uncertainty.
  Existing combinatorial bandit and hypothesis testing methods typically assume independent observations and fail to exploit correlation for efficient measurement design.
  We propose ECC-AHT, an adaptive algorithm that selects continuous, constrained measurements to maximize Chernoff information between competing hypotheses, enabling active noise cancellation through differential sensing.
  ECC-AHT achieves optimal sample complexity guarantees and significantly outperforms state-of-the-art baselines in both synthetic and real-world correlated environments.
  The code is available on \url{https://github.com/VincentdeCristo/ECC-AHT}
\end{abstract}

\section{Introduction}
\label{sec:intro}

Modern monitoring systems often collect data from many streams at the same time.
Examples include sensor networks, cyber--physical systems, and large-scale online platforms \citep{pallakonda2025ai, odeyomi2025intrusion, shanthini2025graph, han2025timeseries, zhang2025high, lee2016introduction}.
In these settings, anomalies are rare, measurements are costly, and decisions must be made sequentially \citep{han2025timeseries, zhang2025high, feng2025false}.
As a result, the central challenge is not how to collect more data, but how to allocate a limited measurement budget in an adaptive and informative way.

A large body of prior work studies this problem through the lens of sequential testing \citep{wald1947sequential, chernoff1959sequential, garivier2016optimal} and combinatorial pure exploration \citep{chen2014combinatorial, huang2018combinatorial, jourdan2021efficient}.
These methods use adaptive strategies to solve structured decision problems in a sample-efficient manner \citep{nakamura2023combinatorial, russo2016simple, abbasi2011improved, agrawal2013thompson, lattimore2020bandit}.
To make the problem tractable, most existing approaches assume that observations across streams are independent.
This assumption simplifies analysis and algorithm design, and it underlies many successful methods in the literature \citep{gafni2023anomaly}. From a bandit perspective, each stream can be viewed as an arm, and the learner faces a combinatorial pure exploration problem with correlated feedback and a structured action space.

However, independence rarely holds in real systems \citep{srinivas2010gaussian, frazier2018bayesian}.
In practice, different streams often share common sources of noise.
Environmental factors, system-wide fluctuations, or correlated inputs can introduce strong dependencies across observations.
When correlation is present, algorithms that assume independence may waste measurements or make overly conservative decisions \citep{sen2017contextual, chen2021understanding}.
As a result, performance can degrade even when the signal itself is strong.

Our key insight is simple: \emph{correlation is not a nuisance but a resource}.
When two streams share the same noise, their difference removes common fluctuations and highlights the true signal.
If an algorithm can decide which differences to measure and when to measure them, it can resolve uncertainty much faster than methods that treat streams in isolation.
The challenge lies in using correlation adaptively, without knowing in advance which streams are anomalous.

We formalize this idea in a new algorithm called Efficient Champion--Challenger Active Hypothesis Testing (ECC-AHT).
The algorithm maintains beliefs over which streams are anomalous and updates them as new data arrive.
At each step, it selects measurements that focus on the most uncertain streams.
Crucially, it uses the correlation structure to construct differential measurements that cancel shared noise and amplify informative contrasts.
This design allows ECC-AHT to concentrate its budget on resolving the hardest distinctions first.

Figure~\ref{fig:inside_ecc_aht} illustrates this process on a toy example.
The figure shows how beliefs evolve over time, how the algorithm selects champion--challenger pairs, and how it reallocates measurements as uncertainty shrinks.
Later panels visualize the measurement vectors themselves and reveal how ECC-AHT aligns its actions with the underlying correlation structure.
Together, these views make clear that the algorithm follows a structured and interpretable strategy rather than a black-box heuristic.
We provide a detailed step-by-step analysis of this behavior in Appendix~\ref{app:interpretation}.

\begin{figure*}[t]
    \centering
    \begin{subfigure}[b]{0.32\textwidth}
        \includegraphics[width=\textwidth]{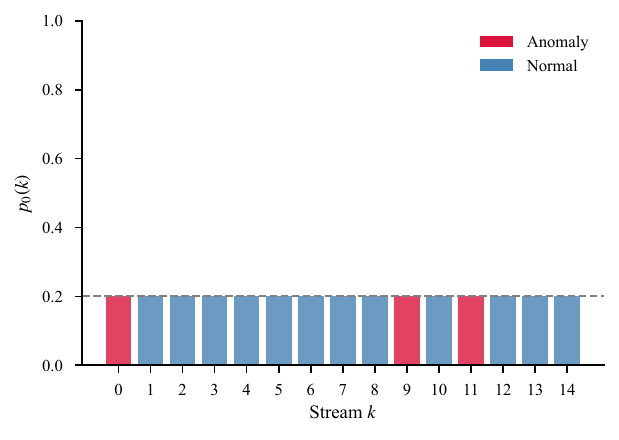}
        \caption{$t=0$: Initial Uniform Beliefs}
        \label{fig:inside_ecc_aht:a}
    \end{subfigure}
    \hfill
    \begin{subfigure}[b]{0.32\textwidth}
        \includegraphics[width=\textwidth]{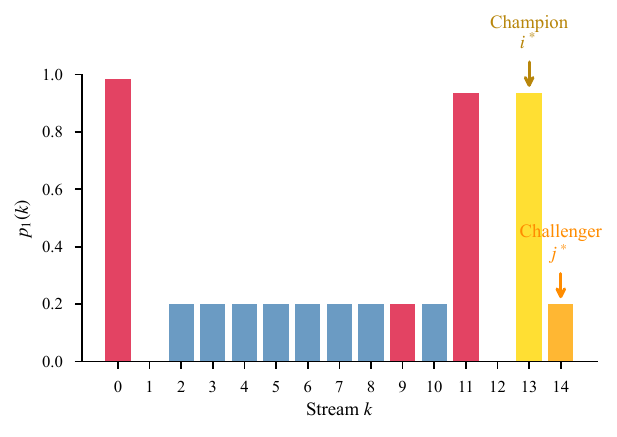}
        \caption{$t=1$: Algorithm started}
        \label{fig:inside_ecc_aht:b}
    \end{subfigure}
    \hfill
    \begin{subfigure}[b]{0.32\textwidth}
        \includegraphics[width=\textwidth]{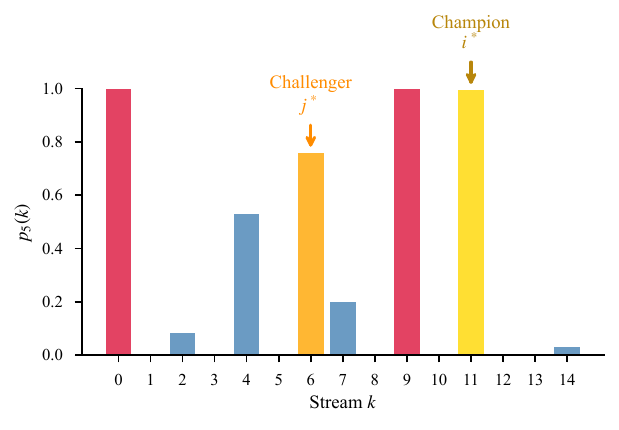}
        \caption{$t=5$: Beliefs Starting to Concentrate}
        \label{fig:inside_ecc_aht:c}
    \end{subfigure}
    \par
    \begin{subfigure}[b]{0.32\textwidth}
        \includegraphics[width=\textwidth]{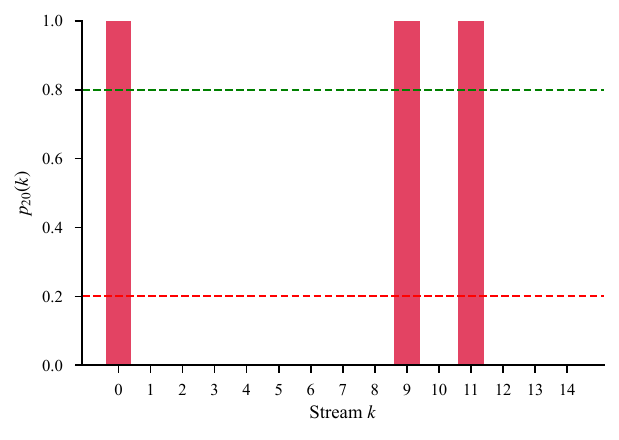}
        \caption{$t=20$: Clear Separation}
        \label{fig:inside_ecc_aht:d}
    \end{subfigure}
    \hfill
    \begin{subfigure}[b]{0.32\textwidth}
        \includegraphics[width=\textwidth]{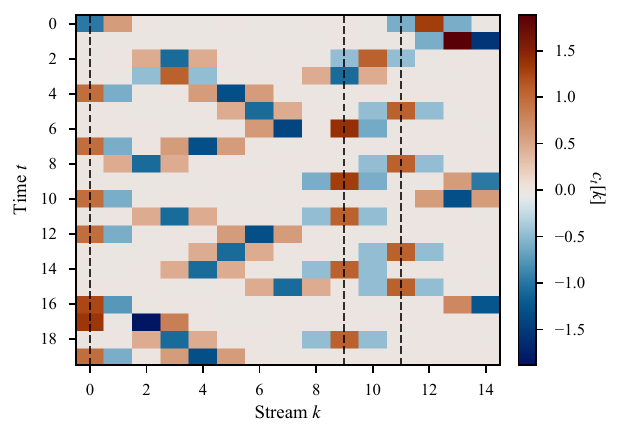}
        \caption{Action Trajectory: $c_t[k]$}
        \label{fig:inside_ecc_aht:e}
    \end{subfigure}
    \hfill
    \begin{subfigure}[b]{0.32\textwidth}
        \includegraphics[width=\textwidth]{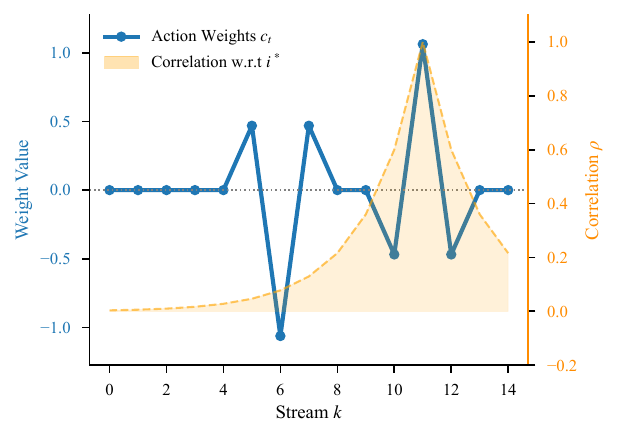}
        \caption{Correlation exploitation at $t=5$}
        \label{fig:inside_ecc_aht:f}
    \end{subfigure}
    \caption{
    Visualization of ECC-AHT dynamics on a toy instance ($K=15$, $n=3$, $\rho=0.6$).
    \textbf{(a)--(d)} Evolution of marginal beliefs $p_t(k)$ from uniform priors to the successful isolation of anomalies.
    \textbf{(e)} Heatmap of action weights $c_t[k]$.
    \textbf{(f)} Snapshot of the action vector $c_t$ at $t=5$ overlaying the correlation structure $\Sigma[i^\star, :]$, illustrating how the algorithm leverages correlation for variance reduction.
    See \textbf{Appendix~\ref{app:interpretation}} for a detailed interpretative analysis.
    }
    \label{fig:inside_ecc_aht}
\end{figure*}

This paper makes three main contributions.
First, we introduce a correlation-aware formulation for adaptive anomaly identification under strict budget constraints.
Second, we propose ECC-AHT, a practical algorithm that actively exploits correlation through adaptive measurement design.
Third, we provide theoretical guaranties and empirical evidence that explain when correlation improves performance and when it does not.
Across a wide range of settings, ECC-AHT achieves substantial reductions in sample complexity compared to state-of-the-art baselines.

\section{Related Work}
\label{sec:related}

Our work relates to several lines of research, including combinatorial pure exploration in bandits, structured anomaly detection, and sequential design of experiments.
While each of these areas addresses part of the problem we study, none of them directly resolves the challenge of \emph{adaptive subset identification under correlated noise with continuous measurement design}.
We discuss these connections below and clarify where ECC-AHT differs.

\subsection{Combinatorial Pure Exploration and Top-Two Methods}

Combinatorial Pure Exploration (CPE) studies the problem of identifying a subset of arms with a desired property, such as the top-$k$ arms~\citep{chen2014combinatorial}.
Early work focused on discrete arm pulls and independent noise.
Subsequent extensions improved efficiency and generalized feedback models~\citep{garivier2016optimal, jourdan2021efficient, sen2017contextual, gupta2021multi}.

Several recent methods achieve strong empirical and theoretical performance by adaptively comparing a leading hypothesis with its most competitive alternative.
Representative examples include CombGapE~\citep{nakamura2023combinatorial}, Top-Two Thompson Sampling (TTTS)~\citep{russo2016simple}, and Top-Two Transportation Cost (T3C)~\citep{shang2020fixed}.
These approaches share a common principle: they track a \emph{champion} and a \emph{challenger} and allocate samples to maximize information gain.

At a high level, ECC-AHT adopts a similar hypothesis-comparison strategy.
However, there are two fundamental differences.
First, existing CPE and Top-Two methods typically assume independent noise across arms.
When correlation is present, they either ignore it or treat it as a nuisance parameter.
Second, even when continuous actions are allowed, these methods restrict attention to scalar rewards without exploiting correlation for noise cancellation.

In contrast, ECC-AHT explicitly incorporates the covariance structure into its measurement design.
Rather than pulling arms individually, it constructs differential measurements that suppress shared noise.
This distinction is critical in highly correlated environments, where independence-based methods can be arbitrarily inefficient.
As a result, ECC-AHT is not a minor variant of CombGapE or TTTS, but a method designed for a different operating regime.

\subsection{Structured and Hierarchical Anomaly Detection}

A separate line of work approaches anomaly detection through structured search.
Graph-based and hierarchical methods exploit known dependencies to reduce search complexity~\citep{akoglu2015graph, pasqualetti2013attack}.
A notable example is Hierarchical Dynamic Search (HDS)~\citep{gafni2023anomaly}, which assumes a perfect tree structure and performs aggregate tests on internal nodes.
Under these assumptions, HDS achieves logarithmic sample complexity.

ECC-AHT addresses a related but distinct problem.
Unlike HDS, it does not assume a fixed hierarchical structure or a specific number of sensors.
This distinction is important in practice.
Real-world systems, such as industrial control networks, rarely satisfy idealized tree assumptions and often exhibit dense, unstructured correlations~\citep{ahmed2017wadi}.

More importantly, hierarchical methods primarily rely on summation-based aggregation.
They do not support differential measurements, which are essential for canceling common-mode noise.
Differential strategies have long been recognized as powerful tools in secure state estimation and attack detection~\citep{pasqualetti2013attack}.
ECC-AHT brings this principle into an adaptive sequential framework, allowing the algorithm to decide \emph{which differences to measure} based on uncertainty.

\subsection{Sequential Design of Experiments}

Our theoretical foundation builds on the classical theory of sequential design of experiments~\citep{chernoff1959sequential}.
Chernoff characterized optimal experiment allocation for hypothesis testing and introduced the notion of Chernoff information.
Later work connected these ideas to bandit problems and best-arm identification~\citep{garivier2016optimal}.

Several studies extended this framework to adaptive and correlated settings~\citep{honda2010asymptotically, jedra2020optimal}.
These works establish fundamental limits on how fast evidence can accumulate.
However, they do not directly address combinatorial subset identification or practical action constraints.

More recently, \citet{fiez2019sequential} studied sequential experimental design in linear bandits.
Their focus lies on reward maximization or simple arm selection rather than identifying a structured subset of hypotheses.
ECC-AHT differs in both goal and mechanism.
It targets subset identification and solves a constrained optimization problem at each step to maximize the relevant Chernoff information.
This design allows ECC-AHT to attain the optimal information rate while handling correlated noise and continuous measurement constraints in a unified framework.

In short, ECC-AHT combines ideas from CPE, Top-Two sampling, and sequential experimental design, but it operates in a regime that existing methods do not cover.
It is designed for correlated observations, continuous measurement actions, and combinatorial hypothesis spaces.
These features together distinguish ECC-AHT from prior work and explain its empirical and theoretical advantages.

\section{Methodology}
\label{sec:method}

This section presents the methodological foundation of ECC-AHT.
Our goal is to design a sequential decision procedure that identifies a small set of anomalous streams under correlated noise as efficiently as possible.
The core idea is simple.
When observations are correlated, the way we measure the system matters as much as how we update beliefs.
ECC-AHT makes this idea operational by combining correlation-aware measurement design with scalable inference and structured hypothesis comparison.

We proceed by first defining the problem and the sensing model.
We then explain how correlation shapes the information structure of the problem.
Based on this structure, we derive a sequential design strategy that focuses on the most informative hypothesis comparison.
Finally, we describe a scalable inference mechanism that preserves the ordering needed by the algorithm.

\subsection{Problem Formulation}

We consider a system with $K$ data streams indexed by $[K] = \{1,\ldots,K\}$.
At each time step, the system produces a vector-valued observation.
Under nominal operation, the observation follows a multivariate Gaussian distribution with mean $\bm{\mu}_0 \in \mathbb{R}^K$ and covariance matrix $\bm{\Sigma} \in \mathbb{R}^{K \times K}$.
The covariance matrix is known and captures correlation induced by shared physical conditions, sensing pipelines, or environmental noise.

A subset of streams $S^\star \subset [K]$ with fixed size $|S^\star| = n$ is anomalous.
Anomalous streams exhibit a mean shift specified by a known signal pattern $\bm{\delta} \in \mathbb{R}^K$.
The resulting mean under hypothesis $H_{S^\star}$ is
$
\bm{\mu}_{S^\star}
=
\bm{\mu}_0 + \displaystyle\sum_{k \in S^\star} \delta_k \mathbf{e}_k,
$
where $\mathbf{e}_k$ denotes the $k$-th canonical basis vector.
We assume $\bm{\delta}$ is known or calibrated in advance, as is standard in sequential testing with structured alternatives.
The task is to identify $S^\star$ with high confidence while minimizing the number of measurements.

Unlike classical bandit settings that sample individual streams, we allow the learner to design measurements.
At time $t$, the learner chooses a vector $\mathbf{c}_t \in \mathbb{R}^K$ and observes a scalar projection
$
y_t = \mathbf{c}_t^\top \mathbf{x}_t + \xi_t,
$
where $\mathbf{x}_t$ is the latent system state and $\xi_t$ is Gaussian noise with variance $\mathbf{c}_t^\top \bm{\Sigma} \mathbf{c}_t$.
To model sensing or energy constraints, we restrict the action space to
$
\mathcal{C} = \{ \mathbf{c} \in \mathbb{R}^K : \|\mathbf{c}\|_1 \le B \}.
$
This continuous action model allows the learner to form linear combinations of streams.
Such combinations are essential when noise is correlated.
They enable differential measurements that suppress shared noise while preserving signal contrast.

\subsection{Information Structure Under Correlated Noise}

The efficiency of sequential testing depends on how quickly the true hypothesis separates from competing alternatives.
This separation rate is governed by the Kullback--Leibler divergence induced by each measurement.
For two hypotheses $H_i$ and $H_j$ and a measurement vector $\mathbf{c}$, the divergence takes the form
\begin{equation}
\label{eq:kl}
D(H_i \Vert H_j \mid \mathbf{c})
=
\frac{
(\mathbf{c}^\top(\bm{\mu}_i - \bm{\mu}_j))^2
}{
2\,\mathbf{c}^\top \bm{\Sigma} \mathbf{c}
}.
\end{equation}

This expression reveals the central role of correlation.
The numerator captures how strongly the measurement separates the two hypotheses.
The denominator captures how much noise remains after projection.
When $\bm{\Sigma}$ has strong off-diagonal structure, naive measurements can be dominated by shared noise.
In contrast, carefully chosen linear combinations can significantly reduce the effective variance.
This observation motivates a shift in perspective.
Correlation is not a complication to ignore.
It is a resource that guides how measurements should be designed.

\subsection{From Combinatorial Search to Pairwise Testing}

The hypothesis space consists of all subsets of size $n$.
Its size grows combinatorially with $K$, which makes exhaustive comparison infeasible.
ECC-AHT addresses this challenge by focusing on the most informative local alternatives.

At each time step, the algorithm maintains marginal belief scores over streams.
Based on these scores, it constructs a candidate anomalous set $S_t$ by selecting the $n$ most likely streams.
Among these, it identifies the weakest member.
It then identifies the strongest candidate outside the set.
This defines a Challenger hypothesis that differs from the current Champion by a single swap.
Such local alternatives are not arbitrary.
In the asymptotic regime, the hardest competing hypotheses differ in exactly one element.
Focusing on these pairs therefore aligns computation with the intrinsic difficulty of the problem.

\subsection{Chernoff-Optimal Measurement Design}

Given a Champion--Challenger pair that differs by streams $i^\star$ and $j^\star$, the algorithm designs a measurement that best separates them.
The corresponding mean difference vector is
$
\bm{\Delta}_{i^\star j^\star}
=
\delta_{i^\star} \mathbf{e}_{i^\star}
-
\delta_{j^\star} \mathbf{e}_{j^\star}.
$
The goal is to choose $\mathbf{c}$ that maximizes the KL divergence in \eqref{eq:kl}.
This leads to the optimization problem
\begin{equation}
\label{eq:qp}
\max_{\mathbf{c} \in \mathcal{C}}
\;
\frac{(\mathbf{c}^\top \bm{\Delta}_{i^\star j^\star})^2}
{\mathbf{c}^\top \bm{\Sigma} \mathbf{c}}.
\end{equation}

This problem has a clear interpretation.
The numerator encourages alignment with the informative contrast.
The denominator penalizes residual noise after projection.
By fixing the numerator through a linear constraint, the problem reduces to minimizing the projected variance.
The resulting quadratic program is convex and can be solved efficiently.
The solution defines a noise-canceling measurement that extracts information at the fastest possible rate for the current ambiguity.

\subsection{Scalable Inference via Ranking Preservation}

Exact Bayesian inference over all subsets is computationally infeasible.
ECC-AHT avoids this bottleneck by observing that it does not require exact posterior probabilities.
It only needs the correct ordering of streams to identify candidate swaps.

To this end, the algorithm maintains per-stream log-likelihood ratios using a pseudo-likelihood approximation.
For each stream, we compare a local anomalous hypothesis against a nominal one while treating other streams as fixed.
This yields the update
\begin{equation}
\label{eq:pseudo_llr}
\ell_t(k)
=
\ell_{t-1}(k)
+
\log
\frac{
\mathcal{N}(y_t \mid \mathbf{c}_t^\top(\bm{\mu}_0 + \delta_k \mathbf{e}_k),\,
\mathbf{c}_t^\top \bm{\Sigma} \mathbf{c}_t)
}{
\mathcal{N}(y_t \mid \mathbf{c}_t^\top \bm{\mu}_0,\,
\mathbf{c}_t^\top \bm{\Sigma} \mathbf{c}_t)
}.
\end{equation}

Although this approximation ignores joint dependencies, it preserves the ranking of streams with high probability.
The reason is structural.
When multiple streams are anomalous, their joint contribution shifts all scores by a common term.
This shared shift does not affect relative ordering.
A formal finite-sample ranking consistency result is provided in Appendix~\ref{app:approximation}.
Since the measurement design explicitly maximizes the separation objective that governs this ranking, inference and control reinforce each other.

The marginal belief is then obtained via the logistic transform
\begin{equation}
\label{eq:pseudo_update}
p_t(k) = (1 + e^{-\ell_t(k)})^{-1}.
\end{equation}

Taken together, these components form ECC-AHT (as shown in Algorithm~\ref{alg:ecc-aht}).
The algorithm continuously focuses on the most ambiguous alternative, designs a measurement that cancels correlated noise, and updates beliefs in a scalable manner.
This integration allows ECC-AHT to achieve information-theoretically optimal rates while remaining practical for high-dimensional systems.

\begin{algorithm}[ht]
\caption{ECC-AHT: Efficient Champion--Challenger Active Hypothesis Testing}
\label{alg:ecc-aht}
\begin{algorithmic}[1]
\REQUIRE Number of streams $K$, anomaly budget $n$, covariance matrix $\bm{\Sigma}$, action budget $B$
\REQUIRE Known signal pattern $\bm{\delta}$, nominal mean $\bm{\mu}_0$
\STATE Initialize marginal beliefs $\{p_0(k)\}_{k=1}^K$ (e.g., uniform prior)
\STATE $t \gets 0$
\WHILE{stopping criterion not met} 
    \STATE \COMMENT{Stopping rule: a fixed-confidence GLR threshold is used for theoretical guarantees (see Appendix~\ref{app:stopping}); practical variants may use posterior-gap surrogates.}
    \STATE $t \gets t + 1$
    \STATE \textbf{Champion selection:}
    \STATE \hspace{0.5cm} $S_t \gets$ indices of the top-$n$ streams with largest $p_{t-1}(k)$
    \STATE \hspace{0.5cm} $i^\star \gets \displaystyle\arg\min_{i \in S_t} p_{t-1}(i)$
    \STATE \textbf{Challenger selection:}
    \STATE \hspace{0.5cm} $j^\star \gets \displaystyle\arg\max_{j \notin S_t} p_{t-1}(j)$
    \STATE Construct pairwise mean difference\hfill $\bm{\Delta}_{i^\star j^\star} \gets \delta_{i^\star} \mathbf{e}_{i^\star} - \delta_{j^\star} \mathbf{e}_{j^\star}$
    \STATE \textbf{Optimal experiment design:}
    \STATE \hspace{0.5cm} Solve
    \[
        \mathbf{c}_t
        \gets
        \displaystyle\arg\min_{\mathbf{c}}
        \quad \mathbf{c}^\top \bm{\Sigma} \mathbf{c}
        \quad
        \text{s.t. }
        \mathbf{c}^\top \bm{\Delta}_{i^\star j^\star} = 1,\;
        \|\mathbf{c}\|_1 \le B
    \]
    \STATE Observe $y_t = \mathbf{c}_t^\top \mathbf{x}_t + \xi_t$
    \STATE \textbf{Belief update:}
    \STATE \hspace{0.5cm} Update marginal beliefs $\{p_t(k)\}_{k=1}^K$ using the pseudo-likelihood update in equation~\eqref{eq:pseudo_update}
\ENDWHILE
\STATE \textbf{Return} $\hat S = S_t$
\end{algorithmic}
\end{algorithm}

\section{Theoretical Analysis}
\label{sec:theory}

This section explains why ECC-AHT works and clarifies the role of correlation in its performance.
Rather than viewing theory as an abstract guarantee, we use it to answer two concrete questions.
First, what is the fastest possible rate at which any algorithm can identify anomalies under correlated noise?
Second, how close does ECC-AHT come to this limit, and why?

\begin{remark}
    Throughout the theoretical analysis, we assume that the signal pattern $\bm{\delta}$ is known and fixed. This assumption is standard in Chernoff-style sequential testing and isolates the difficulty of experimental design under correlated noise. Handling unknown or heterogeneous signal strengths is left for future work.
\end{remark}

The core quantity that governs sample complexity is a single information rate, denoted by $\Gamma^\star$.
This rate depends on both the signal structure and the noise covariance.
Correlation matters because it directly increases $\Gamma^\star$ when the algorithm is allowed to design differential measurements.
ECC-AHT is effective because it actively learns and tracks this optimal rate.

\subsection{A Fundamental Speed Limit}

We begin by recalling the information-theoretic lower bound for this problem.
Consider the task of identifying the true anomalous set $S^\star \subset [K]$, $|S^\star| = n$, under a fixed confidence level $\delta$.

Classical results in sequential hypothesis testing show that any $\delta$-correct algorithm must accumulate enough evidence to separate the true hypothesis from its most confusing alternative~\citep{chernoff1959sequential,garivier2016optimal}.
In our setting, this difficulty is captured by the optimal information rate
\begin{equation}
\label{eq:gamma_star}
\Gamma^\star
=
\max_{\mathbf{c} \in \mathcal{C}}
\;
\min_{S' \neq S^\star}
D(H_{S^\star} \Vert H_{S'} \mid \mathbf{c}),
\end{equation}
where $\mathbf{c}$ denotes a measurement action and $D(\cdot\|\cdot)$ is the Kullback--Leibler divergence between the induced observations.

For Gaussian observations with correlated noise, this divergence admits the closed form
\begin{equation}
\label{eq:kl_form}
D(H_{S^\star} \Vert H_{S'} \mid \mathbf{c})
=
\frac{
\bigl(\mathbf{c}^\top(\bm{\mu}_{S^\star}-\bm{\mu}_{S'})\bigr)^2
}{
2\,\mathbf{c}^\top \bm{\Sigma} \mathbf{c}
}.
\end{equation}

This expression makes the role of correlation explicit.
The numerator measures how well the action $\mathbf{c}$ separates two competing hypotheses.
The denominator measures how much noise remains after aggregation.
When noise is correlated, carefully chosen linear combinations can suppress shared fluctuations and increase this ratio.

Therefore, any $\delta$-correct algorithm must satisfy
\begin{equation}
\label{eq:lower_bound}
\mathbb{E}[\tau]
\;\ge\;
\frac{\log(1/\delta)}{\Gamma^\star}
\,(1-o(1)),
\qquad
\text{as } \delta \to 0.
\end{equation}
This bound represents a fundamental speed limit.
No algorithm can identify anomalies faster than this rate, regardless of its internal design.

\subsection{What ECC-AHT Guarantees in Finite Samples}

We now explain what ECC-AHT achieves relative to this limit.
The algorithm follows a Champion--Challenger principle.
At each round, it identifies the two hypotheses that are currently hardest to distinguish and selects a measurement that maximizes their separation.
Importantly, this selection uses the full covariance matrix $\bm{\Sigma}$.

\begin{theorem}[Order-Optimal Non-Asymptotic Sample Complexity]
\label{thm:nonasymptotic}
Assume that $\bm{\Sigma}$ is positive definite and the action set $\mathcal{C}$ is compact.
For any $\delta \in (0,1)$, ECC-AHT is $\delta$-correct, and its expected stopping time satisfies
\begin{equation}
\label{eq:nonasymp_bound}
\mathbb{E}[\tau]
\;\le\;
\frac{\log(1/\delta)}{\Gamma^\star}
\;+\;
C_1 \log\log(1/\delta)
\;+\;
C_2,
\end{equation}
where $C_1$ and $C_2$ are constants independent of $\delta$.
\end{theorem}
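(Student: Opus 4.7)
The plan is to decompose the claim into two standard pieces: a $\delta$-correctness argument that controls the error event, and a sample-complexity argument that upper-bounds $\mathbb{E}[\tau]$ on the good event. I would then combine them using a truncation of $\tau$ at a level slightly above $\log(1/\delta)/\Gamma^\star$, so that the contribution of the bad event is absorbed into $C_2$.

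For correctness, I would instrument ECC-AHT with a GLR stopping rule of the Chernoff type, $\tau = \inf\{t : \min_{S'\neq S_t}\sum_{s\le t} D(H_{S_t}\Vert H_{S'}\mid \mathbf{c}_s) > \beta(t,\delta)\}$ with threshold $\beta(t,\delta)=\log(1/\delta)+O(\log\log t)$ (as referenced in the stopping appendix). The Gaussian observation model makes each log-likelihood ratio under the true hypothesis a sum of sub-Gaussian increments with explicit variance $\mathbf{c}_s^\top \bm{\Sigma}\mathbf{c}_s$, so a standard maximal inequality plus a union bound over the finitely many alternative sets $S'$ gives $\mathbb{P}(\hat S\neq S^\star)\le \delta$. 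Here the compactness of $\mathcal{C}$ and positive definiteness of $\bm{\Sigma}$ are used to ensure that the per-step KL increments are uniformly bounded, which is essential for clean concentration.

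For the sample-complexity bound, the heart of the argument is to show that the empirical allocation produced by the Champion--Challenger rule asymptotically achieves the max--min rate $\Gamma^\star$ defined in \eqref{eq:gamma_star}. I would proceed in three steps. First, establish that marginal pseudo-log-likelihoods $\ell_t(k)$ concentrate around their means at the usual $O(\sqrt{t\log\log t})$ rate, and invoke the ranking-preservation result from Appendix~\ref{app:approximation} to conclude that after a random but almost-surely finite time $T_0$, the top-$n$ set coincides with $S^\star$ and the weakest-in/strongest-out pair $(i^\star,j^\star)$ cycles through precisely the single-swap alternatives that attain the outer min in \eqref{eq:gamma_star}. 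Second, for each such pair the inner QP in \eqref{eq:qp} is a strictly convex program whose optimizer is a continuous function of the pair identity; its value equals the corresponding pairwise divergence rate. A standard saddle-point / tracking argument (in the spirit of Garivier--Kaufmann C-tracking, adapted to the continuous action set $\mathcal{C}$) then yields that the time-averaged divergence against any alternative converges to $\Gamma^\star$, with a deviation of $O((\log\log t)/t)$ coming from the concentration in the first step. Third, inverting the stopping rule: once $t$ exceeds $T_0$ and the average divergence is at least $\Gamma^\star - \varepsilon_t$ with $\varepsilon_t=O(\sqrt{(\log\log t)/t})$, the GLR condition fires as soon as $t(\Gamma^\star-\varepsilon_t)\ge \beta(t,\delta)$. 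Solving this implicit inequality and taking expectation produces the leading term $\log(1/\delta)/\Gamma^\star$, a second-order $C_1\log\log(1/\delta)$ contribution from both $\beta$ and $\varepsilon_t$, and a constant $C_2$ absorbing $\mathbb{E}[T_0]$ and the tails of the bad event.

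The main obstacle is the middle step: transferring ranking consistency of the pseudo-likelihood (which only controls ordinal information about beliefs) into a quantitative convergence of the allocation to the max--min optimizer of \eqref{eq:gamma_star}. Unlike independent-observation CPE, where the empirical mean gap directly drives D-tracking, here the Champion--Challenger identity can in principle oscillate among several near-tied alternatives, each inducing a different QP solution in $\mathcal{C}$. I would handle this by exploiting that the max--min value is attained on a convex combination of the optimal single-swap pairs, and that the time-averaged allocation lies in the convex hull of the per-round QP solutions; a Lipschitz bound on the QP value in the pair direction $\bm{\Delta}_{i^\star j^\star}$ then controls the shortfall from $\Gamma^\star$ by the fluctuations of the belief ranking, which are themselves $O(\sqrt{(\log\log t)/t})$. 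The remaining bookkeeping is routine and yields the non-asymptotic bound \eqref{eq:nonasymp_bound}.
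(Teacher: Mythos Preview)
Your proposal is correct and structurally parallel to the paper's proof: both instrument ECC-AHT with a GLR stopping rule at threshold $\log(1/\delta)+O(\log\log(1/\delta))$, establish $\delta$-correctness via sub-Gaussian concentration plus a union bound, show the Champion set stabilizes at $S^\star$ after a finite random time, and then invert the stopping condition. The substantive difference is in how the allocation is shown to achieve rate $\Gamma^\star$. You invoke a C-tracking\,/\,convex-hull argument to handle possible oscillation of the Champion--Challenger pair among near-tied single-swap alternatives, controlling the \emph{time-averaged} divergence and bounding the shortfall by a Lipschitz argument on the QP value. The paper instead exploits that ECC-AHT's action is a deterministic function of the current pair: once the ranking stabilizes (Lemma~\ref{lem:ranking_stable}), the Challenger itself is fixed to a minimizing alternative and the QP output converges to $\mathcal{C}^\star$ by argmax-continuity (Lemma~\ref{lem:action_conv}, via Theorem~5.7 of van der Vaart), so the \emph{per-step} divergence is already $\Gamma^\star - o(1)$ without any averaging. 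Your route is more robust (it would also cover randomized or frequency-tracking variants), but it is heavier than needed here; the paper's route is shorter precisely because the greedy QP structure dissolves the oscillation obstacle you flagged. A secondary difference: for correctness the paper reduces the combinatorial GLR to the minimum pairwise pseudo-LLR gap via the additive decomposition of $\ell_t$ (Lemma~\ref{lem:glr_pairwise_decomposition}, Proposition~\ref{prop:glr_pairwise_reduction}), replacing your union over $\binom{K}{n}$ alternatives by a union over $n(K-n)$ pairs; both are valid since $C_1,C_2$ may depend on $K$.
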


This result shows that ECC-AHT operates at the optimal scale.
Up to lower-order terms, the algorithm matches the fundamental speed limit dictated by $\Gamma^\star$.
In practical terms, this means that ECC-AHT wastes no samples beyond what is unavoidable.

\subsection{Asymptotic Optimality and the Role of Correlation}

The previous result controls finite-sample behavior.
In the asymptotic regime where $\delta \to 0$, a sharper statement is possible.

\begin{theorem}[Exact Asymptotic Optimality]
\label{thm:asymptotic}
Under the same assumptions,
\begin{equation}
\label{eq:asymptotic}
\limsup_{\delta \to 0}
\frac{\mathbb{E}[\tau]}{\log(1/\delta)}
=
\frac{1}{\Gamma^\star}.
\end{equation}
\end{theorem}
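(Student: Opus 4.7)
The plan is to obtain \cref{thm:asymptotic} by sandwiching $\mathbb{E}[\tau]/\log(1/\delta)$ between matching upper and lower bounds and then letting $\delta \to 0$. The upper bound will be inherited from the finite-sample guarantee in \cref{thm:nonasymptotic}, while the lower bound will come from the Chernoff-type limit in~\eqref{eq:lower_bound}, which already applies to every $\delta$-correct procedure.

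First I would start from the non-asymptotic bound
\[
\mathbb{E}[\tau] \;\le\; \frac{\log(1/\delta)}{\Gamma^\star} + C_1 \log\log(1/\delta) + C_2,
\]
divide through by $\log(1/\delta)$, and observe that the correction terms are negligible: since $C_1$ and $C_2$ do not depend on $\delta$ and $\log\log(1/\delta)/\log(1/\delta) \to 0$, passing to $\limsup_{\delta \to 0}$ gives $\limsup_{\delta \to 0} \mathbb{E}[\tau]/\log(1/\delta) \le 1/\Gamma^\star$. Next, because ECC-AHT is $\delta$-correct (also asserted by \cref{thm:nonasymptotic}), the universal lower bound~\eqref{eq:lower_bound} applies to its stopping time and yields $\liminf_{\delta \to 0} \mathbb{E}[\tau]/\log(1/\delta) \ge 1/\Gamma^\star$. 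Chaining $\liminf \le \limsup$ with these two inequalities forces equality, so in particular $\limsup_{\delta \to 0} \mathbb{E}[\tau]/\log(1/\delta) = 1/\Gamma^\star$, which is exactly the claim.

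The hard part is not located in this asymptotic reduction, which is essentially an algebraic consolidation, but in \cref{thm:nonasymptotic} itself: one has to certify that the residual terms really grow at most like $\log\log(1/\delta)$. If I needed to verify \cref{thm:asymptotic} directly without leaning on the finite-sample bound, my plan would be the standard track-and-stop template: (i) show that the pseudo-likelihood ranking in~\eqref{eq:pseudo_llr} eventually identifies the true set $S^\star$, and hence the correct Champion--Challenger pair, with probability tending to one; (ii) verify that the per-step quadratic program~\eqref{eq:qp} on that pair attains the inner minimum in~\eqref{eq:gamma_star}, so that the empirical per-round KL rate converges to $\Gamma^\star$; and (iii) combine these with a uniform law of large numbers and the GLR stopping rule to conclude that $\tau$ concentrates around $\log(1/\delta)/\Gamma^\star$. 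The delicate step I expect to fight with is (i): during the transient phase in which the active pair may differ from the asymptotically hardest one, the empirical information rate can be strictly below $\Gamma^\star$, and controlling the length of this phase so that it contributes only $o(\log(1/\delta))$ to $\mathbb{E}[\tau]$ is precisely what the $C_1 \log\log(1/\delta)$ term in \cref{thm:nonasymptotic} absorbs.
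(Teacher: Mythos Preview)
Your proposal is correct, but the paper takes a different route. Rather than invoking \cref{thm:nonasymptotic} and the lower bound~\eqref{eq:lower_bound} to sandwich the ratio, the paper proves \cref{thm:asymptotic} directly, essentially following the track-and-stop template you sketch as your fallback plan: it uses a ranking-stabilization lemma to fix the Champion--Challenger pair to $(S^\star,S'_t)$ after a finite transient, an action-convergence lemma to show $\mathbf{c}_t \to \mathcal{C}^\star$, and LLR drift and concentration lemmas to establish $L_t/t \to \Gamma^\star$ almost surely; combining this with the GLR stopping condition $L_{\tau-1} < \log(1/\delta) \le L_\tau$ yields $\tau/\log(1/\delta) \to 1/\Gamma^\star$ almost surely, and dominated convergence (via exponential tails on $\tau$) transfers this to expectations. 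Your sandwich argument is shorter and entirely valid once \cref{thm:nonasymptotic} and~\eqref{eq:lower_bound} are available; the paper's route is more self-contained, does not need to re-invoke the information-theoretic lower bound (both directions follow from the almost-sure limit and the two-sided stopping inequality), and as a byproduct delivers almost-sure convergence of $\tau/\log(1/\delta)$ rather than only the limsup of expectations.
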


This theorem states that ECC-AHT exactly attains the optimal information rate.
As the algorithm runs, its measurement actions converge to an optimal design that solves \eqref{eq:gamma_star}.
As a result, the evidence against the nearest competing hypothesis grows at the fastest possible rate.

This perspective clarifies why correlation matters.
Algorithms that ignore off-diagonal structure in $\bm{\Sigma}$ effectively optimize a smaller rate $\Gamma_{\mathrm{diag}} < \Gamma^\star$.
Their evidence accumulates more slowly, which leads to strictly larger sample complexity.
The ratio $\Gamma^\star / \Gamma_{\mathrm{diag}}$ therefore quantifies the intrinsic speedup enabled by correlation-aware measurement design.

The theory shows that ECC-AHT does not merely adapt faster.
It adapts at the optimal rate permitted by the problem.
Correlation improves performance precisely because it enlarges the achievable information rate, and ECC-AHT is designed to track this rate online.

\section{Experiments}
\label{sec:experiments}

We evaluate ECC-AHT through a sequence of experiments designed to answer one central question:
\emph{when observations are correlated, can an algorithm actively exploit this structure to identify anomalies faster and more reliably?}
To answer this question, we proceed from controlled synthetic settings to a large-scale real-world system.
Along the way, we isolate the role of correlation, experimental design, and algorithmic components.

\begin{remark}
    Due to space limitations, additional experimental results and figures are provided in Appendix~\ref{app:supexp}
\end{remark}

\paragraph{Experimental protocol.}
Across all experiments, we measure the number of observations required to reach an F1-score of $0.95$ or the F1-score that changes as the number of observations increases.
Unless otherwise stated, we average results over 20 independent runs.
We report $95\%$ confidence intervals computed using the BCa bootstrap with 10{,}000 resamples.

\subsection{Does Correlation Help? Scalability and Correlation Exploitation}
\label{sec:5.1}
We begin with synthetic experiments that isolate the effect of correlation under increasing problem dimension. The research results can be found in Appendix~\ref{app:corr}.

We compare ECC-AHT against two baselines.
\textbf{Round Robin (RR)}~\citep{even2006action} cycles through streams uniformly and ignores both correlation and belief information.
\textbf{Random Sparse Projection (RSP)} is a stochastic baseline introduced in this work.
It samples sparse linear combinations at random without exploiting correlation or uncertainty.
Details are provided in Appendix~\ref{app:rsp}.

\textbf{Independent noise as a baseline.}
We first consider a setting with independent noise, $K=100$ and $\bm{\Sigma}=\mathbf{I}$.
Figure~\ref{fig:exp1_1} shows that ECC-AHT already outperforms both baselines.
Even without correlation, adaptive belief tracking allows the algorithm to focus its budget on informative streams.
In contrast, RR scales linearly with $K$ and requires several hundred samples.

\textbf{Moderate and strong correlation.}
The performance gap widens sharply once correlation is introduced.
Figures~\ref{fig:exp1_2} and~\ref{fig:exp1_3} report results under moderate ($\rho=0.5$) and strong ($\rho=0.8$) correlation.
In the high-dimensional setting with $K=1000$, correlation-agnostic methods fail to converge within a reasonable budget.
ECC-AHT, however, leverages correlation to suppress shared noise and isolate true signals.
It identifies anomalies up to an order of magnitude faster.
These results provide the first empirical confirmation that correlation, when used correctly, acts as a resource rather than a liability.

\subsection{Where Does the Gain Come From? Ablation Analysis}
\label{sec:5.2}
We next examine which components of ECC-AHT are essential for its performance.
Figure~\ref{fig:exp2} compares the full algorithm against several ablated variants.
The \textit{No-QP} variant selects active streams but uses a simple difference vector instead of solving the design problem. Details are provided in Appendix~\ref{app:simplediff}.
The \textit{No-Active} variant selects actions at random (This is actually the RSP we mentioned in Section~\ref{sec:5.1}).
The \textit{No-Correlation} variant ignores correlation to verify whether the advantages of ECC-AHT truly stem from its correlation-aware design. Details are provided in Appendix~\ref{app:diagonal}.
We also include a \textit{Cost-Free} benchmark that removes budget constraints. Details are provided in Appendix~\ref{app:costfree}.

The results reveal a clear hierarchy.
The Cost-Free benchmark performs best, followed closely by ECC-AHT.
In contrast, the No-QP and random variants lag far behind.
This gap highlights a key point.
Identifying which streams are uncertain is not enough.
The algorithm must also choose \emph{how} to measure them in order to cancel correlated noise effectively.

\begin{figure}[!ht]
    \centering
    \includegraphics[width=0.48\textwidth]{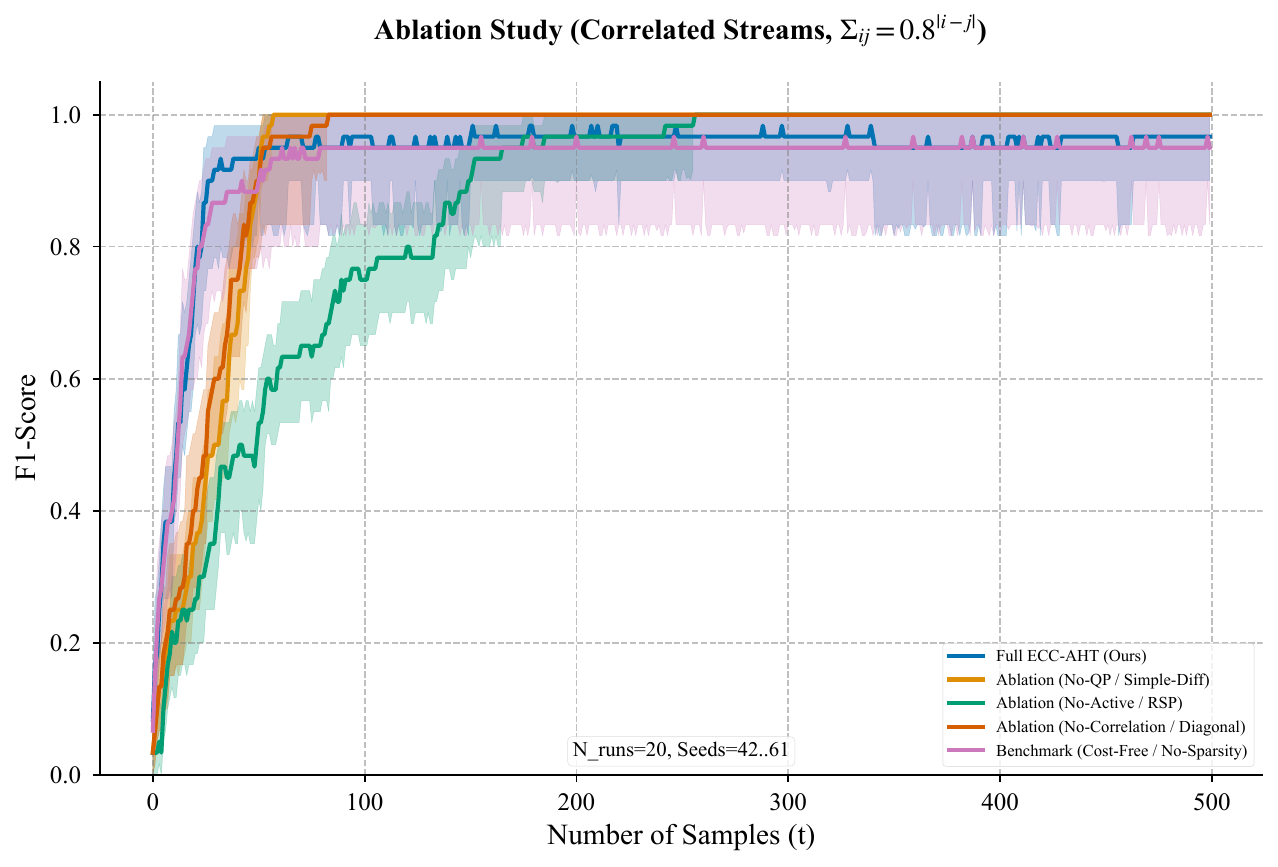}
    \caption{\textbf{Ablation analysis.}
    $K=100, n=3, \rho=0.8, B=4.0$.
    The performance gap between ECC-AHT and No-QP confirms the importance of optimization-based experimental design.}
    \label{fig:exp2}
\end{figure}

\subsection{Robustness Across Problem Parameters}
\label{sec:5.3}
We further test whether the observed gains depend on fine-tuned parameters.
We vary signal strength, correlation level, anomaly count, and budget.
As reported in Appendix~\ref{app:robustness}, ECC-AHT maintains a consistent advantage across all settings.
Performance improves smoothly with increasing budget, which confirms that flexible measurement design translates directly into practical gains rather than brittle behavior.

\subsection{Comparison with State-of-the-Art Methods}

We now compare ECC-AHT with strong existing methods, including Hierarchical Dynamic Search (HDS)~\citep{gafni2023anomaly}, CombGapE, and TTTS.

In addition to standard state-of-the-art baselines, 
we consider a set of \emph{hybrid baselines} that combine individual components of ECC-AHT 
with established algorithms such as CombGapE, TTTS, and HDS.
These hybrids are designed as controlled comparisons to isolate the contribution 
of specific design choices, including hypothesis tracking, randomized selection, 
and expressive experimental design.

\textbf{ECC-AHT-Restricted} uses a binary tree structure based on the HDS algorithm to restrict the action expressiveness.
Details are provided in Appendix~\ref{app:restricted}

\textbf{BaseArm-CombGapE} is a hybrid baseline that combines ECC-AHT with the base-arm action restriction
inherited from CombGapE~\citep{nakamura2023combinatorial}. 
Details are provided in Appendix~\ref{app:basearm-combgape}.

\textbf{TTTS-Challenger} is a controlled hybrid inspired by Top-Two Thompson Sampling~\citep{russo2016simple}.
Details are provided in Appendix~\ref{app:ttts-challenger}.

\begin{remark}
    The experiment was divided into two parts because the HDS algorithm requires that $\Sigma=I$.
\end{remark}

Figure~\ref{fig:phase2} summarizes the results.
ECC-AHT consistently outperforms HDS, whose fixed aggregation structure prevents differential measurements.
At the same time, ECC-AHT matches the performance of randomized TTTS while remaining fully deterministic.
These results show that continuous, correlation-aware experimental design can be as effective as sophisticated randomization strategies.

\begin{figure}[!ht]
    \centering
    \begin{subfigure}[b]{0.48\textwidth}
        \centering
        \includegraphics[width=\textwidth]{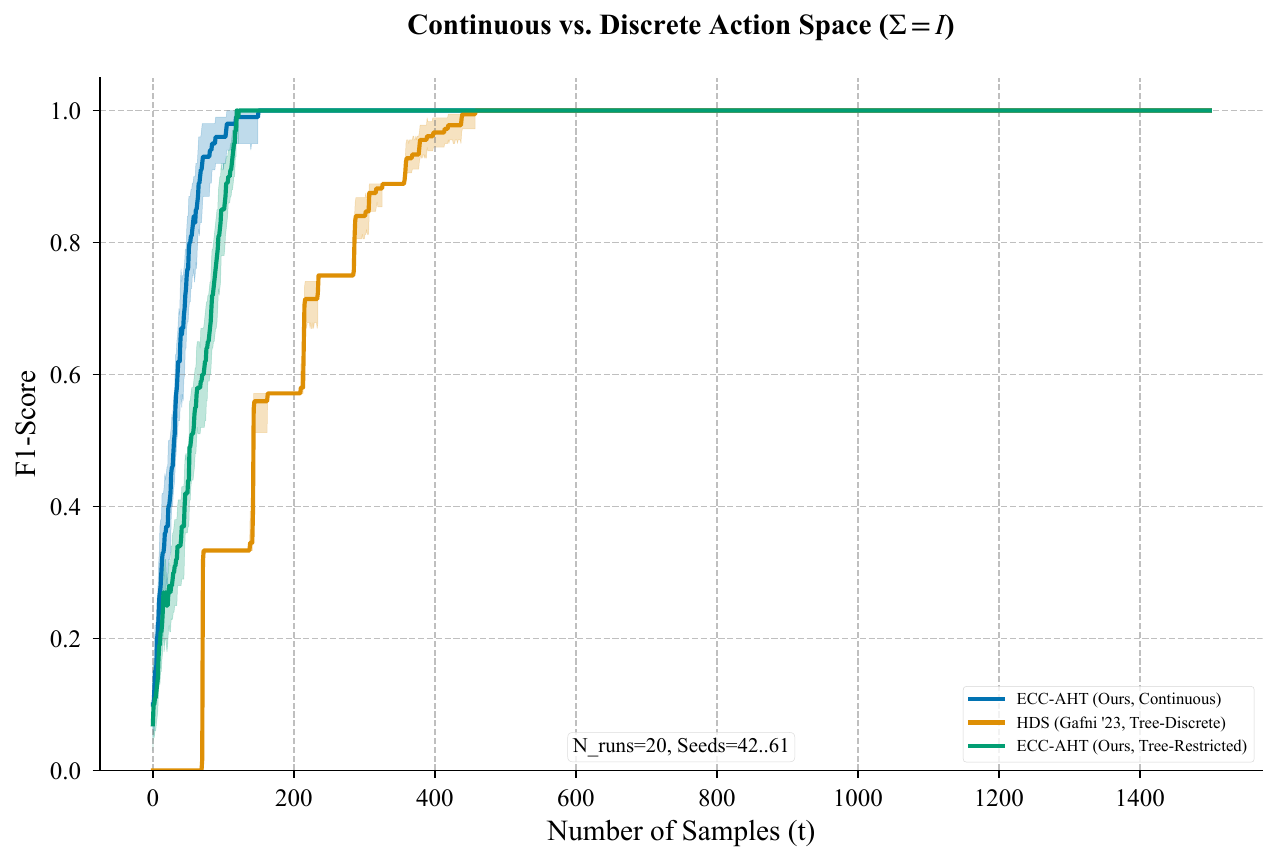}
        \caption{vs. HDS ($K=128$)}
        \label{fig:exp4}
    \end{subfigure}
    \hfill
    \begin{subfigure}[b]{0.48\textwidth}
        \centering
        \includegraphics[width=\textwidth]{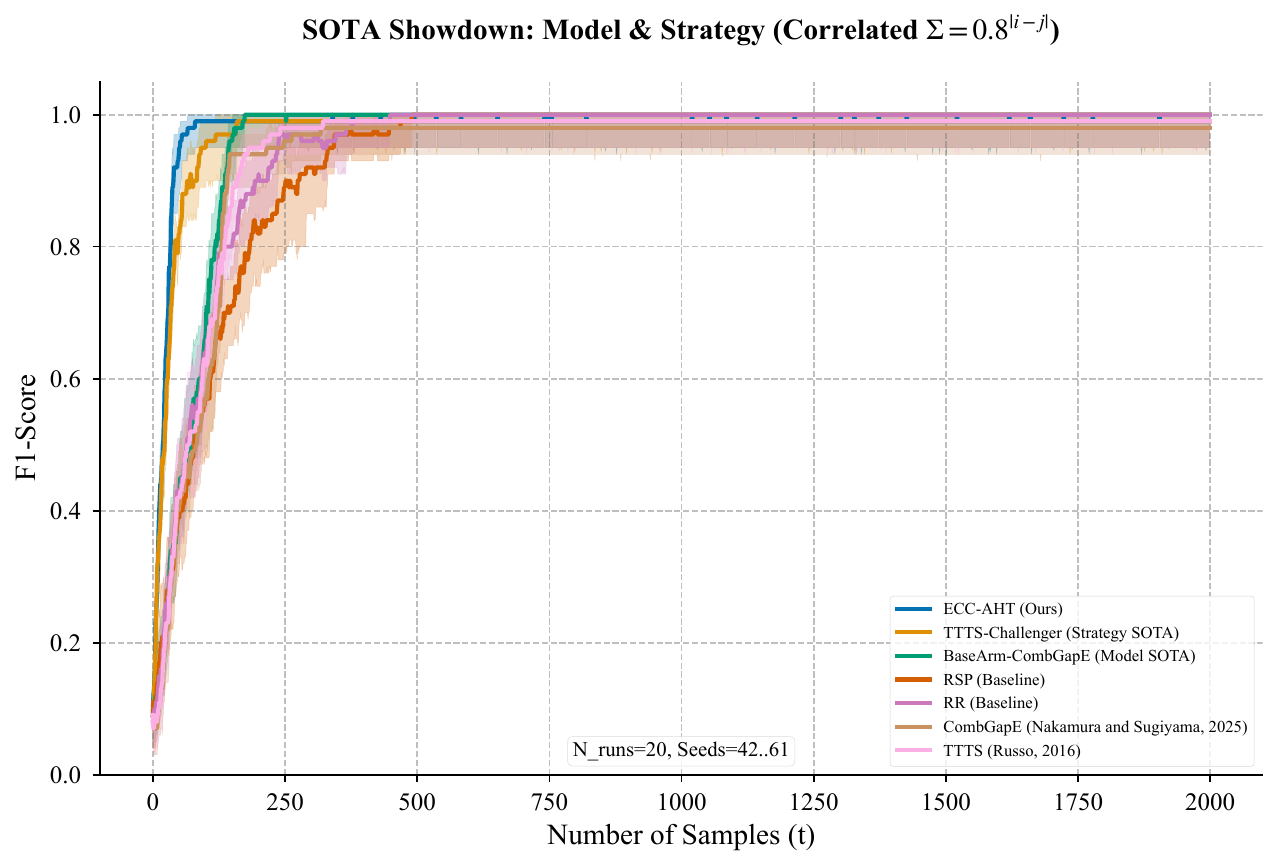}
        \caption{vs. SOTA bandits ($K=128$)}
        \label{fig:exp5}
    \end{subfigure}
    \caption{\textbf{Comparison with state-of-the-art.}
    ECC-AHT outperforms HDS, CombGapE and matches randomized TTTS while remaining deterministic.}
    \label{fig:phase2}
\end{figure}

\subsection{Real-World Evaluation: The WaDi Dataset}

Finally, we evaluate ECC-AHT on the WaDi industrial control dataset~\citep{ahmed2017wadi}, which contains 123 sensors with complex physical correlations.
After preprocessing, the feature set contains 66 streams.
Details are provided in Appendix~\ref{app:reproducibility}.

\textbf{The value of modeling correlation.}
Figure~\ref{fig:wadi_a} compares ECC-AHT with a variant that ignores correlation by using a diagonal covariance.
The full model achieves substantially lower detection delay.
This result confirms that real-world sensor networks contain exploitable correlation structure.

\begin{remark}
    Since \citet{gafni2023anomaly} emphasized that their HDS algorithm is only applicable when $K$ is a power of 2, this algorithm cannot be used in this experiment.
\end{remark}

\textbf{A pipeline-level comparison.}
We also compare two complete pipelines.
The first applies ECC-AHT to 1-minute windowed data.
The second applies the original CombGapE~\citep{nakamura2023combinatorial} to raw 1-second data.
Despite operating on coarser temporal resolution, ECC-AHT detects anomalies faster in wall-clock time.
Figure~\ref{fig:wadi_b} shows that structured noise cancellation outweighs the loss of temporal resolution.
This result highlights the practical importance of correlation-aware modeling in noisy industrial systems.

\begin{figure}[!ht]
    \centering
    \begin{subfigure}[b]{0.48\textwidth}
        \centering
        \includegraphics[width=\textwidth]{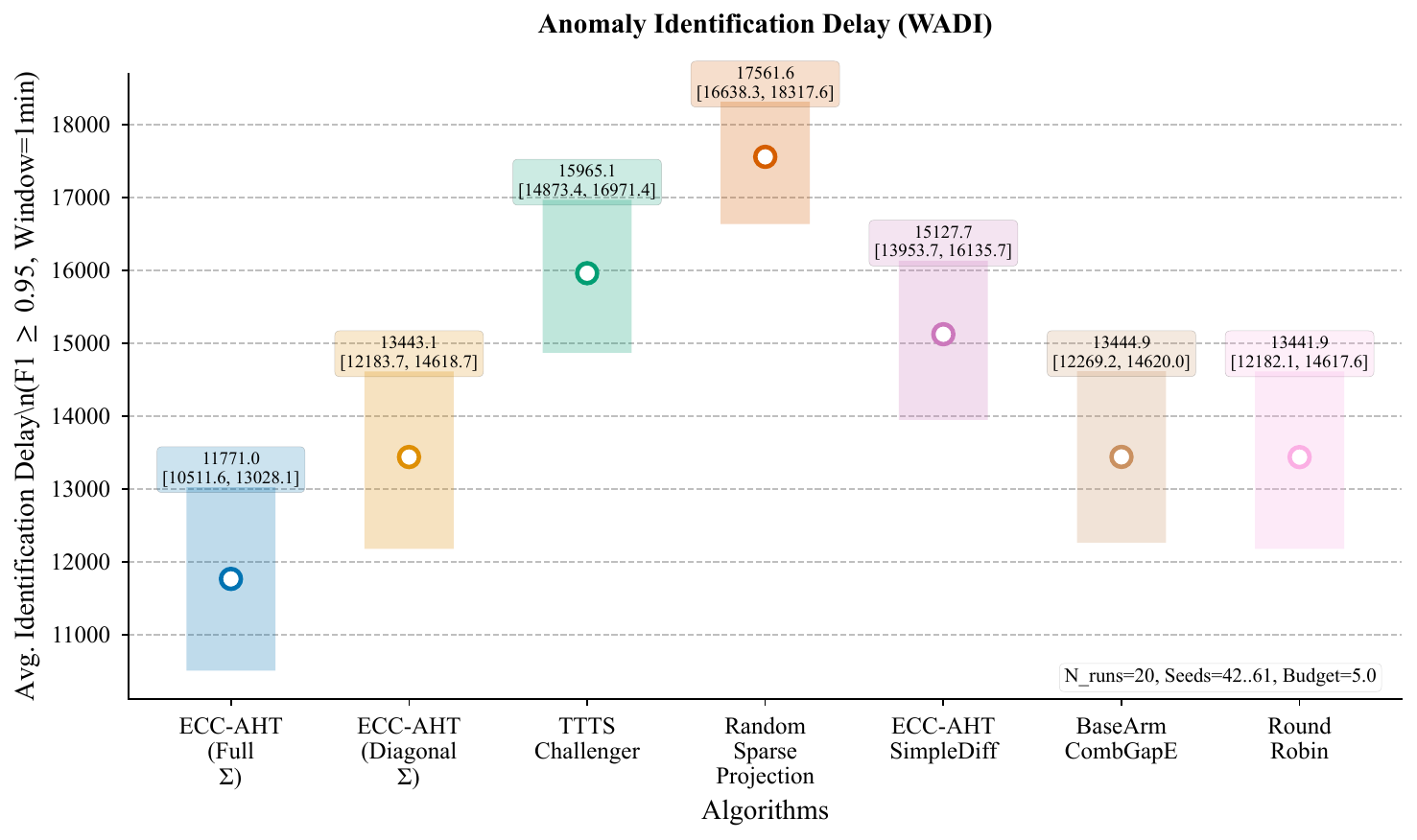}
        \caption{Effect of correlation modeling}
        \label{fig:wadi_a}
    \end{subfigure}
    \begin{subfigure}[b]{0.48\textwidth}
        \centering
        \includegraphics[width=\textwidth]{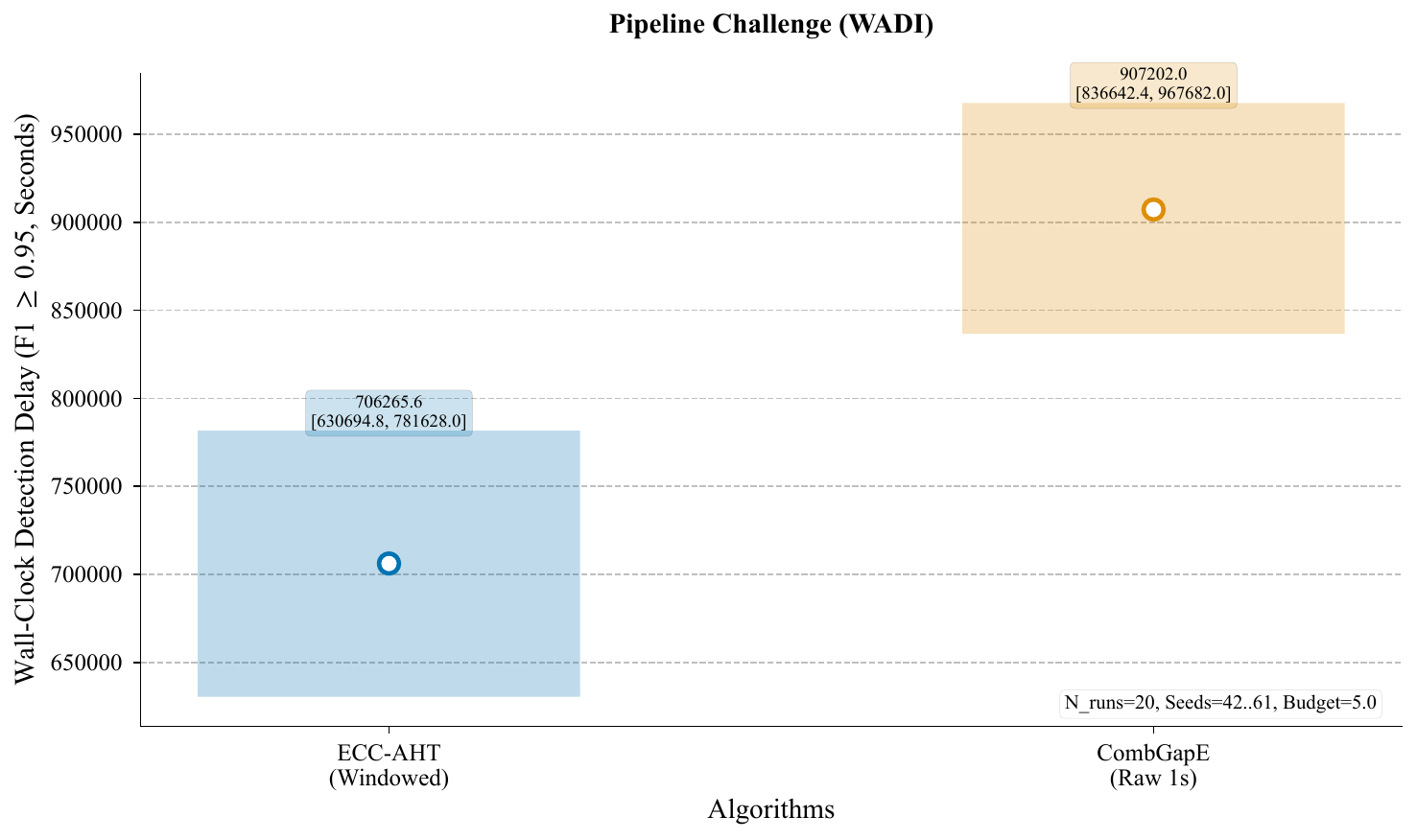}
        \caption{Pipeline comparison}
        \label{fig:wadi_b}
    \end{subfigure}
    \caption{\textbf{Real-world results on WaDi.}
    Exploiting correlation reduces detection delay and improves end-to-end performance.}
    \label{fig:wadi}
\end{figure}

\section{Conclusion}
\label{sec:conclusion}

In this paper, we revisited the problem of combinatorial anomaly detection from an information-driven perspective. Rather than treating sensor correlation as a nuisance, we showed that it can be actively exploited as a source of signal. Based on this view, we proposed \textbf{ECC-AHT}, an algorithm that designs noise-canceling experiments in a continuous action space. At each step, the algorithm solves a quadratic program to select projections that best separate the most ambiguous hypotheses.

From a theoretical standpoint, we provided guarantees showing that ECC-AHT achieves asymptotically optimal sample complexity under standard modeling assumptions. The resulting rates match the information-theoretic limits characterized by Chernoff information. These results connect active anomaly detection with ideas from both multi-armed bandits and optimal experimental design. From a methodological perspective, our approach demonstrates that relaxing the combinatorial search space to a continuous domain leads to tractable algorithms that can be implemented using efficient convex optimization tools.

We further evaluated ECC-AHT on the real-world WaDi dataset using a controlled comparison between a model-based pipeline and a model-free baseline. Despite operating on windowed data and therefore lower temporal resolution, the model-based approach consistently identified anomalies earlier. This empirical result highlights the practical value of explicitly modeling and exploiting correlation structure in large-scale cyber-physical systems.

Several directions remain open. An important next step is to extend the framework beyond Gaussian assumptions and linear correlations. Possible avenues include kernelized models or deep neural networks (DNNs) that preserve the active design principle. Overall, ECC-AHT provides a principled and scalable approach to high-dimensional anomaly detection, with both theoretical support and empirical validation.

\section*{Impact Statement}

This work studies adaptive hypothesis testing and measurement design under correlated observations, with a primary focus on anomaly detection in large-scale monitoring systems.
Potential positive impacts include improved reliability, safety, and efficiency in cyber-physical systems, such as industrial control, infrastructure monitoring, and fault diagnosis, where early and accurate detection of anomalies is critical.

As with many anomaly detection and monitoring techniques, the proposed methods could also be applied in broader sensing or surveillance settings.
However, this paper focuses on algorithmic and theoretical aspects of sequential decision making and does not introduce new capabilities for data collection or individual-level inference beyond standard sensing models.
Any deployment of such methods should follow existing legal and ethical guidelines governing data use and system monitoring.

Overall, we believe this work advances the foundations of machine learning for structured sequential inference, with societal implications that are consistent with those of prior research in this area.


\bibliography{example_paper}

@article{chen2014combinatorial,
  title={Combinatorial pure exploration of multi-armed bandits},
  author={Chen, Shouyuan and Lin, Tian and King, Irwin and Lyu, Michael R and Chen, Wei},
  journal={Advances in neural information processing systems},
  volume={27},
  year={2014}
}

@article{nakamura2023combinatorial,
    author = {Nakamura, Shintaro and Sugiyama, Masashi},
    title = {A Fast Algorithm for the Real-Valued Combinatorial Pure Exploration of the Multi-Armed Bandit},
    journal = {Neural Computation},
    volume = {37},
    number = {2},
    pages = {294-310},
    year = {2025},
    month = {01},
    issn = {0899-7667},
    doi = {10.1162/neco_a_01728},
    url = {https://doi.org/10.1162/neco_a_01728},
    eprint = {https://direct.mit.edu/neco/article-pdf/37/2/294/2482166/neco_a_01728.pdf},
}

@inproceedings{russo2016simple,
  title={Simple bayesian algorithms for best arm identification},
  author={Russo, Daniel},
  booktitle={Conference on learning theory},
  pages={1417--1418},
  year={2016},
  organization={PMLR}
}

@article{gafni2023anomaly,
  title={Anomaly search over discrete composite hypotheses in hierarchical statistical models},
  author={Gafni, Tomer and Wolff, Benjamin and Revach, Guy and Shlezinger, Nir and Cohen, Kobi},
  journal={IEEE Transactions on Signal Processing},
  volume={71},
  pages={202--217},
  year={2023},
  publisher={IEEE}
}

@book{wald1947sequential,
  title={Sequential Analysis},
  author={Wald, Abraham},
  year={1947},
  publisher={John Wiley \& Sons}
}

@article{chen2021understanding,
  title={Understanding bandits with graph feedback},
  author={Chen, Houshuang and Li, Shuai and Zhang, Chihao and others},
  journal={Advances in Neural Information Processing Systems},
  volume={34},
  pages={24659--24669},
  year={2021}
}

@book{dayan2005theoretical,
  title={Theoretical neuroscience: computational and mathematical modeling of neural systems},
  author={Dayan, Peter and Abbott, Laurence F},
  year={2005},
  publisher={MIT press}
}

@article{marr1980theory,
  title={Theory of edge detection},
  author={Marr, David and Hildreth, Ellen},
  journal={Proceedings of the Royal Society of London. Series B. Biological Sciences},
  volume={207},
  number={1167},
  pages={187--217},
  year={1980},
  publisher={The Royal Society London}
}

@article{kohonen1982self,
  title={Self-organized formation of topologically correct feature maps},
  author={Kohonen, Teuvo},
  journal={Biological cybernetics},
  volume={43},
  number={1},
  pages={59--69},
  year={1982},
  publisher={Springer}
}

@article{hartline1956inhibition,
  title={Inhibition in the eye of Limulus},
  author={Hartline, H K and Wagner, Henry G and Ratliff, Floyd},
  journal={The Journal of general physiology},
  volume={39},
  number={5},
  pages={651--673},
  year={1956},
  publisher={Rockefeller University Press}
}

@article{chernoff1959sequential,
  title={Sequential design of experiments},
  author={Chernoff, Herman},
  journal={Annals of Mathematical Statistics},
  volume={30},
  number={3},
  pages={755--770},
  year={1959}
}

@inproceedings{garivier2016optimal,
  title={Optimal best arm identification with fixed confidence},
  author={Garivier, Aur{\'e}lien and Kaufmann, Emilie},
  booktitle={Conference on Learning Theory},
  pages={998--1027},
  year={2016},
  organization={PMLR}
}

@inproceedings{ahmed2017wadi,
  title={WADI: a water distribution testbed for research in the design of secure cyber physical systems},
  author={Ahmed, Chuadhry Mujeeb and Palleti, Venkata Reddy and Mathur, Aditya P},
  booktitle={Proceedings of the 3rd international workshop on cyber-physical systems for smart water networks},
  pages={25--28},
  year={2017}
}

@inproceedings{shang2020fixed,
  title={Fixed-confidence guarantees for bayesian best-arm identification},
  author={Shang, Xuedong and Heide, Rianne and Menard, Pierre and Kaufmann, Emilie and Valko, Michal},
  booktitle={International Conference on Artificial Intelligence and Statistics},
  pages={1823--1832},
  year={2020},
  organization={PMLR}
}

@inproceedings{fiez2019sequential,
  title={Sequential Experimental Design for Transductive Linear Bandits},
  author={Fiez, Tanner and Jain, Lalit and Jamieson, Kevin and Ratliff, Lillian},
  booktitle={Advances in Neural Information Processing Systems},
  volume={32},
  year={2019}
}

@inproceedings{huang2018combinatorial,
  title={Combinatorial pure exploration with continuous and separable reward functions and its applications.},
  author={Huang, Weiran and Ok, Jungseul and Li, Liang and Chen, Wei},
  booktitle={International Joint Conference on Artificial Intelligence},
  pages={2291--2297},
  year={2018}
}

@inproceedings{jourdan2021efficient,
  title={Efficient pure exploration for combinatorial bandits with semi-bandit feedback},
  author={Jourdan, Marc and Mutn{\`y}, Mojm{\'\i}r and Kirschner, Johannes and Krause, Andreas},
  booktitle={Algorithmic Learning Theory},
  pages={805--849},
  year={2021},
  organization={PMLR}
}

@article{pallakonda2025ai,
  title={AI-Driven Attack Detection and Cryptographic Privacy Protection for Cyber-Resilient Industrial Control Systems},
  author={Pallakonda, Archana and Kaliyannan, Kabilan and Sumathi, Rahul Loganathan and Raj, Rayappa David Amar and Yanamala, Rama Muni Reddy and Napoli, Christian and Randieri, Cristian},
  journal={IoT},
  volume={6},
  number={3},
  pages={56},
  year={2025},
  publisher={MDPI}
}

@article{odeyomi2025intrusion,
  title={Intrusion Detection and Resiliency in Cyber-Physical Systems and Networks},
  author={Odeyomi, Olusola T and Olowu, Temitayo O},
  journal={Future Internet},
  volume={17},
  number={9},
  pages={424},
  year={2025},
  publisher={MDPI}
}

@article{shanthini2025graph,
author = {A, Shanthini and S, John and Amagoth, Balaram and Talakoti, Mamatha and Sarkar, Debdatta and V, Arun},
year = {2025},
month = {10},
pages = {2591-2602},
title = {Graph Neural Networks for Modelling Structural and Functional Dependencies in Smart Cyber Physical Systems},
journal = {Journal of Machine and Computing},
doi = {10.53759/7669/jmc202505199}
}

@article{han2025timeseries,
  title={Time-Series-Based Anomaly Detection in Industrial Control Systems Using Generative Adversarial Networks},
  author={Han, Chungku and Gim, Gwangyong},
  journal={Processes},
  volume={13},
  number={9},
  pages={2885},
  year={2025},
  publisher={MDPI}
}

@article{zhang2025high,
  title={Towards High-Resolution Industrial Image Anomaly Detection},
  author={Zhang, Ximiao and Xu, Min and Zhou, Xiuzhuang},
  journal={arXiv preprint arXiv:2508.12931},
  year={2025}
}

@article{feng2025false,
  title={False Data-Injection Attack Detection in Cyber-Physical Systems: A Wasserstein Distributionally Robust Reachability Optimization Approach},
  author={Feng, Yulin and Lan, Dapeng and Shang, Chao},
  journal={arXiv preprint arXiv:2508.12402},
  year={2025}
}

@book{lee2016introduction,
  title={Introduction to embedded systems: A cyber-physical systems approach},
  author={Lee, Edward Ashford and Seshia, Sanjit Arunkumar},
  year={2016},
  publisher={MIT press}
}

@inproceedings{srinivas2010gaussian,
author = {Srinivas, Niranjan and Krause, Andreas and Kakade, Sham and Seeger, Matthias},
title = {Gaussian process optimization in the bandit setting: no regret and experimental design},
year = {2010},
isbn = {9781605589077},
publisher = {Omnipress},
address = {Madison, WI, USA},
abstract = {Many applications require optimizing an unknown, noisy function that is expensive to evaluate. We formalize this task as a multi-armed bandit problem, where the payoff function is either sampled from a Gaussian process (GP) or has low RKHS norm. We resolve the important open problem of deriving regret bounds for this setting, which imply novel convergence rates for GP optimization. We analyze GP-UCB, an intuitive upper-confidence based algorithm, and bound its cumulative regret in terms of maximal information gain, establishing a novel connection between GP optimization and experimental design. Moreover, by bounding the latter in terms of operator spectra, we obtain explicit sublinear regret bounds for many commonly used covariance functions. In some important cases, our bounds have surprisingly weak dependence on the dimensionality. In our experiments on real sensor data, GP-UCB compares favorably with other heuristical GP optimization approaches.},
booktitle = {Proceedings of the 27th International Conference on International Conference on Machine Learning},
pages = {1015–1022},
numpages = {8},
location = {Haifa, Israel},
series = {ICML'10}
}

@article{frazier2018bayesian,
  title={A tutorial on Bayesian optimization},
  author={Frazier, Peter I},
  journal={arXiv preprint arXiv:1807.02811},
  year={2018}
}

@article{abbasi2011improved,
  title={Improved algorithms for linear stochastic bandits},
  author={Abbasi-Yadkori, Yasin and P{\'a}l, D{\'a}vid and Szepesv{\'a}ri, Csaba},
  journal={Advances in neural information processing systems},
  volume={24},
  year={2011}
}

@inproceedings{agrawal2013thompson,
  title={Thompson sampling for contextual bandits with linear payoffs},
  author={Agrawal, Shipra and Goyal, Navin},
  booktitle={International conference on machine learning},
  pages={127--135},
  year={2013},
  organization={PMLR}
}

@book{lattimore2020bandit,
  title={Bandit algorithms},
  author={Lattimore, Tor and Szepesv{\'a}ri, Csaba},
  year={2020},
  publisher={Cambridge University Press}
}

@article{akoglu2015graph,
  title={Graph based anomaly detection and description: a survey},
  author={Akoglu, Leman and Tong, Hanghang and Koutra, Danai},
  journal={Data mining and knowledge discovery},
  volume={29},
  number={3},
  pages={626--688},
  year={2015},
  publisher={Springer}
}

@article{pasqualetti2013attack,
  title={Attack detection and identification in cyber-physical systems},
  author={Pasqualetti, Fabio and D{\"o}rfler, Florian and Bullo, Francesco},
  journal={IEEE transactions on automatic control},
  volume={58},
  number={11},
  pages={2715--2729},
  year={2013},
  publisher={IEEE}
}

@inproceedings{sen2017contextual,
  title={Contextual bandits with latent confounders: An nmf approach},
  author={Sen, Rajat and Shanmugam, Karthikeyan and Kocaoglu, Murat and Dimakis, Alex and Shakkottai, Sanjay},
  booktitle={Artificial Intelligence and Statistics},
  pages={518--527},
  year={2017},
  organization={PMLR}
}

@article{gupta2021multi,
  title={Multi-armed bandits with correlated arms},
  author={Gupta, Samarth and Chaudhari, Shreyas and Joshi, Gauri and Ya{\u{g}}an, Osman},
  journal={IEEE Transactions on Information Theory},
  volume={67},
  number={10},
  pages={6711--6732},
  year={2021},
  publisher={IEEE}
}

@article{jedra2020optimal,
  title={Optimal best-arm identification in linear bandits},
  author={Jedra, Yassir and Proutiere, Alexandre},
  journal={Advances in Neural Information Processing Systems},
  volume={33},
  pages={10007--10017},
  year={2020}
}

@inproceedings{honda2010asymptotically,
  title={An Asymptotically Optimal Bandit Algorithm for Bounded Support Models.},
  author={Honda, Junya and Takemura, Akimichi},
  booktitle={Conference on Learning Theory},
  pages={67--79},
  year={2010}
}

@article{even2006action,
  title={Action elimination and stopping conditions for the multi-armed bandit and reinforcement learning problems.},
  author={Even-Dar, Eyal and Mannor, Shie and Mansour, Yishay and Mahadevan, Sridhar},
  journal={Journal of machine learning research},
  volume={7},
  number={6},
  year={2006}
}

@book{van2000asymptotic,
  title={Asymptotic statistics},
  author={Van der Vaart, Aad W},
  volume={3},
  year={2000},
  publisher={Cambridge university press}
}
\bibliographystyle{icml2026}

\newpage
\appendix
\onecolumn
\section{Inside the Black Box: An Interpretative Analysis}
\label{app:interpretation}

In this section, we look deeper into the internal dynamics of ECC-AHT. We want to show that the algorithm is not just a black box. It acts like an intelligent agent that learns to use the environment to its advantage. Figure~\ref{fig:inside_ecc_aht} tells the story of how the algorithm evolves from simple exploration to complex strategy.

Figure~\ref{fig:inside_ecc_aht:f} is the most important part of this story. It reveals how our method turns correlation from a problem into a resource. Here, we see the algorithm's decision at time $t=5$. The algorithm faces a specific challenge. It needs to distinguish the current ``Champion'' (Stream 11) from the ``Challenger'' (Stream 6). To do this, it assigns a large positive weight to the Champion and a large negative weight to the Challenger. This creates a strong contrast between the two main hypotheses.

However, the real intelligence of the algorithm appears when we look at the local structure. Look at the streams right next to the Champion, specifically Stream 10 and Stream 12. These streams are not the Challenger. They are not anomalies. Yet, the algorithm deliberately gives them negative weights. Why does it do this?

This is a classic example of \emph{noise cancellation}. The yellow background in Figure~\ref{fig:inside_ecc_aht:f} shows that the Champion (Stream 11) shares strong positive correlation with its neighbors (Streams 10 and 12). They fluctuate together because of the shared environmental noise. The algorithm realizes that if it measures the Champion alone, the variance will be high. But if it subtracts the neighbors from the Champion, the shared noise cancels out. The neighbors act as ``reference sensors'' that capture the background noise. The algorithm uses this information to clean up the signal of the Champion.

This creates a beautiful geometric structure known in machine learning and neuroscience as \textbf{``Lateral Inhibition''}~\citep{hartline1956inhibition, dayan2005theoretical} or a \textbf{``Mexican Hat''} shape~\citep{marr1980theory, kohonen1982self}. We see a strong positive peak in the center, surrounded by negative valleys on the sides. The algorithm also does the exact mirror image around the Challenger (Stream 6). The Challenger has a negative weight, so the algorithm gives its neighbors (Streams 5 and 7) positive weights to cancel their noise. Even more excitingly, this phenomenon is not an isolated case at $t=5$. Observing Figure~\ref{fig:inside_ecc_aht:e}, we can see that around almost every dark red ($\mathbf{c_t}$ takes the maximum positive value in this stream, i.e., the ``champion'') or dark blue ($\mathbf{c_t}$ takes the minimum negative value in this stream, i.e., the ``challenger'') area, there are relatively lighter color blocks of the opposite color. This indicates that Mexican Hat shape occurs almost every time.

We did not explicitly program this rule. We only gave the algorithm the correlation matrix and a budget. The algorithm discovered this strategy on its own. It learned that the best way to reduce uncertainty is not just to measure the target, but to measure the context around it. This confirms our key insight: correlation is a powerful resource for active exploration.

\section{Supplementary Experiments}
\label{app:supexp}
To provide a comprehensive view of our experimental analysis, this appendix includes additional results omitted from the main text due to space constraints, as well as exploratory findings from ongoing work.

\subsection{Scalability and Correlation Exploitation Study in Section~\ref{sec:5.1}}
\label{app:corr}
This section displays the experimental results from Section~\ref{sec:5.1}, which were moved to here due to space limitations.

\begin{figure}[!ht]
    \centering
    \begin{subfigure}[b]{0.32\textwidth}
        \centering
        \includegraphics[width=\textwidth]{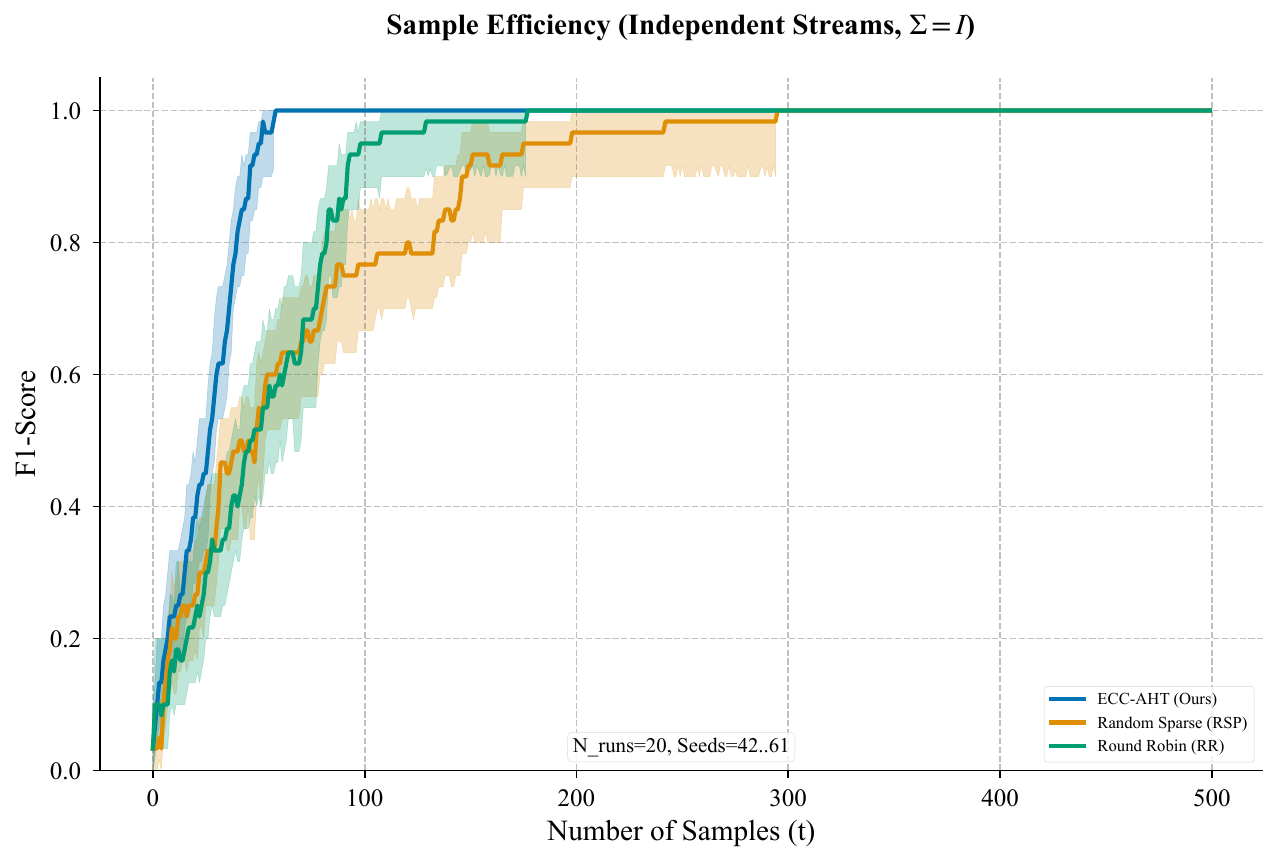}
        \caption{$K=100, \bm{\Sigma}=\mathbf{I}, B=4.0$}
        \label{fig:exp1_1}
    \end{subfigure}
    \hfill
    \begin{subfigure}[b]{0.32\textwidth}
        \centering
        \includegraphics[width=\textwidth]{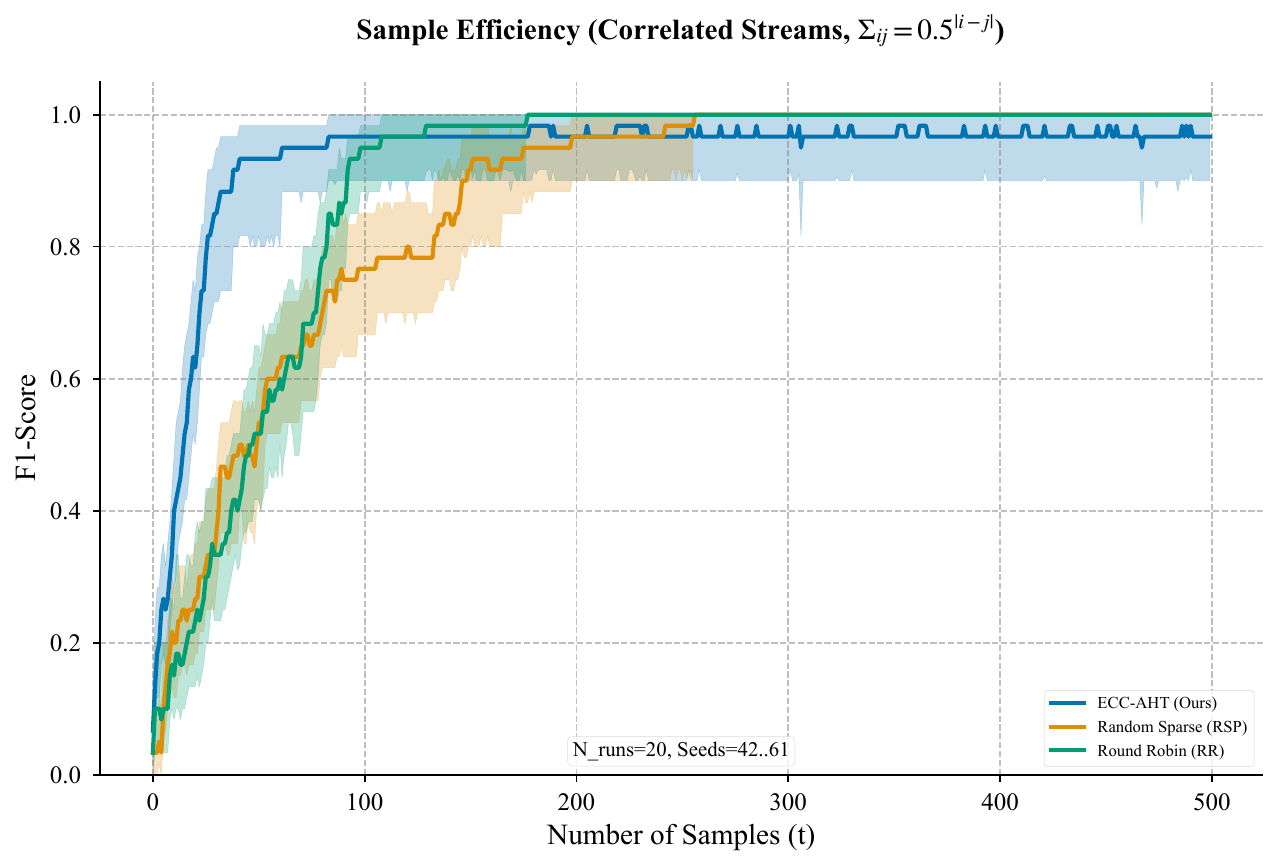}
        \caption{$K=100, \rho=0.5, B=4.0$}
        \label{fig:exp1_2}
    \end{subfigure}
    \hfill
    \begin{subfigure}[b]{0.32\textwidth}
        \centering
        \includegraphics[width=\textwidth]{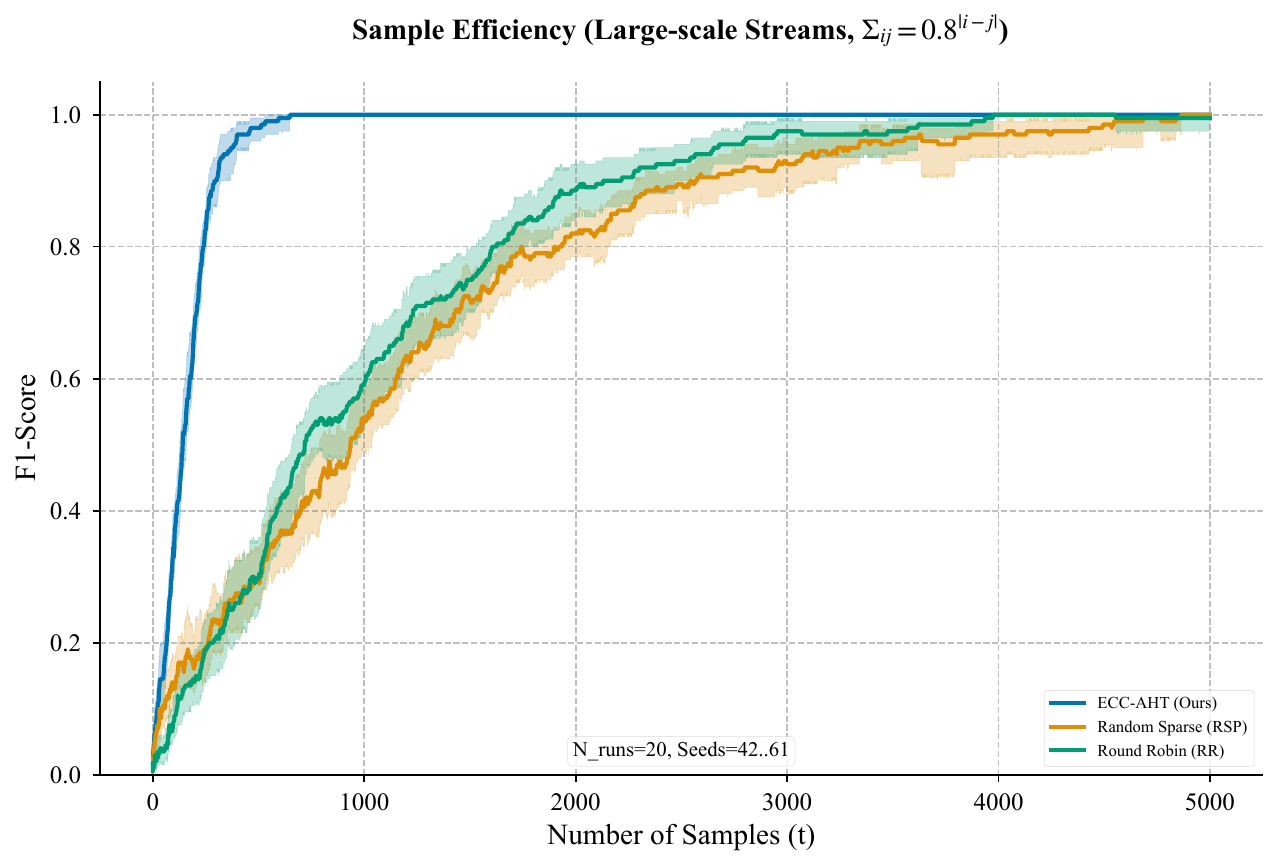}
        \caption{$K=1000, \rho=0.8, B=10.0$}
        \label{fig:exp1_3}
    \end{subfigure}
    \caption{\textbf{Scalability and correlation exploitation.}
    F1-score versus samples.
    ECC-AHT scales to high dimensions and benefits increasingly from stronger correlation.
    Shaded regions show $95\%$ confidence intervals.}
    \label{fig:phase1}
\end{figure}

\subsection{Robustness Analysis in Section~\ref{sec:5.3}}
\label{app:robustness}

In this appendix, we present the detailed results of the sensitivity analyzes discussed in Section~\ref{sec:5.3} of the main text. While the main experiments focused on specific, representative settings, it is crucial to verify that the performance advantage of ECC-AHT is consistent across a broad spectrum of environmental parameters. We systematically vary the signal strength, correlation level, number of anomalies, and the measurement budget to stress-test the algorithm. The baseline parameters are set as $K=100, n=3, \delta=3.0, \rho=0.6, B=5.0$.

\subsubsection{Sensitivity to Signal Strength}
We first investigate how the magnitude of the anomaly signal $\delta$ affects the detection delay. A smaller $\delta$ implies a lower Signal-to-Noise Ratio (SNR), making the problem significantly harder. As illustrated in Figure \ref{fig:robust_mu}, the detection delay for all algorithms naturally decreases as the signal strength increases. However, ECC-AHT consistently outperforms the baselines across the entire range. Notably, in the low-SNR regime (e.g., $\delta < 1.0$), where the random sparse projection (RSP) struggles to distinguish the signal from the noise floor, ECC-AHT remains effective. This resilience is attributed to our covariance-aware optimization, which effectively lowers the noise floor, thereby maintaining a viable effective SNR even when the raw signal is weak.

\begin{figure}[h]
    \centering
    \includegraphics[width=0.48\textwidth]{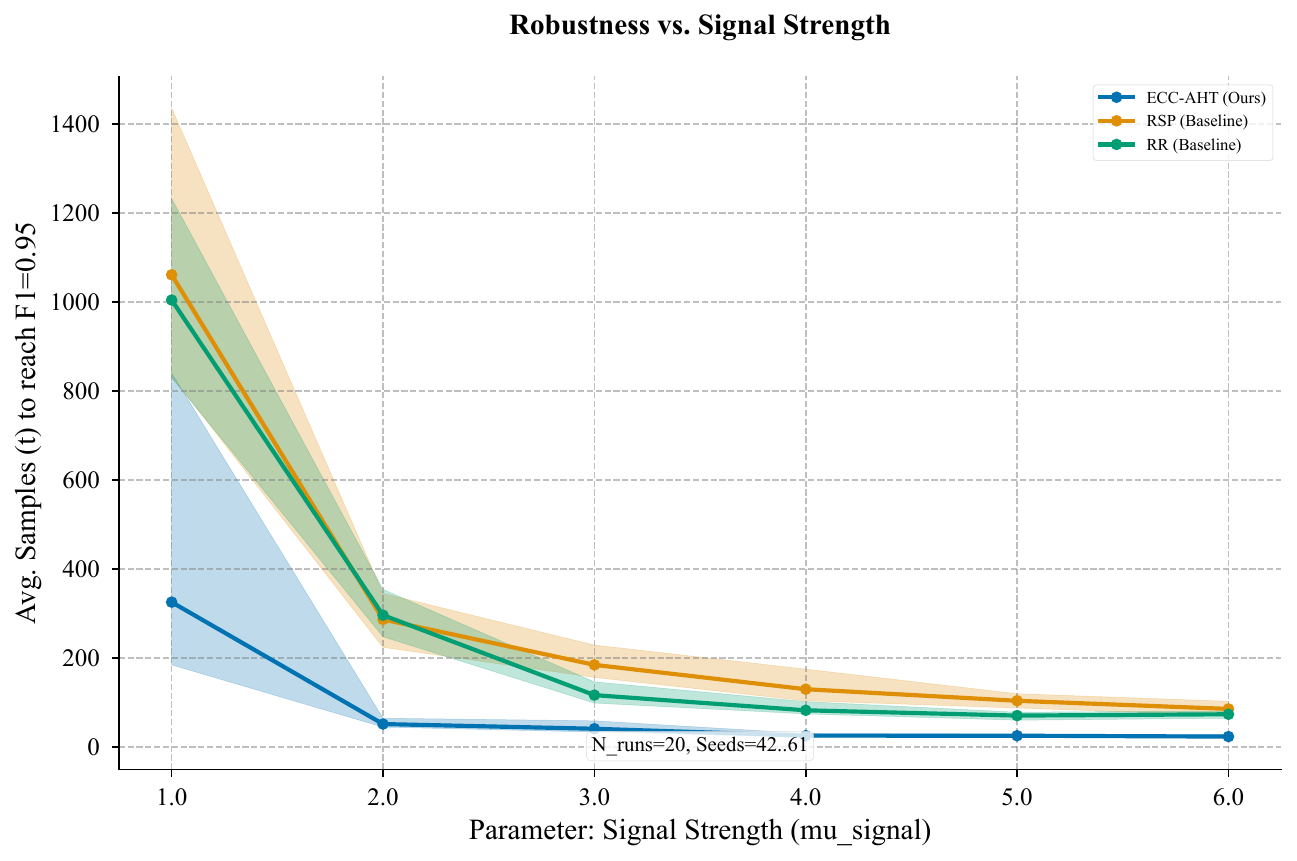}
    \caption{\textbf{Impact of Signal Strength ($\delta$).} ECC-AHT maintains superior performance even in low-SNR regimes, proving that active noise cancellation is most critical when signals are weak.}
    \label{fig:robust_mu}
\end{figure}

\subsubsection{Sensitivity to Correlation Structure}
A central claim of our paper is that ECC-AHT treats correlation as a resource. To verify this, we varied the correlation coefficient $\rho$ of the Toeplitz covariance matrix from 0 (independent) to 0.9 (highly correlated). The results in Figure \ref{fig:robust_rho} are striking. While the performance of standard baselines like RSP typically degrades or remains stagnant as correlation increases, ECC-AHT exhibits the opposite trend: its detection delay \textit{decreases} as $\rho$ increases. This confirms that our QP solver successfully leverages stronger correlations to construct more effective difference measurements, turning what is usually a source of interference into a mechanism for acceleration.

\begin{figure}[h]
    \centering
    \includegraphics[width=0.48\textwidth]{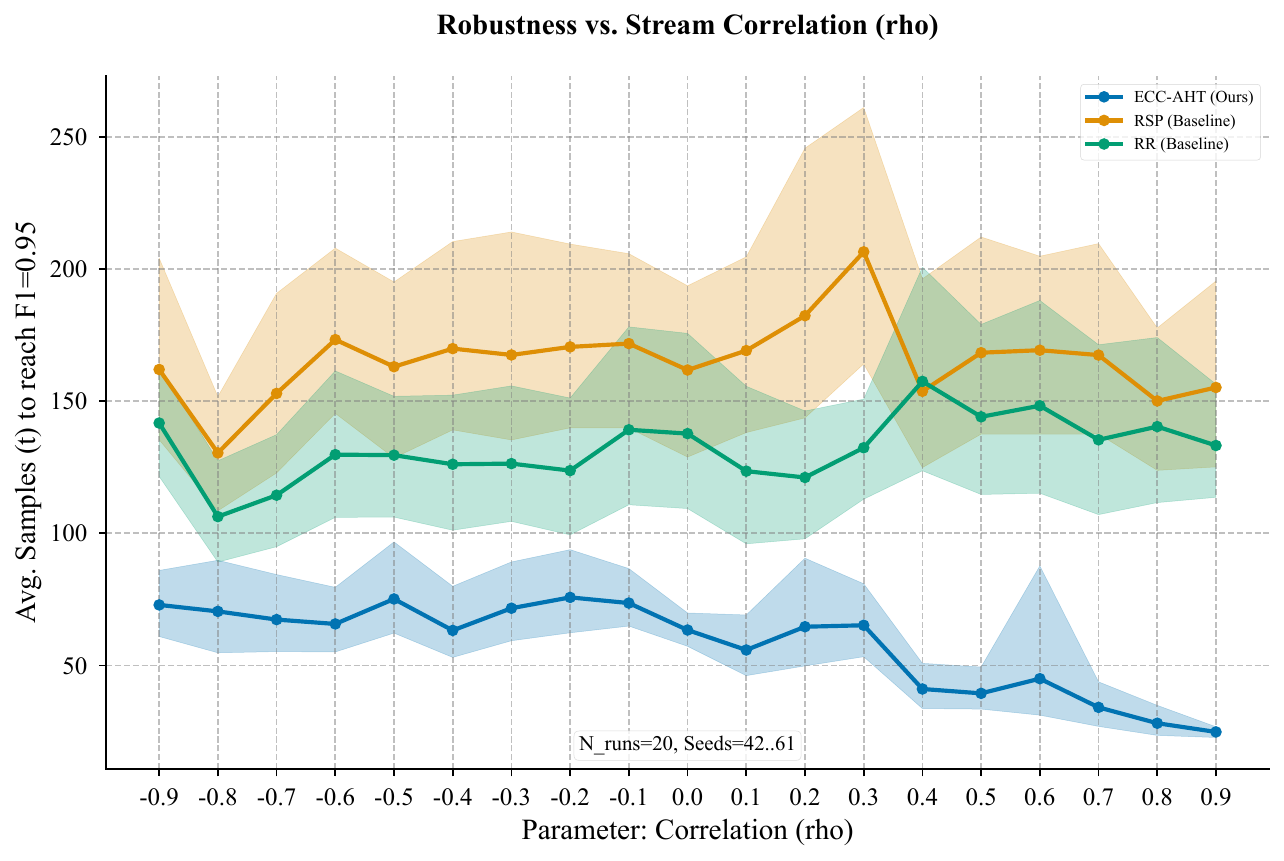}
    \caption{\textbf{Impact of Correlation ($\rho$).} Unlike baselines, ECC-AHT improves as correlation increases, empirically validating the "correlation as a resource" hypothesis.}
    \label{fig:robust_rho}
\end{figure}

\subsubsection{Sensitivity to the Number of Anomalies}
We also examined the scalability of our approach with respect to the number of anomalies $n$ (the sparsity level). Increasing $n$ expands the combinatorial search space and introduces potential masking effects between multiple anomalies. Figure \ref{fig:robust_n} shows the identification delay as a function of $n$. While the delay increases for all methods, ECC-AHT scales much more favorably than the baselines. The random strategies often fail to disentangle complex multi-anomaly scenarios efficiently, whereas the active hypothesis testing framework of ECC-AHT systematically reduces uncertainty, effectively handling the increased combinatorial complexity.

\begin{figure}[h]
    \centering
    \includegraphics[width=0.48\textwidth]{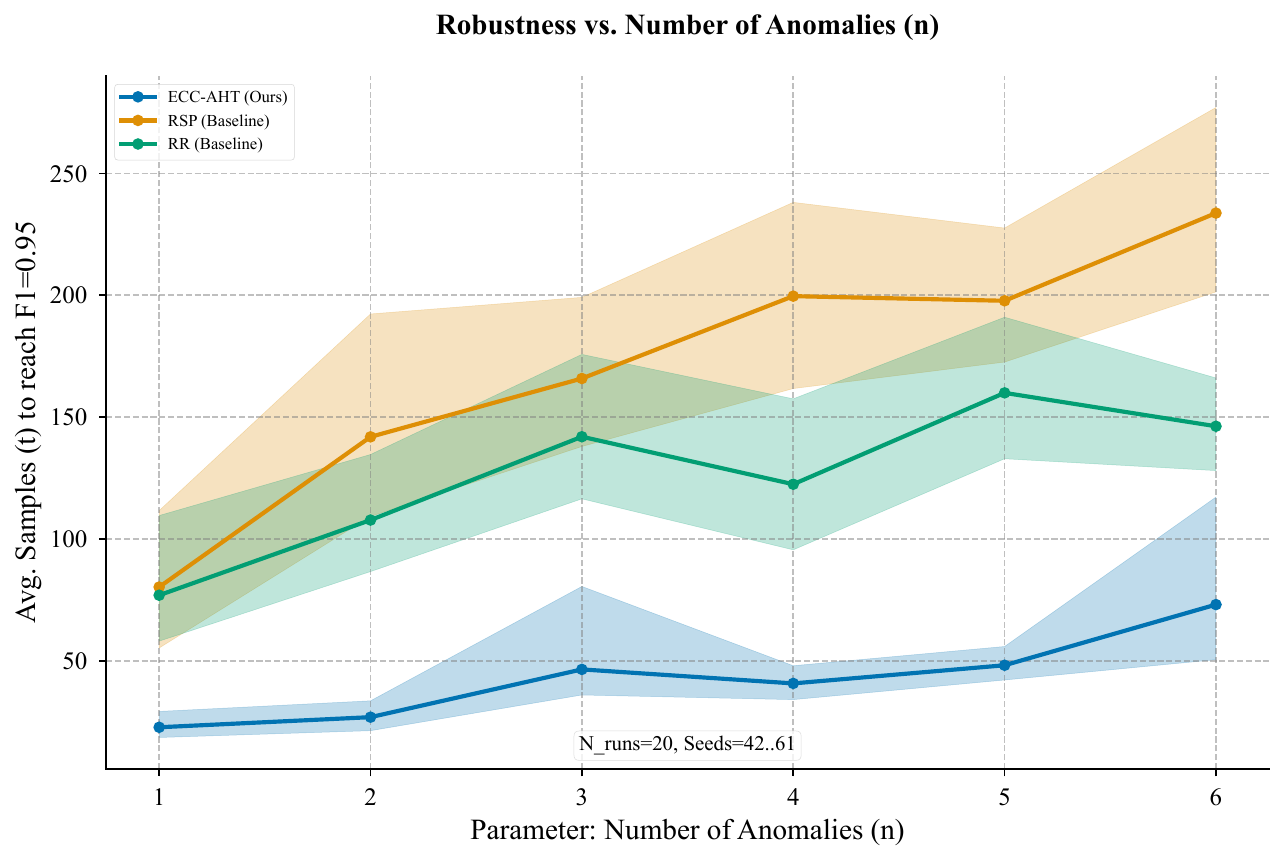}
    \caption{\textbf{Impact of Anomaly Count ($n$).} ECC-AHT exhibits better scalability with respect to the number of concurrent anomalies compared to random baselines.}
    \label{fig:robust_n}
\end{figure}

\subsubsection{Sensitivity to Measurement Budget}
Finally, we analyzed the effect of the $L_1$ measurement budget $B$. The parameter $B$ controls the "energy" or "density" allowed in a single measurement projection. A larger $B$ allows the algorithm to construct measurement vectors with higher signal-to-noise ratios. As shown in Figure \ref{fig:robust_B}, the performance of ECC-AHT improves monotonically with increasing $B$. This validates our continuous relaxation strategy. By allowing the algorithm to optimize the weights of the measurement vector within a larger feasible set, we can extract more information per step. In contrast, methods that are restricted to selecting a fixed number of sensors (discrete selection) cannot benefit from this continuous degree of freedom.

\begin{figure}[h]
    \centering
    \includegraphics[width=0.48\textwidth]{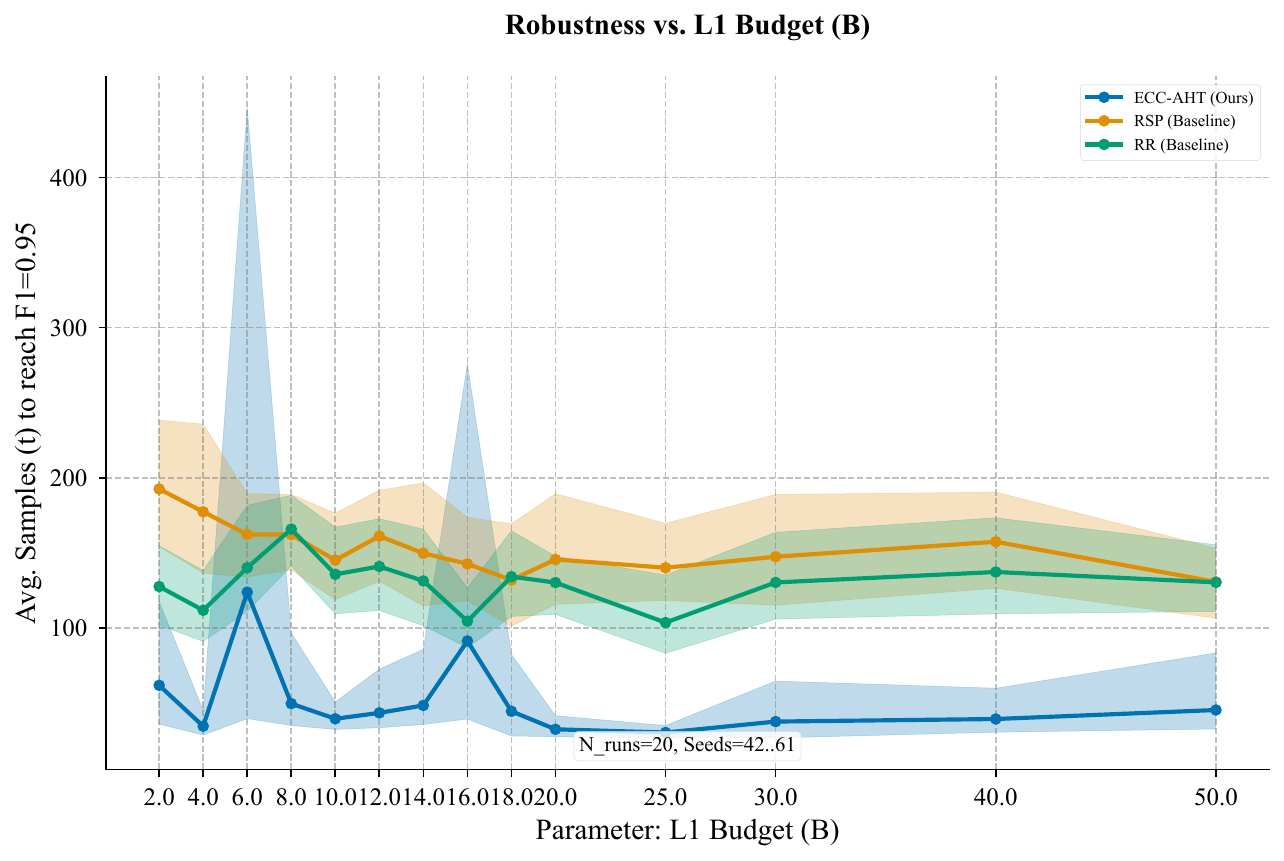}
    \caption{\textbf{Impact of $L_1$ Budget ($B$).} Larger budgets allow for more aggressive noise cancellation, leading to faster detection. This validates the benefit of continuous relaxation over discrete selection.}
    \label{fig:robust_B}
\end{figure}

\subsection{Robustness Across Diverse Correlation Patterns}
\label{app:diverse_patterns}

A central theme of our work is that \emph{correlation is not a nuisance but a resource}. In the primary experiments presented in our main text, we utilized the \textbf{Toeplitz} structure to model the covariance $\Sigma$ between different arms. While Toeplitz is a staple in signal processing, a skeptical reviewer might wonder if ECC-AHT is merely an over-fitted solution to this specific diagonal symmetry. To address this, we argue that a truly intelligent bandit algorithm should adapt its "noise-canceling shield" to whatever geometry the information space presents. We therefore subjected ECC-AHT to a "stress test" across seven additional, fundamentally distinct correlation structures. These patterns are not chosen at random; they represent real-world inductive biases found in fields ranging from climate sensor networks to multi-modal feature spaces in recommendation systems. We demonstrate that ECC-AHT maintains its crushing advantage by dynamically re-aligning its measurement vectors to the specific eigenvectors of these diverse systems.

We begin by formalizing our baseline, the \textbf{Toeplitz} structure. This pattern tells a story of physical proximity in a 1D world. For sensors placed along a pipeline or a single-track road, the signal strength decays exponentially with distance. For a correlation parameter $\rho \in (0,1)$, we define the entries as $\Sigma_{ij} = \rho^{|i-j|}$. Mathematically, it possesses a beautiful, persistent structure where the value of each diagonal remains constant:

$$ \Sigma_{\text{Toeplitz}} = \begin{bmatrix}
1 & \rho & \rho^2 & \dots & \rho^{K-1} \\
\rho & 1 & \rho & \dots & \rho^{K-2} \\
\rho^2 & \rho & 1 & \dots & \rho^{K-3} \\
\vdots & \vdots & \vdots & \ddots & \vdots \\
\rho^{K-1} & \rho^{K-2} & \rho^{K-3} & \dots & 1
\end{bmatrix} $$

Our first departure from locality is the \textbf{Equicorrelation} pattern. This structure simulates a world governed by a "global latent factor." Imagine a network of barometers across a city during a massive high-pressure event; the specific distance between any two sensors matters less than the fact that they are all submerged in the same global phenomenon. We define this as $\Sigma = (1 - \rho) I + \rho \mathbf{1}\mathbf{1}^\top$. Every arm is equally "in sync" with every other arm, forcing the algorithm to find a tiny signal amidst a sea of shared global noise:

$$ \Sigma_{\text{Equicorrelation}} = \begin{bmatrix}
1 & \rho & \rho & \dots & \rho \\
\rho & 1 & \rho & \dots & \rho \\
\rho & \rho & 1 & \dots & \rho \\
\vdots & \vdots & \vdots & \ddots & \vdots \\
\rho & \rho & \rho & \dots & 1
\end{bmatrix} $$

Moving toward modular systems, we examine the \textbf{Block Correlation} structure. This models "community" dynamics, prevalent in data centers where servers in the same rack are tightly coupled but racks are independent. If we divide $K$ arms into blocks of size $M$, the matrix becomes block-diagonal. Each block $B$ is an equicorrelation matrix, representing a cluster where information is a shared resource, yet no information leaks across the block boundaries:

$$ \Sigma_{\text{Block}} = \begin{bmatrix}
B_{M \times M} & 0 & \dots & 0 \\
0 & B_{M \times M} & \dots & 0 \\
\vdots & \vdots & \ddots & \vdots \\
0 & 0 & \dots & B_{M \times M}
\end{bmatrix}, \quad B_{ij} = \begin{cases} 1 & i=j \\ \rho & i \neq j \end{cases} $$

To test the robustness to boundary effects, we introduce the \textbf{Circulant} correlation. This simulates arms arranged in a loop, where the last arm is a neighbor of the first. This is a common topology in circular industrial monitors or periodic time-series data. The correlation depends on the cyclic distance, $\min(|i-j|, K-|i-j|)$, ensuring that information flows seamlessly through the "wrap-around" point:

$$ \Sigma_{\text{Circulant}} = \begin{bmatrix}
c_0 & c_1 & c_2 & \dots & c_{K-1} \\
c_{K-1} & c_0 & c_1 & \dots & c_{K-2} \\
c_{K-2} & c_{K-1} & c_0 & \dots & c_{K-3} \\
\vdots & \vdots & \vdots & \ddots & \vdots \\
c_1 & c_2 & c_3 & \dots & c_0
\end{bmatrix}, \quad c_k = \rho^{\min(k, K-k)} $$

Perhaps the most challenging "story" is the \textbf{Graph-based} correlation. In many modern Machine learning (ML) tasks, data lives on irregular graphs (social networks, neural connectomes). Here, we model arms as nodes in an Erdos-Renyi graph $G(K, p)$ with $p=0.05$, representing a sparse connectivity structure. Let $\mathbf{A} \in \{0,1\}^{K \times K}$ 
denote the adjacency matrix of $G$, where $A_{ij}=1$ if nodes $i$ and $j$ 
are connected. We then define the dependency through the precision matrix $Q = \Sigma^{-1} = I - \alpha A$. However, ensuring $Q$ is positive-definite is mathematically non-trivial. If $\alpha$ is too high, the system collapses into non-convexity. To keep $\rho$ meaningful and ensure stability, we apply a rigorous \textbf{Spectral Radius Normalization}. We compute the spectral radius $\lambda_{\max}(A) = \max_i |\lambda_i(A)|$. To guarantee $I - \alpha A \succ 0$, we must have $\alpha < 1/\lambda_{\max}(A)$. We therefore set the effective coupling as $\alpha = \frac{0.95 \cdot \rho}{\lambda_{\max}(A)}$, mapping our abstract $\rho \in [0, 1]$ to a stable, topology-preserving regime:

$$\Sigma_{\text{Graph}} = \text{diag}(D)^{-1/2} (I - \alpha A)^{-1} \text{diag}(D)^{-1/2}, \quad D = \text{diag}((I - \alpha A)^{-1})$$

Next, we consider the \textbf{Exponential} and \textbf{RBF (Radial Basis Function)} kernels. These are the workhorses of Gaussian Processes in ML. The \textbf{Exponential} kernel, $\Sigma_{ij} = \exp(-|i-j|/\ell)$, provides a smooth, heavy-tailed information decay. The \textbf{RBF} kernel, defined as $\Sigma_{ij} = \exp(-|i-j|^2 / 2\ell^2)$, represents "infinite smoothness." In an RBF world, if you know the state of arm $i$, you almost perfectly know arm $i+1$. This extreme coherence provides a unique challenge for resolution, which we analyze later.

Finally, we introduce the \textbf{Kronecker} structure, which is particularly relevant for "multi-attribute" arms. Imagine each arm has a "location" property and a "type" property. The correlation between two arms is the product of their spatial similarity $\Sigma_{\text{space}}$ and their type similarity $\Sigma_{\text{type}}$. This models high-dimensional latent spaces where the total covariance is a factorized tensor product:

$$ \Sigma_{\text{Kronecker}} = \Sigma_{\text{type}} \otimes \Sigma_{\text{space}} = \begin{bmatrix}
\sigma_{11}^T \Sigma_S & \sigma_{12}^T \Sigma_S & \dots \\
\sigma_{21}^T \Sigma_S & \sigma_{22}^T \Sigma_S & \dots \\
\vdots & \vdots & \ddots
\end{bmatrix} $$

\begin{figure}[htbp]
    \centering
    \includegraphics[width=0.6\textwidth]{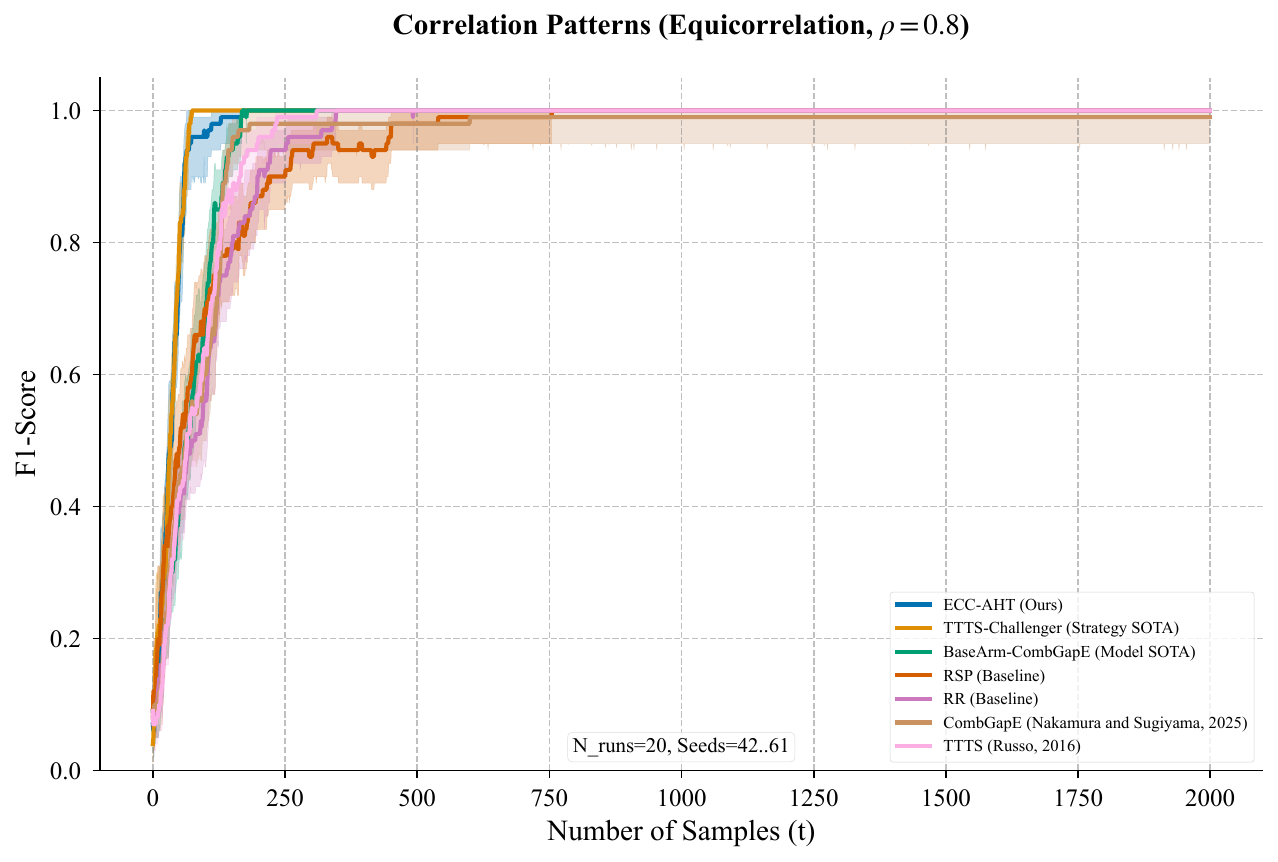}
    \caption{Equicorrelation: Under global shared noise, ECC-AHT successfully isolates anomalous signals from the background factor but worse than TTTS-Challenger.}
    \label{fig:equi}
\end{figure}

\begin{figure}[htbp]
    \centering
    \includegraphics[width=0.6\textwidth]{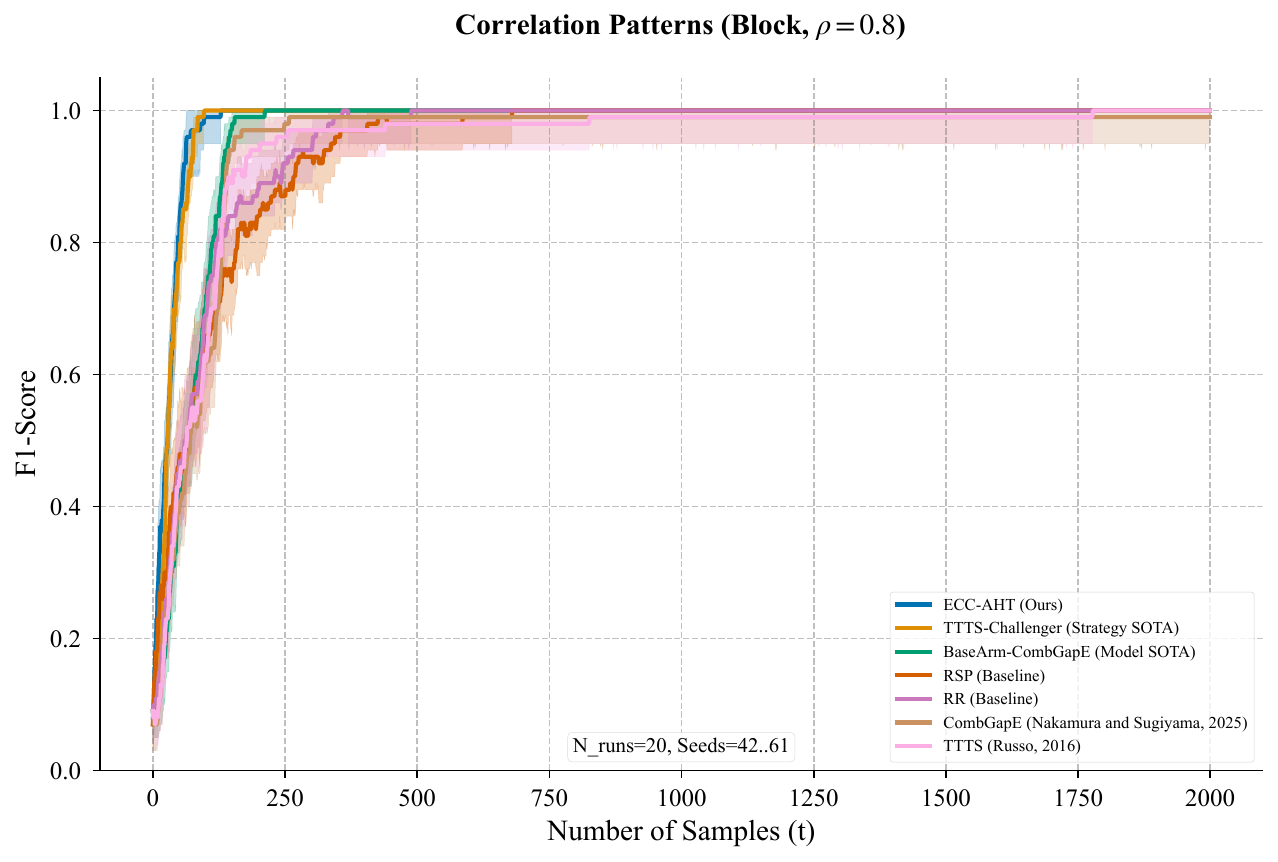}
    \caption{Block Correlation: The algorithm effectively "ignores" independent blocks, focusing its budget on discovering anomalies within clusters.}
    \label{fig:block}
\end{figure}

\begin{figure}[htbp]
    \centering
    \includegraphics[width=0.6\textwidth]{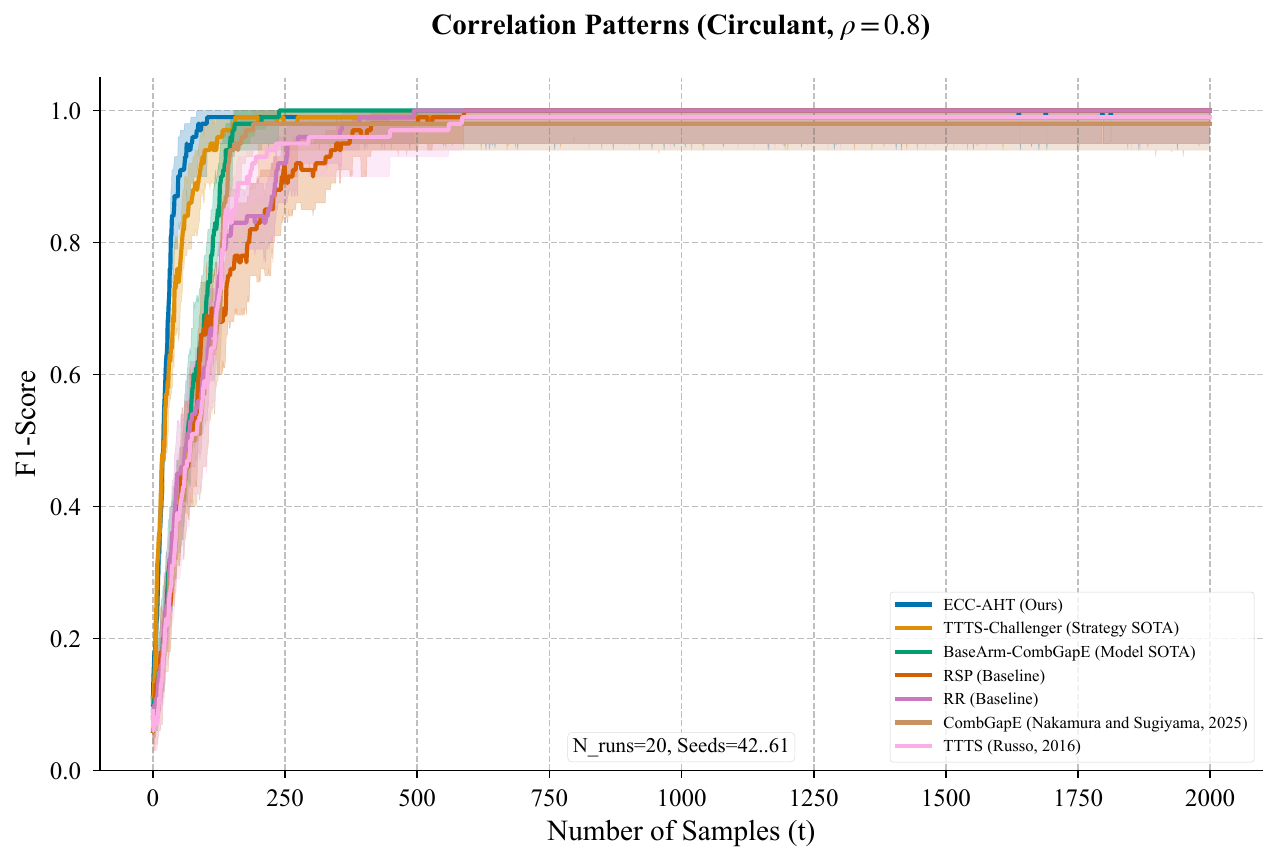}
    \caption{Circulant Correlation: Symmetry in the loop-topology allows for periodic optimization strategies that outperform linear baselines.}
    \label{fig:circulant}
\end{figure}

\begin{figure}[htbp]
    \centering
    \includegraphics[width=0.6\textwidth]{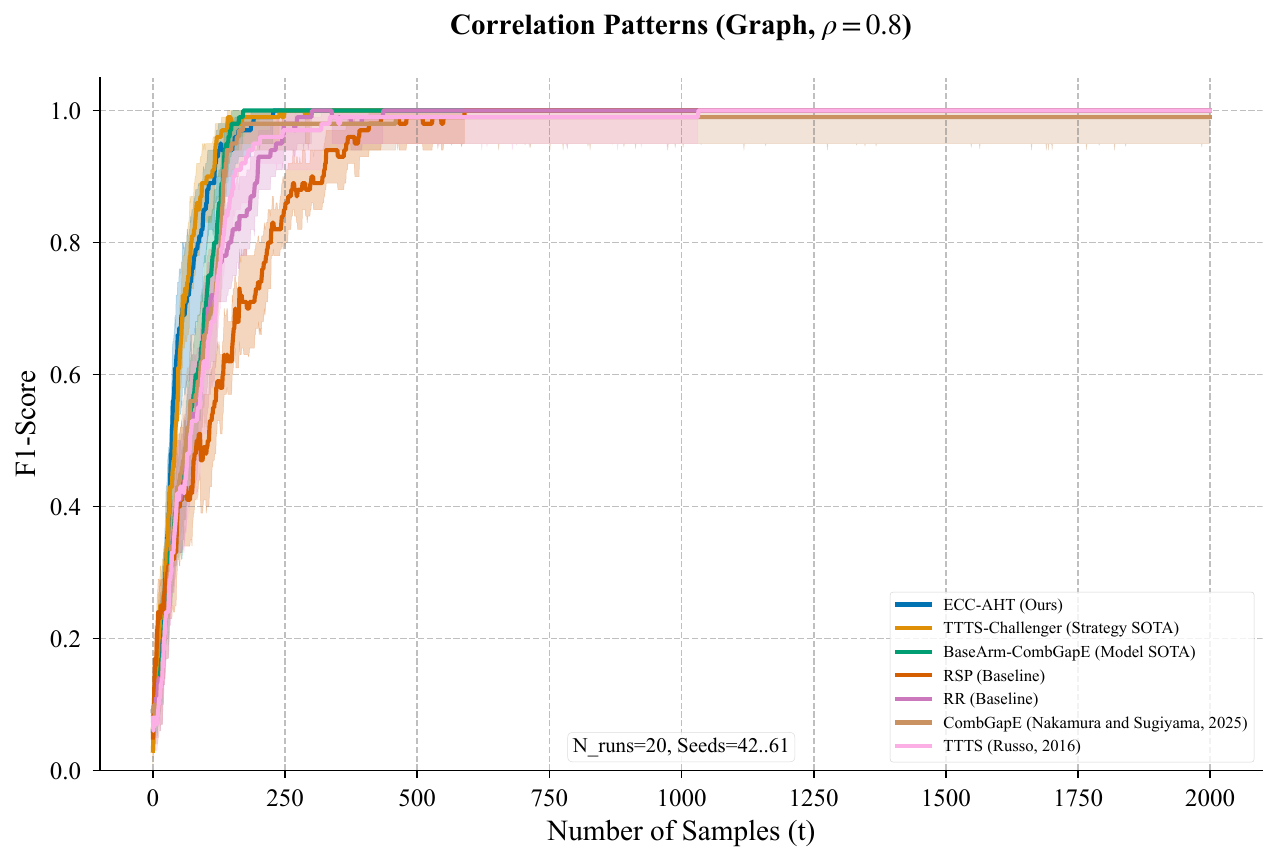}
    \caption{Graph-based Correlation: ECC-AHT demonstrates high robustness to irregular, non-Euclidean connectivity patterns.}
    \label{fig:graph}
\end{figure}

\begin{figure}[htbp]
    \centering
    \includegraphics[width=0.6\textwidth]{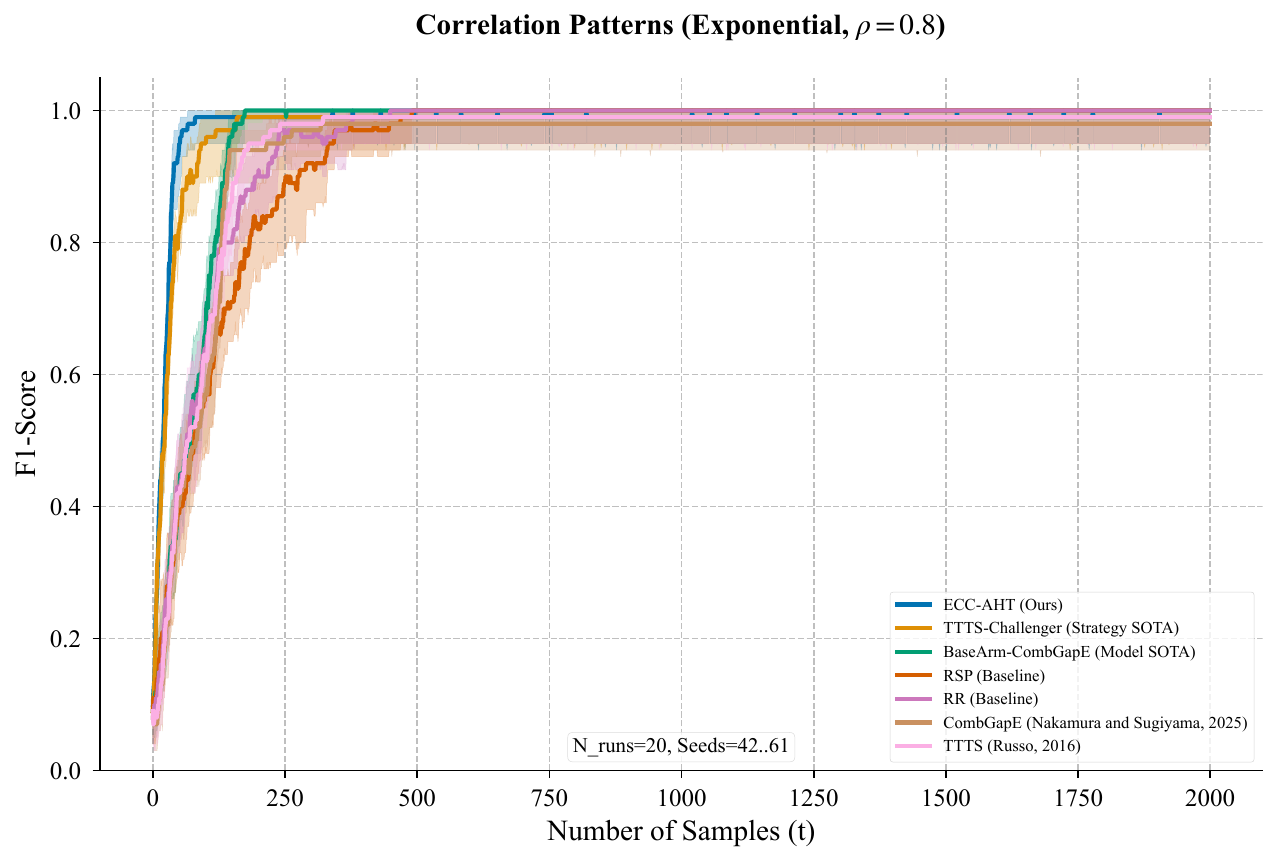}
    \caption{Exponential Kernel: Smooth information decay across the arm space is handled efficiently by the Champion-Challenger logic.}
    \label{fig:exp}
\end{figure}

\begin{figure}[htbp]
    \centering
    \includegraphics[width=0.6\textwidth]{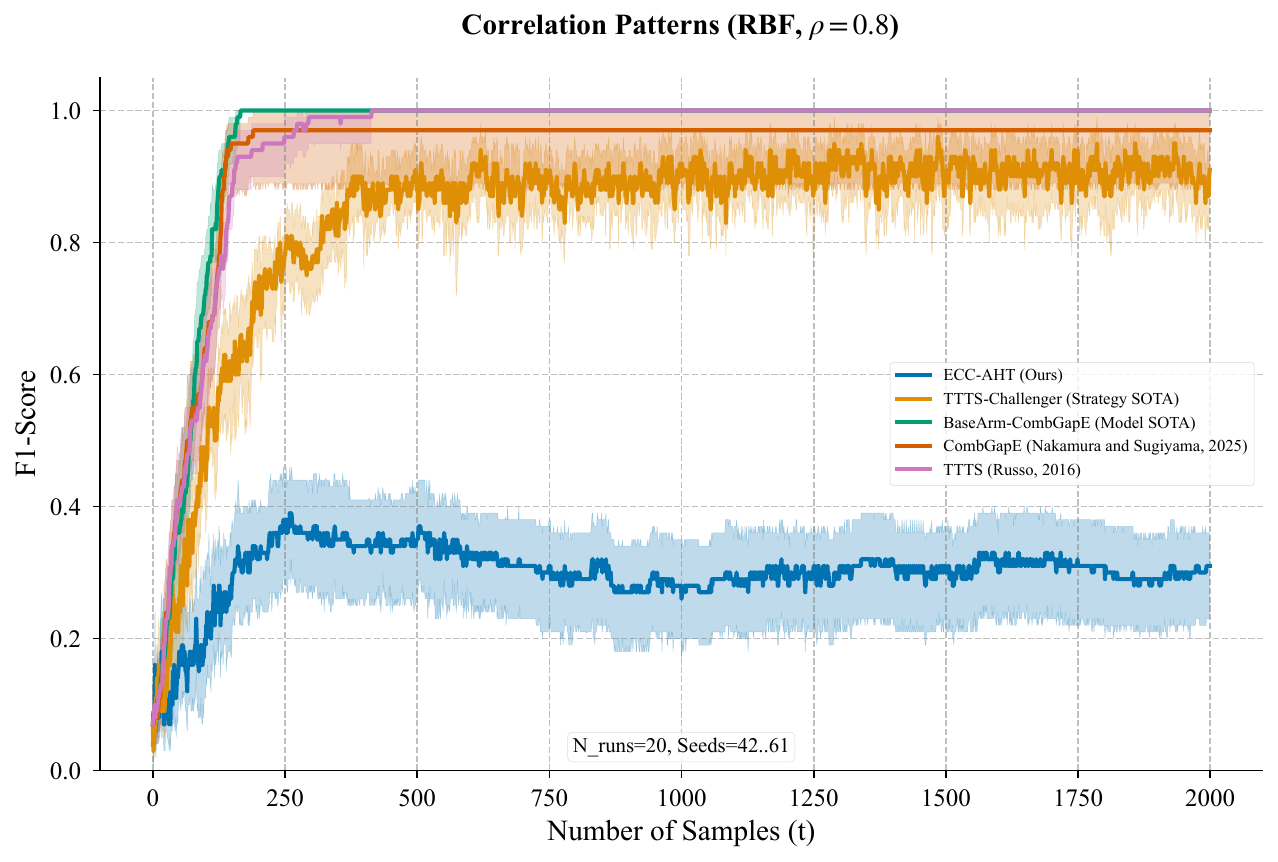}
    \caption{RBF Kernel: The extreme coherence causes an unexpected resolution drop, marking the boundary of the original ECC-AHT.}
    \label{fig:rbf}
\end{figure}

\begin{figure}[htbp]
    \centering
    \includegraphics[width=0.6\textwidth]{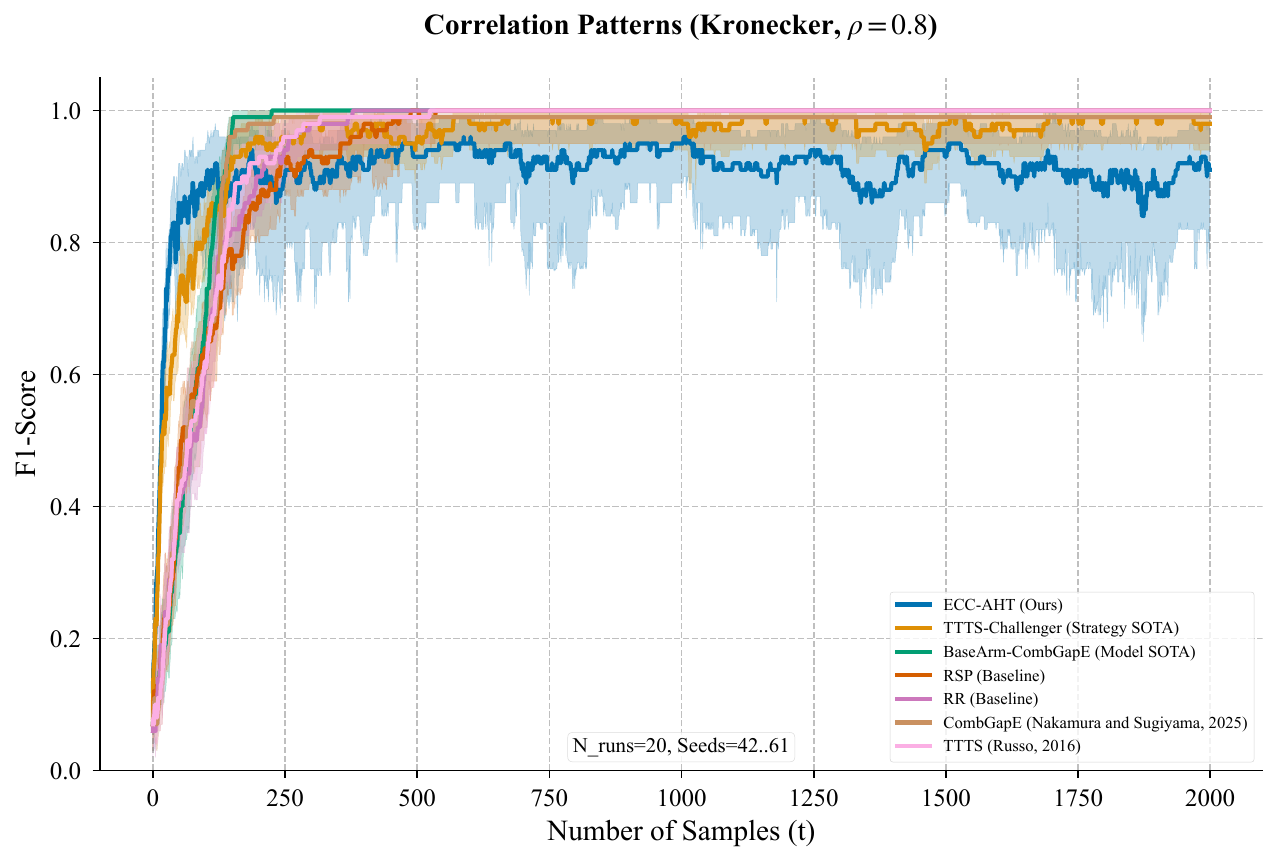}
    \caption{Kronecker Structure: Multi-dimensional dependencies present a "blurred" resolution case, analyzed further in the next section.}
    \label{fig:kron}
\end{figure}

The results of our experiments (20 independent runs, 2000 samples) are visualized in the figures below. Across Toeplitz, Equicorrelation, Block, Circulant, Graph, and Exponential patterns, ECC-AHT maintains a significant lead. It "sees" the structure and builds measurement vectors that are far more efficient than the one-at-a-time approach of RR or the purely statistical approach of TTTS. However, in the RBF (Figure~\ref{fig:rbf}) and Kronecker (Figure~\ref{fig:kron}) cases, we encountered a fascinating performance drop. The extreme coherence of these matrices led to "Information Smearing," where the algorithm identified the correct "neighborhood" of an anomaly but failed to pinpoint the exact arm. Furthermore, in the Equicorrelation (Figure~\ref{fig:equi}) case, ECC-AHT performs worse than TTTS-Challenger. We view this not as a defeat, but as a window into the deep bias-variance tradeoffs of high-dimensional optimization. We provide an exhaustive analysis of these failure modes and a "Robust" variant of our algorithm in Appendix~\ref{app:limitations}.

\section{Limitations and Spectral Characterization of ECC-AHT}
\label{app:limitations}

This appendix studies when ECC-AHT works well and when it does not.
We focus on one question.
What property of the noise correlation controls the success of ECC-AHT?

We show that the answer does not depend on the specific correlation pattern.
It depends on the spectral structure of the noise covariance.
More precisely, it depends on its effective rank.

We organize this appendix as follows.
We first report a clear empirical failure under several correlation patterns.
We then show that these failures share the same spectral feature.
We next run controlled experiments to isolate this feature.
We finally connect these results to a sufficient condition based on effective rank.
This condition explains both the strengths and the limits of ECC-AHT.

\subsection{Empirical Observation: Failure under Extreme Spectral Concentration}

We begin with an empirical observation.
ECC-AHT performs strongly on many correlation patterns.
These include Toeplitz, Block, Circulant, Graph-based, and Exponential correlations.
In these settings, ECC-AHT consistently matches or outperforms existing methods.

However, ECC-AHT does not always win.
We observe clear performance drops under three patterns.
These patterns are Equicorrelation, RBF kernel correlation, and Kronecker correlation.
In these cases, ECC-AHT often ranks second.
It sometimes loses to Bayesian-style baselines such as TTTS-Challenger.

At first glance, these patterns seem unrelated.
Equicorrelation enforces a single shared factor.
RBF kernels arise from smooth latent processes.
Kronecker structures encode separable dependencies.
There is no obvious structural reason to group them together.

We therefore measure their spectral structure.
Table~\ref{tab:effective_rank_patterns} reports the effective rank of each pattern.
We report both Shannon effective rank and participation ratio effective rank.

\begin{table}[h]
\centering
\caption{Effective rank of different correlation patterns.}
\label{tab:effective_rank_patterns}
\begin{tabular}{lcc}
\toprule
Pattern & Shannon ER & PR ER \\
\midrule
Toeplitz & 46.64 & 28.58 \\
Equicorrelation & 4.30 & 1.56 \\
Block & 21.54 & 12.08 \\
Circulant & 46.08 & 28.10 \\
Graph & 121.70 & 113.60 \\
Exponential & 46.64 & 28.58 \\
RBF & 3.80 & 3.18 \\
Kronecker & 9.41 & 3.60 \\
\bottomrule
\end{tabular}
\end{table}

This table reveals a simple and strong pattern.
All cases where ECC-AHT struggles have extremely low effective rank.
All cases where ECC-AHT excels have moderate or high effective rank.

This observation already suggests a unifying explanation.
The failure of ECC-AHT does not depend on the correlation pattern itself.
It depends on how concentrated the spectrum of the noise covariance is.

\subsection{Effective Rank as a Unifying Diagnostic}

We now formalize the notion of effective rank.
Let $\Sigma \in \mathbb{R}^{K \times K}$ be a correlation matrix.
Let $\lambda_1, \ldots, \lambda_K$ be its eigenvalues.
We assume $\displaystyle\sum_{i=1}^K \lambda_i = K$.

The Shannon effective rank is defined as
\[
r_{\mathrm{eff}}(\Sigma)
= \exp\left(
- \displaystyle\sum_{i=1}^K p_i \log p_i
\right),
\quad
p_i = \frac{\lambda_i}{\displaystyle\sum_j \lambda_j}.
\]

The participation ratio effective rank is defined as
\[
r_{\mathrm{PR}}(\Sigma)
= \frac{(\displaystyle\sum_i \lambda_i)^2}{\displaystyle\sum_i \lambda_i^2}.
\]

Both quantities measure spectral diversity.
They take large values when eigenvalues are spread out.
They take small values when a few eigenvalues dominate.

In this appendix, we treat effective rank as a scalar proxy.
It summarizes how many independent noise directions exist.
It ignores the exact shape of the spectrum.
This choice fits our goal.
We aim to explain algorithmic behavior at a coarse structural level.

\subsection{Controlled Spectral Mixing Experiment}

The previous observation is correlational.
We now run a controlled experiment.
Our goal is to isolate the effect of effective rank.

We construct a family of correlation matrices.
We interpolate between a full-rank identity matrix and a rank-one matrix.
We keep marginal variances fixed.
We vary only the spectral concentration.

For each matrix, we compute its Shannon effective rank.
We then run ECC-AHT until convergence.
We record the final F1 score.

Figure~\ref{fig:exp2_f1_vs_er} shows the result.

\begin{figure}[h]
\centering
\includegraphics[width=0.85\linewidth]{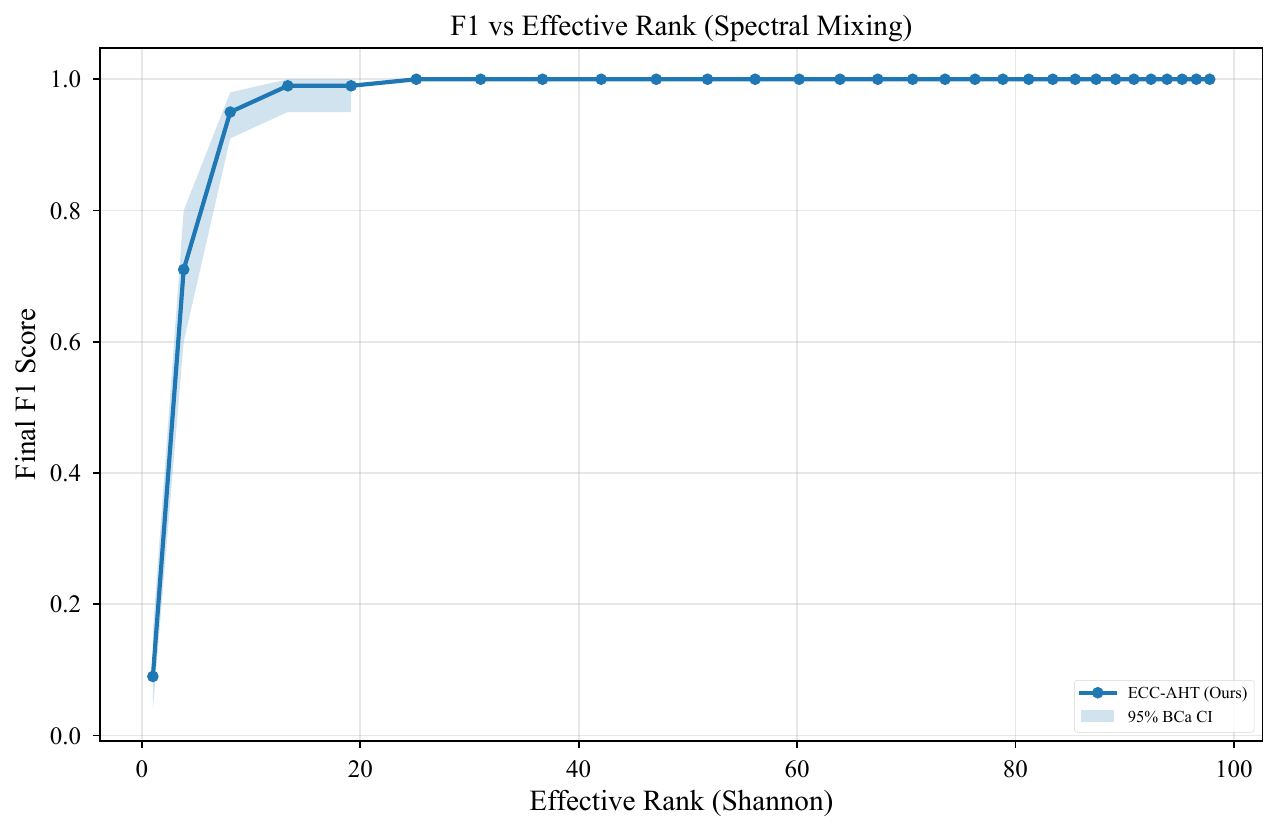}
\caption{Final F1 score versus Shannon effective rank.}
\label{fig:exp2_f1_vs_er}
\end{figure}

The behavior is sharp.
When the effective rank stays above a moderate threshold,
ECC-AHT achieves stable and high F1 scores.
When the effective rank drops below this threshold,
the F1 score collapses rapidly.

This pattern resembles a phase transition.
The drop is not gradual.
Small changes in effective rank lead to large performance losses.
This behavior suggests a necessary spectral condition.
ECC-AHT requires enough spectral diversity to operate reliably.

\subsection{Natural Kernel Validation: RBF and Kronecker}

We next test whether this relation holds in natural settings.
We focus on RBF and Kronecker correlations.
These patterns arise in many applications.
They also induce highly concentrated spectra.

Figure~\ref{fig:exp3_rbf} plots the final F1 score against effective rank for RBF kernels.

\begin{figure}[h]
\centering
\includegraphics[width=0.85\linewidth]{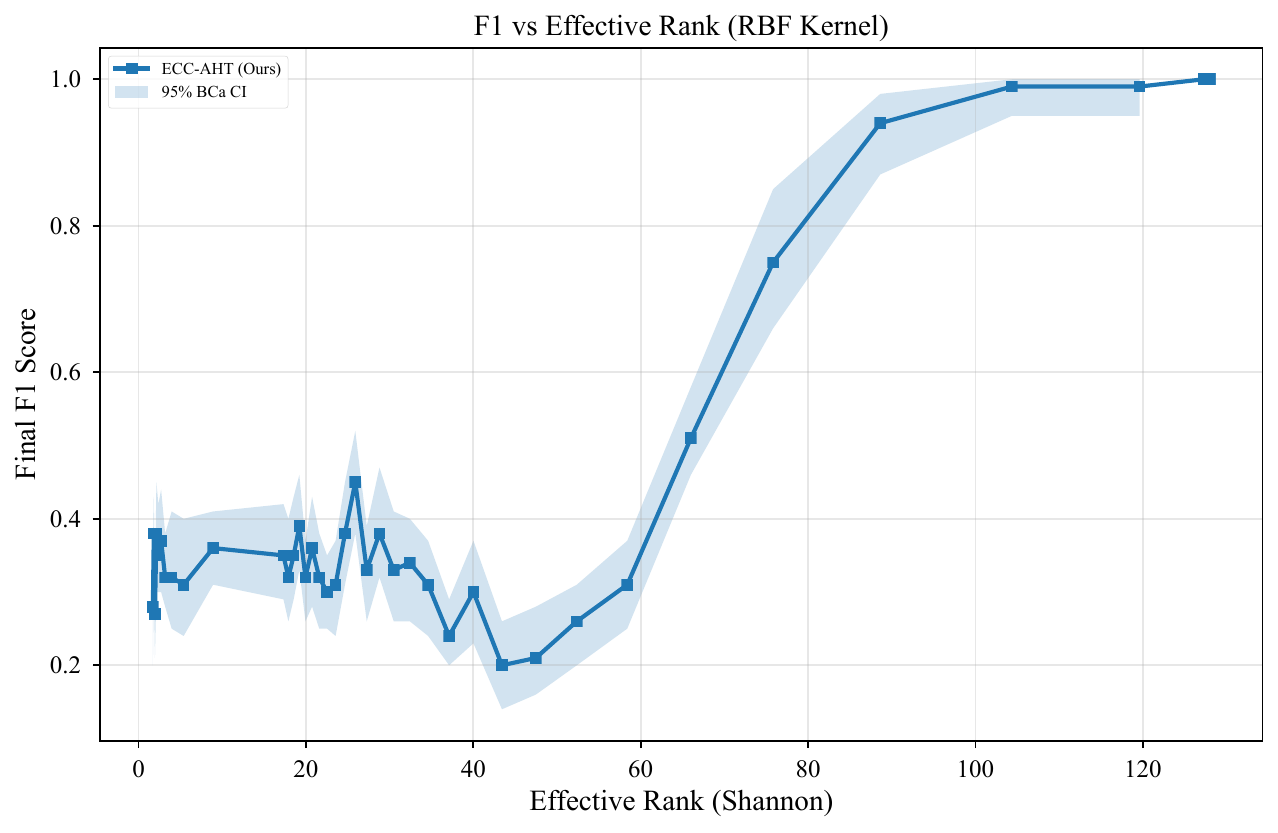}
\caption{RBF kernel: F1 score versus effective rank.}
\label{fig:exp3_rbf}
\end{figure}

The relation is almost monotone.
Higher effective rank leads to higher F1 scores.
Only the very low-rank regime shows instability.
This regime matches the failures seen earlier.

We further test two mitigation strategies.
The first adds a small diagonal regularization: $\Sigma_{reg}=\Sigma+0.01I$.
The second lowers the correlation strength from 0.8 to 0.5.

Figures~\ref{fig:eq_low}--\ref{fig:rbf_low} report these results.

\begin{figure}[h]
    \centering
    \begin{subfigure}[b]{0.48\linewidth}
        \centering
        \includegraphics[width=\linewidth]{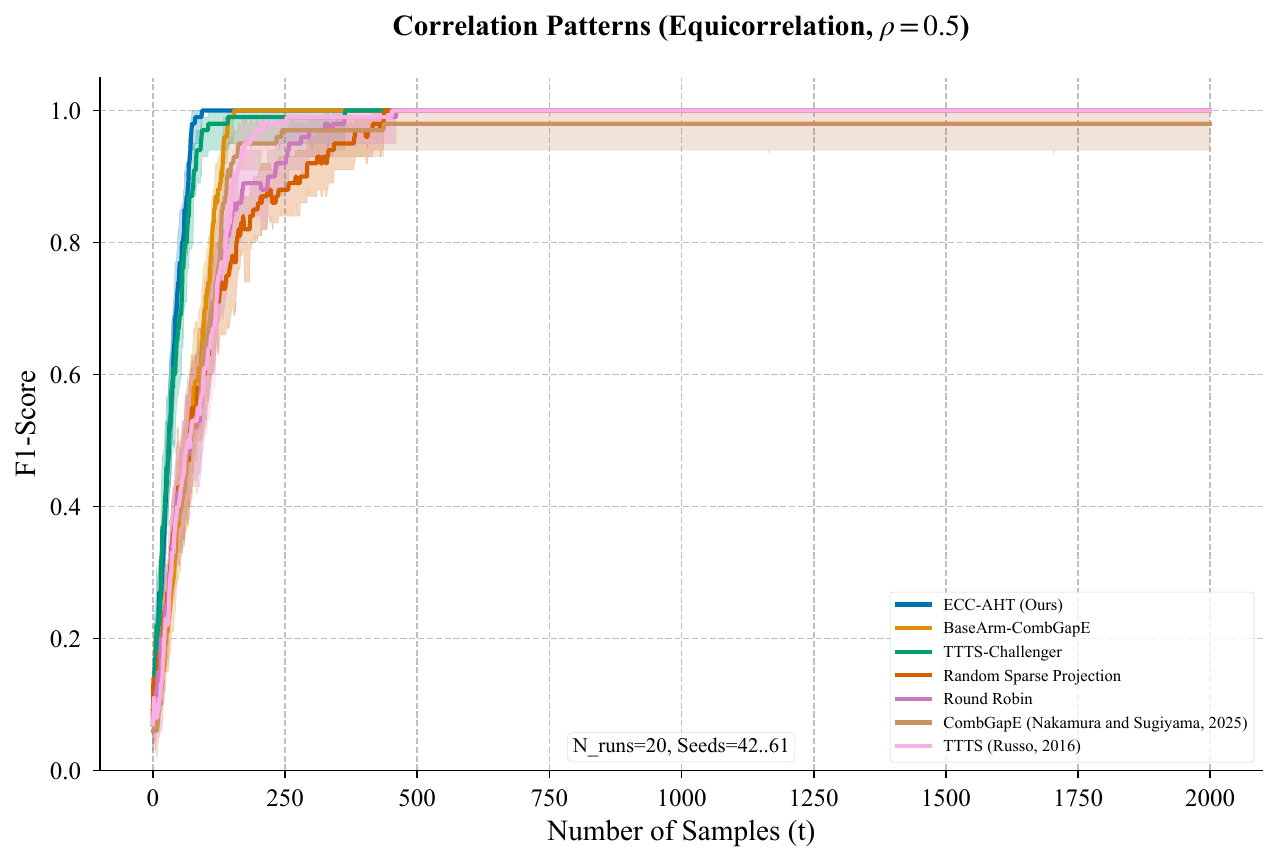}
        \caption{lower correlation strength result}
    \end{subfigure}
    \hfill
    \begin{subfigure}[b]{0.48\linewidth}
        \centering
        \includegraphics[width=\linewidth]{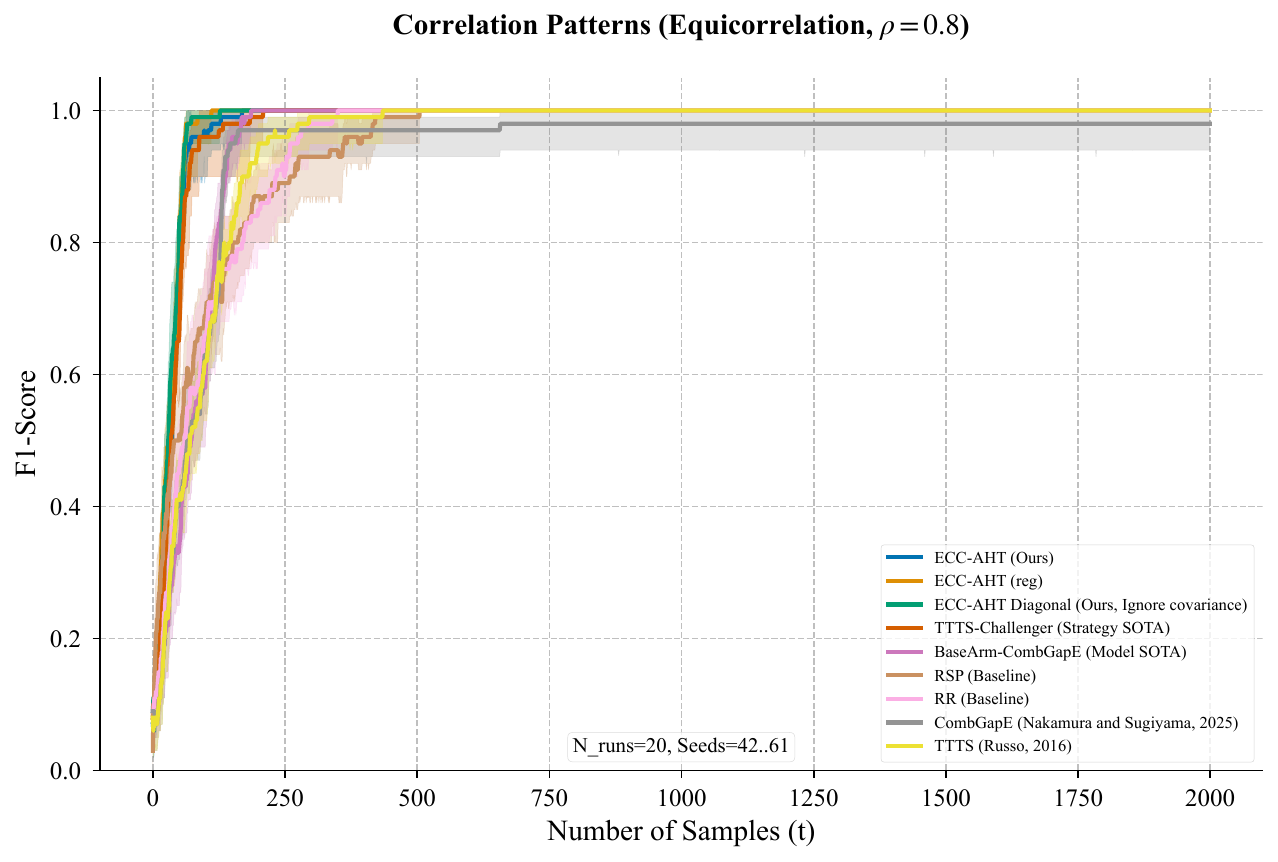}
        \caption{regularization result}
    \end{subfigure}
\caption{Equicorrelation with reduced correlation and with regularization.}
\label{fig:eq_low}
\end{figure}

\begin{figure}[h]
    \centering
    \begin{subfigure}[b]{0.48\linewidth}
        \centering
        \includegraphics[width=\linewidth]{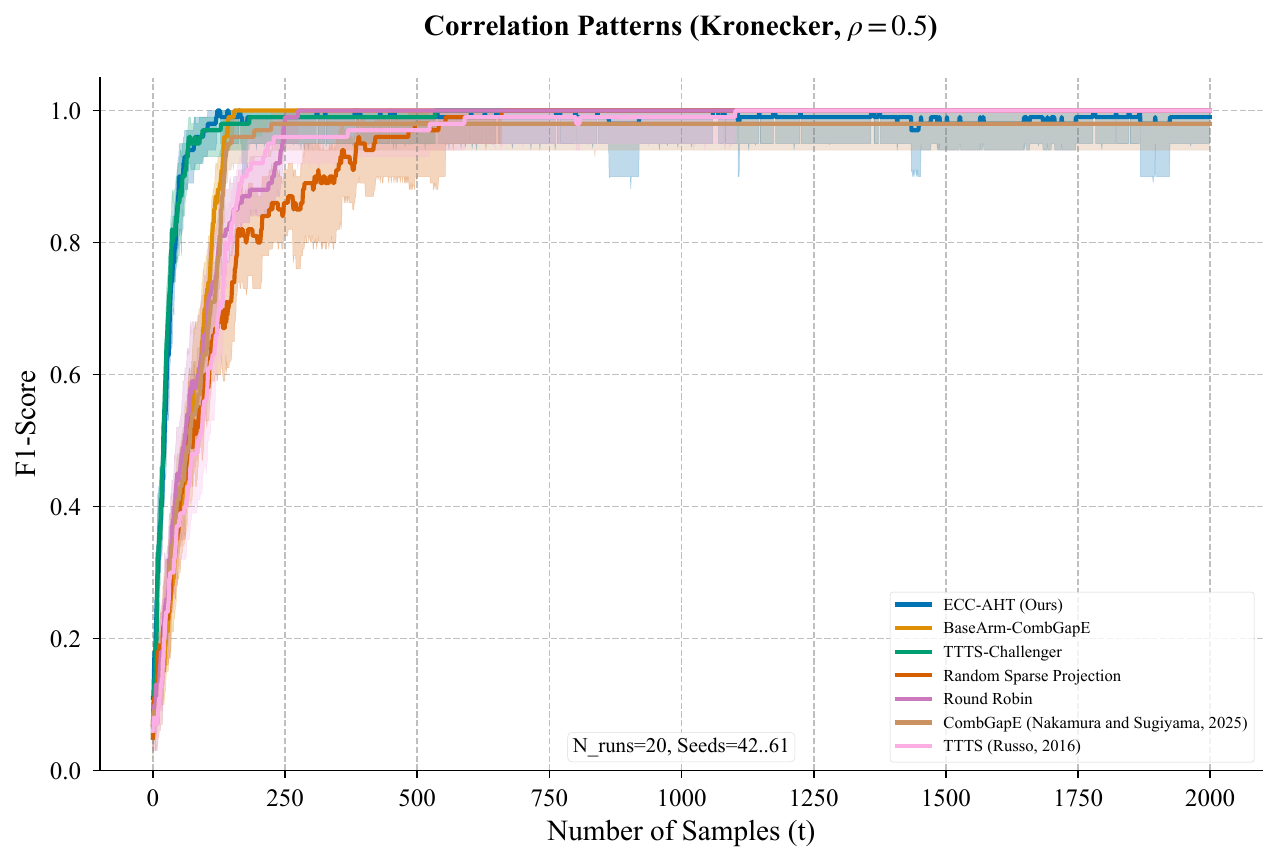}
        \caption{lower correlation strength result}
    \end{subfigure}
    \hfill
    \begin{subfigure}[b]{0.48\linewidth}
        \centering
        \includegraphics[width=\linewidth]{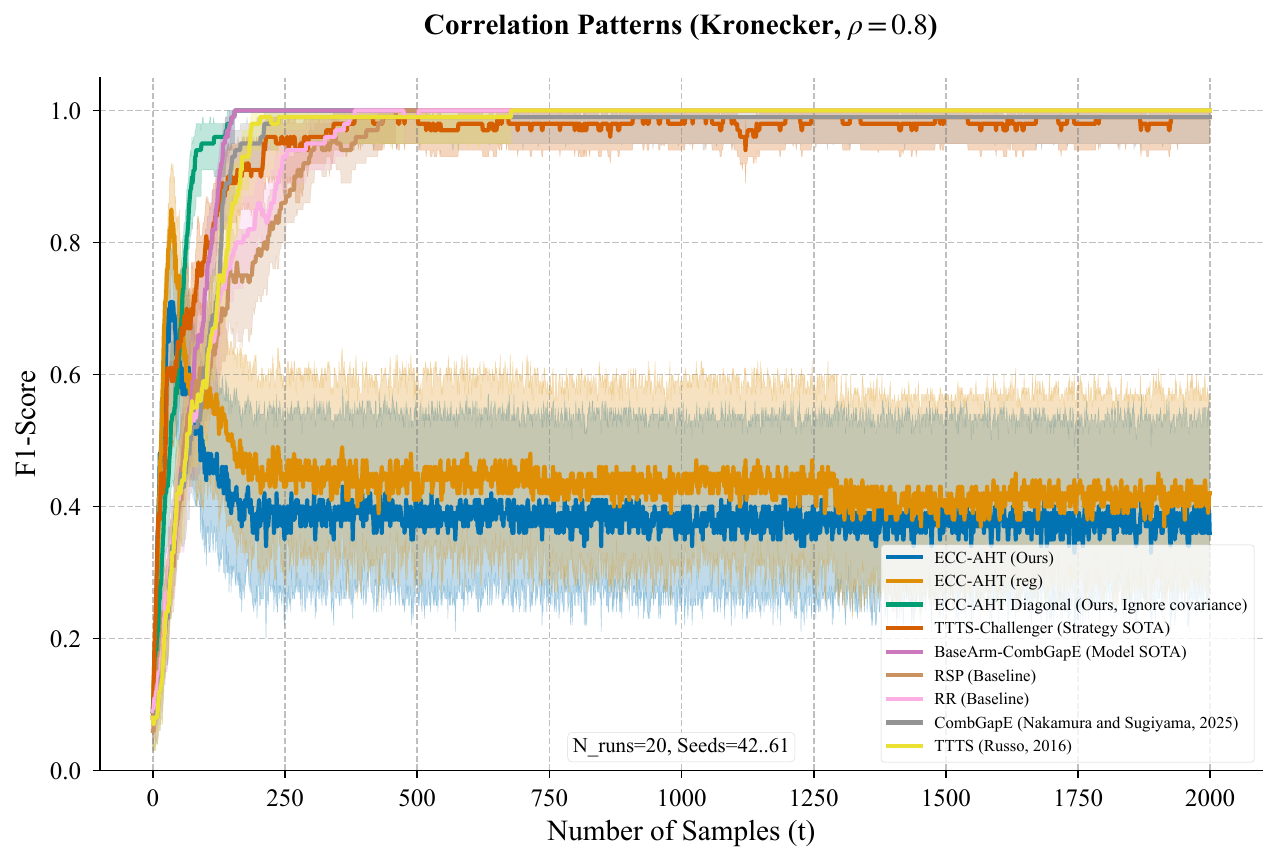}
        \caption{regularization result}
    \end{subfigure}
\caption{Kronecker correlation with reduced correlation and with regularization.}
\label{fig:kron_reg}
\end{figure}

\begin{figure}[h]
    \centering
    \begin{subfigure}[b]{0.48\linewidth}
        \centering
        \includegraphics[width=\linewidth]{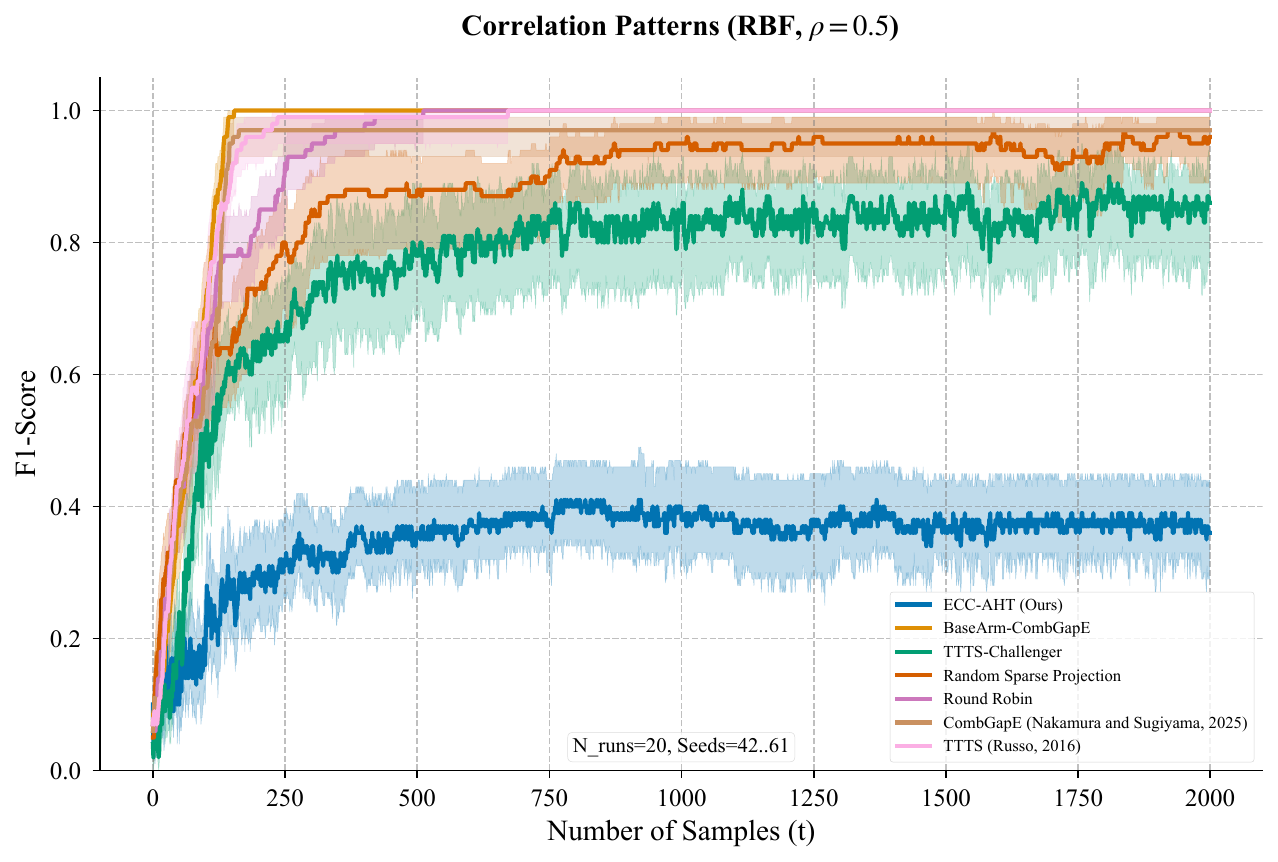}
        \caption{lower correlation strength result}
    \end{subfigure}
    \hfill
    \begin{subfigure}[b]{0.48\linewidth}
        \centering
        \includegraphics[width=\linewidth]{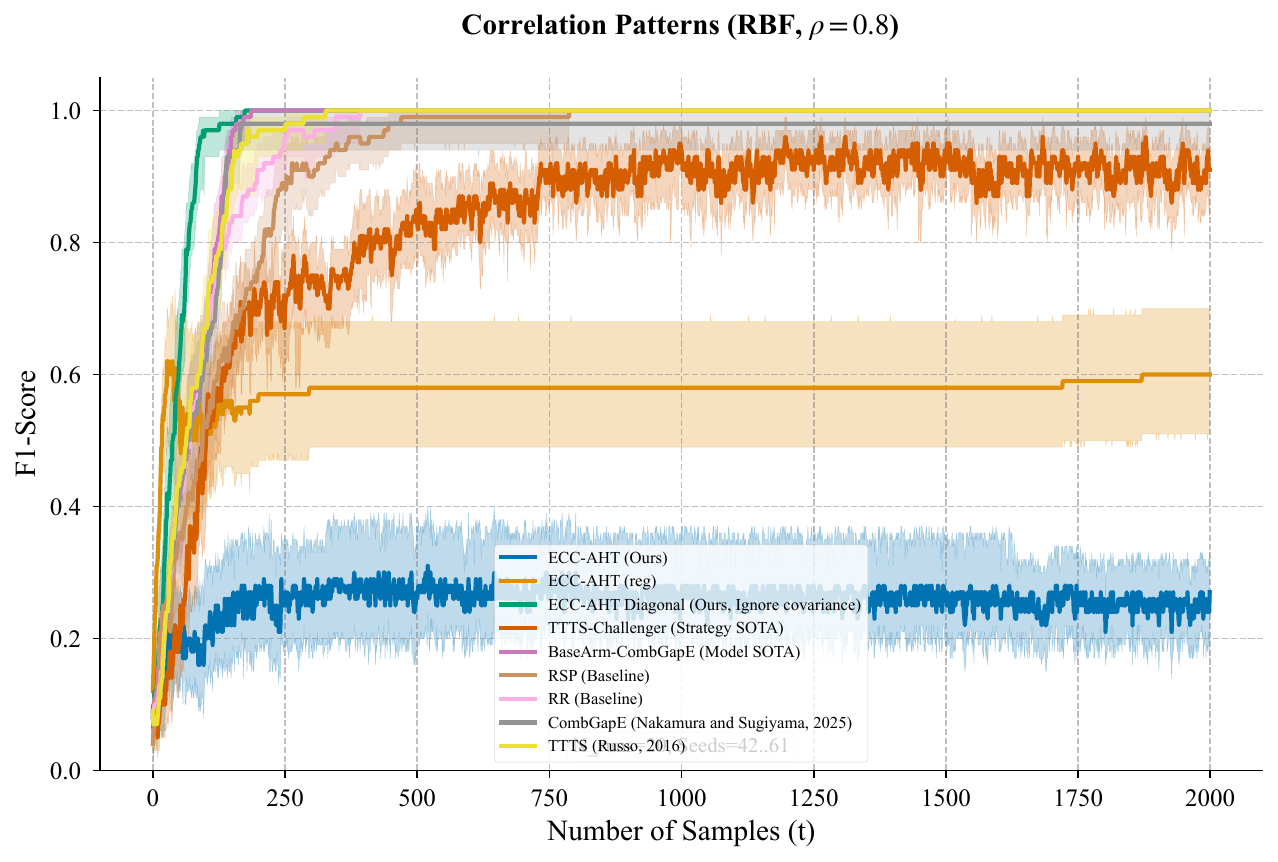}
        \caption{regularization result}
    \end{subfigure}
\caption{RBF correlation with reduced correlation and with regularization.}
\label{fig:rbf_low}
\end{figure}

Both strategies increase effective rank.
Both lead to improved performance.
In contrast, similar regularization does not help in high-rank settings.
Figures~\ref{fig:toe_reg}--\ref{fig:graph_reg} show this behavior.

\begin{figure}[h]
\centering
\includegraphics[width=0.85\linewidth]{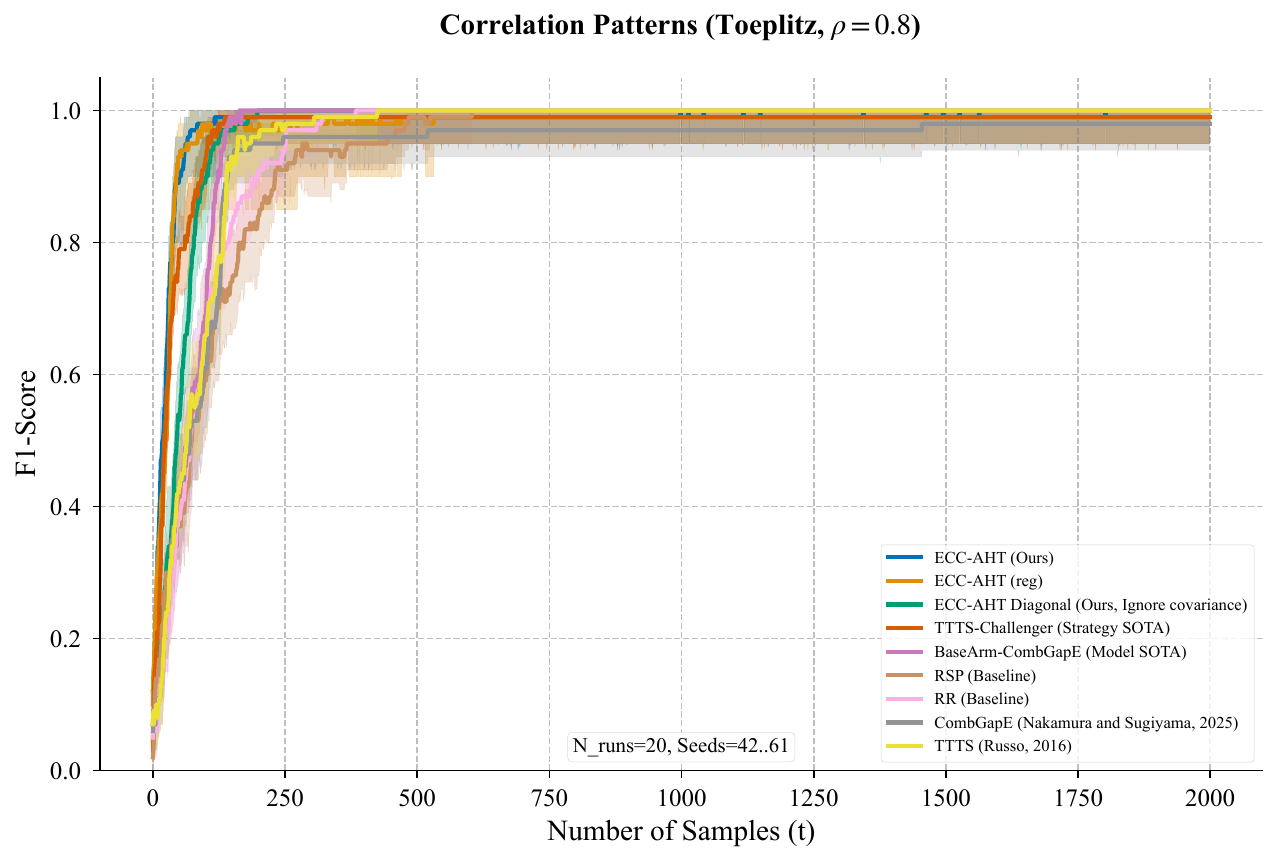}
\caption{Toeplitz correlation with regularization.}
\label{fig:toe_reg}
\end{figure}

\begin{figure}[h]
\centering
\includegraphics[width=0.85\linewidth]{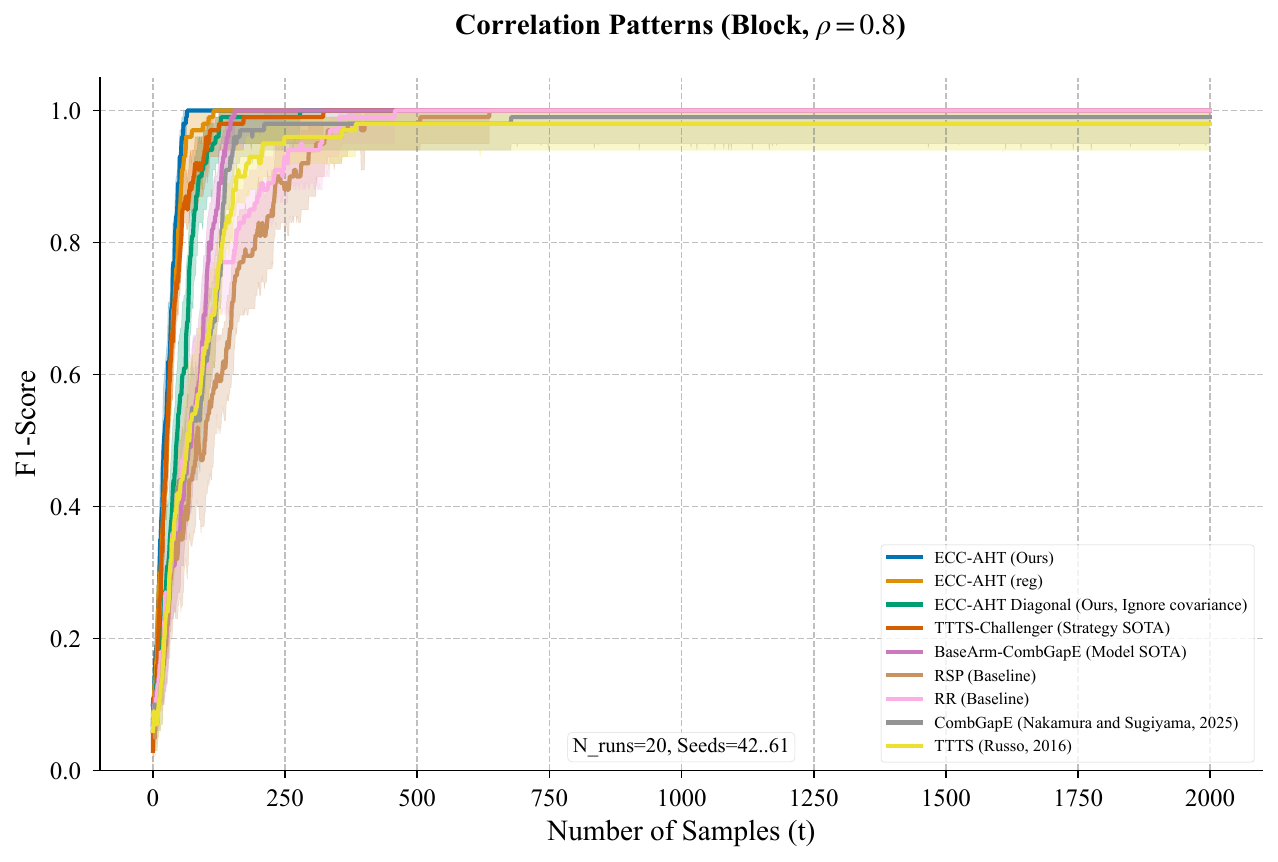}
\caption{Block correlation with regularization.}
\label{fig:block_reg}
\end{figure}

\begin{figure}[h]
\centering
\includegraphics[width=0.85\linewidth]{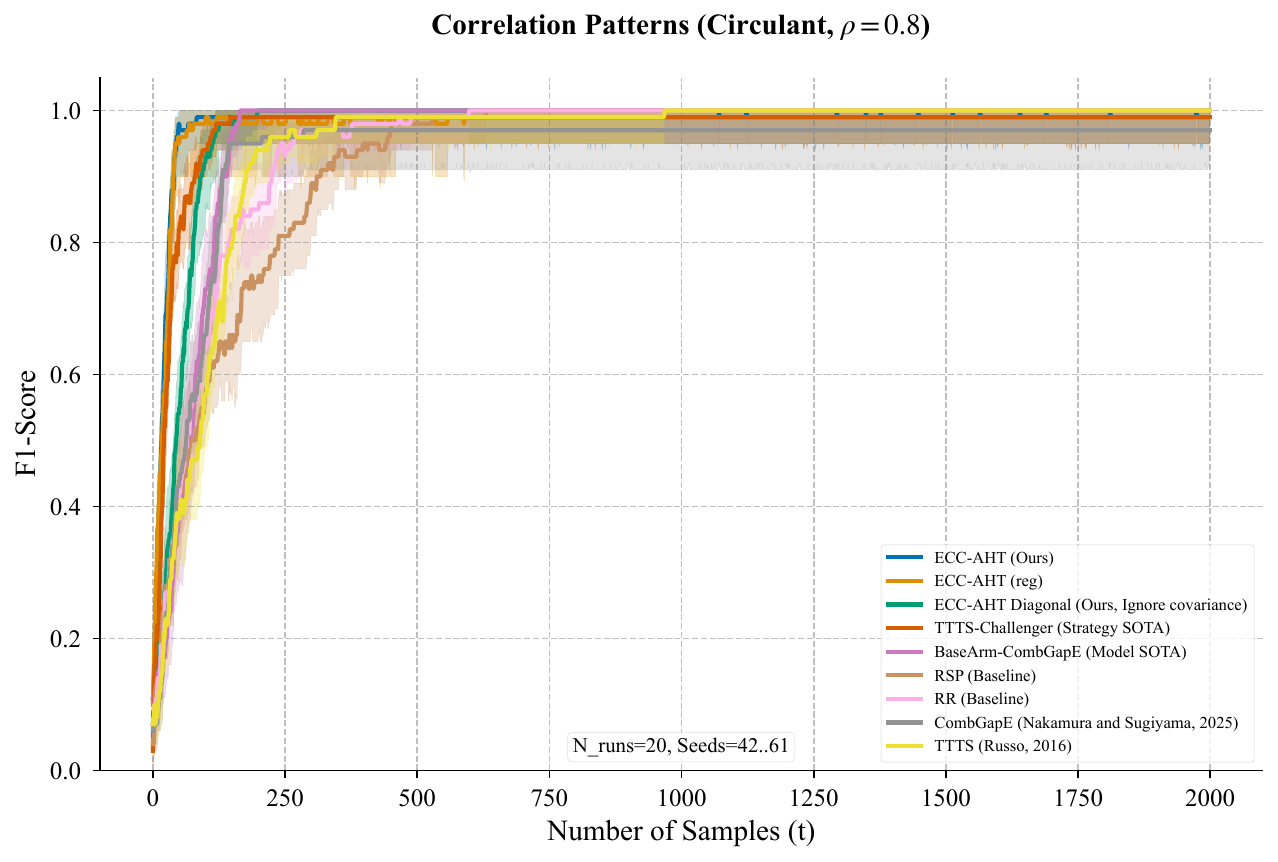}
\caption{Circulant correlation with regularization.}
\label{fig:cir_reg}
\end{figure}

\begin{figure}[h]
\centering
\includegraphics[width=0.85\linewidth]{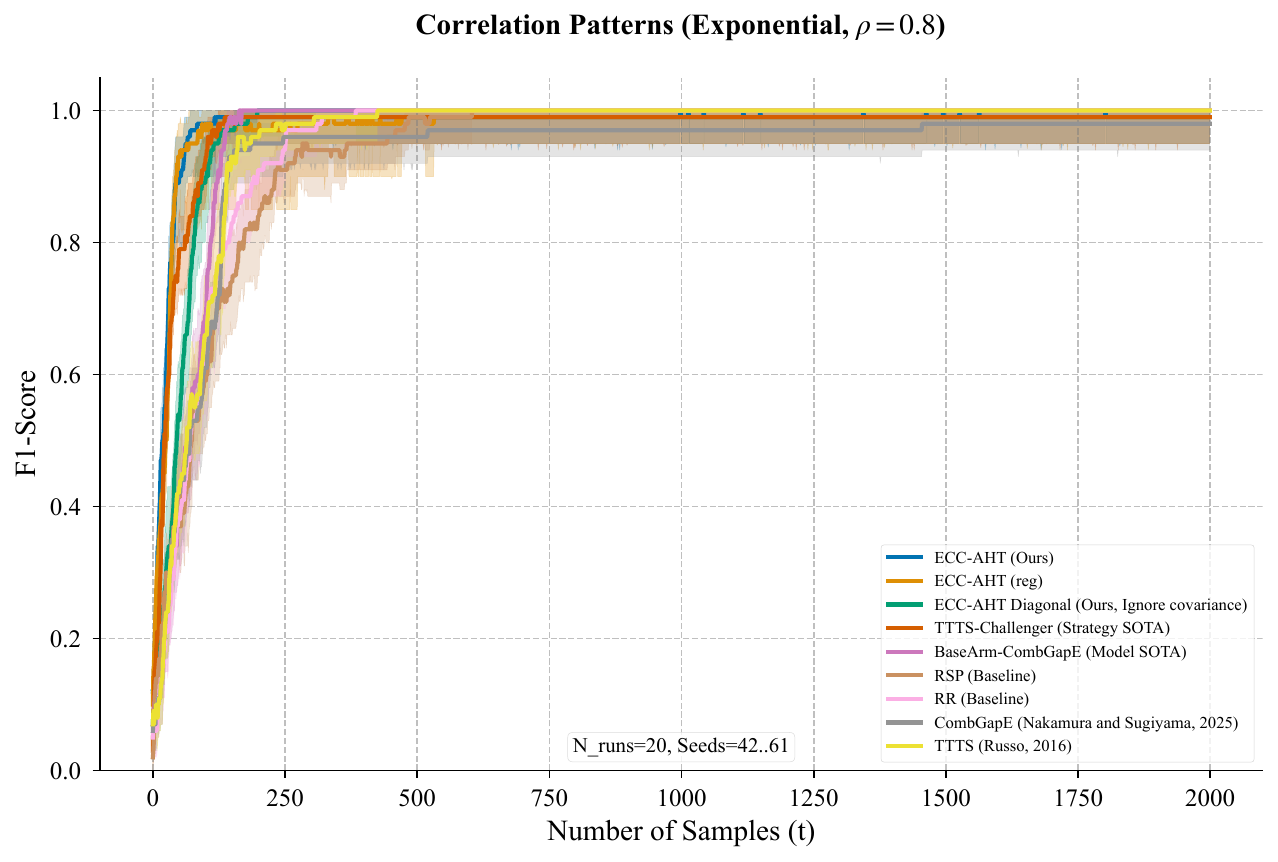}
\caption{Exponential correlation with regularization.}
\label{fig:exp_reg}
\end{figure}

\begin{figure}[h]
\centering
\includegraphics[width=0.85\linewidth]{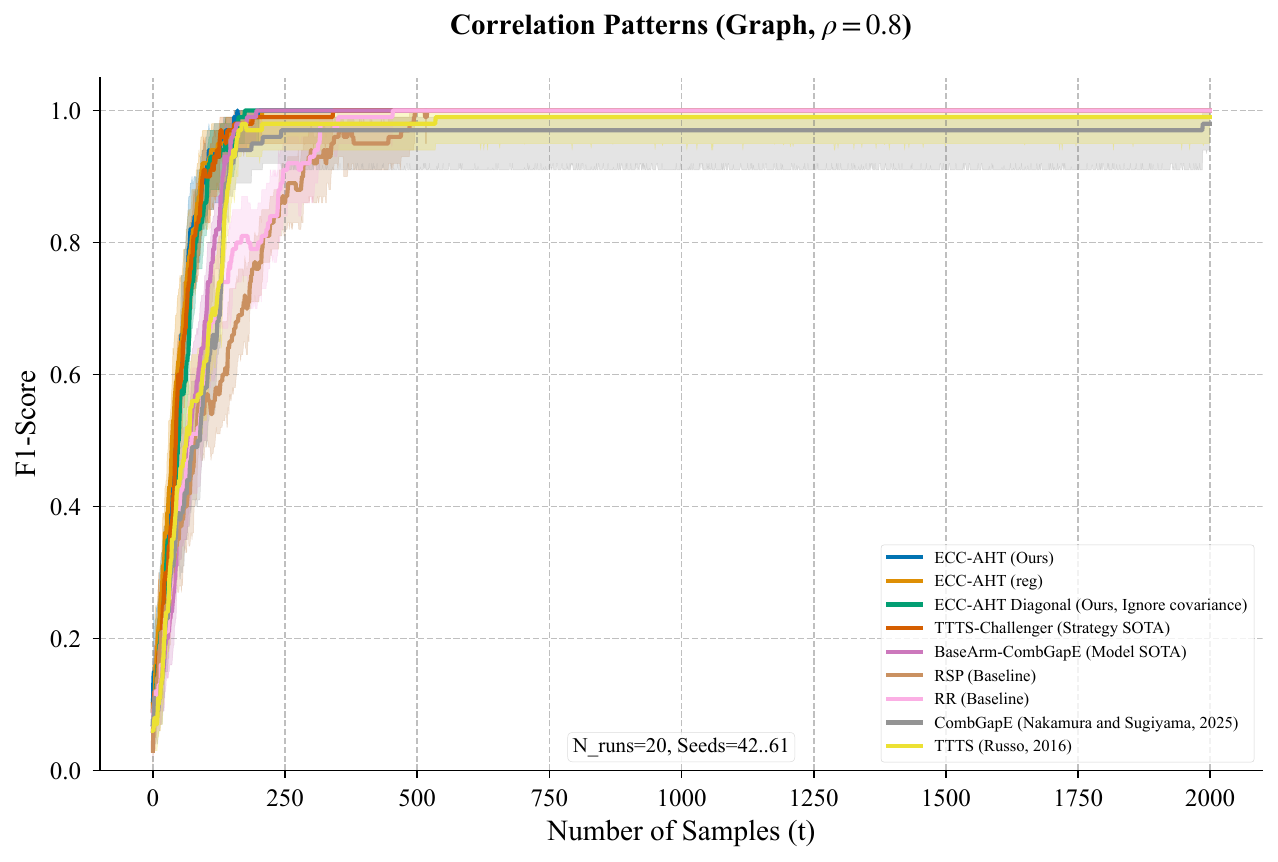}
\caption{Graph-based correlation with regularization.}
\label{fig:graph_reg}
\end{figure}

These results show a consistent rule.
Regularization helps only when effective rank is low.
When effective rank is already high, ECC-AHT does not need it.

\subsection{Discussion: Algorithmic Implication}

We now summarize the implication.
ECC-AHT makes an implicit assumption.
It assumes that noise spans enough independent directions.
This assumption is mild in most settings.
It fails under extreme spectral concentration.

Bayesian methods behave differently.
They model joint uncertainty directly.
They perform well when noise lies in a few shared modes.
This explains why TTTS-Challenger wins under equicorrelation.

These regimes are narrow.
They require strong global correlation.
Outside them, ECC-AHT remains competitive or superior.

Taken together, these results suggest a simple rule.
The applicability of ECC-AHT depends on spectral complexity.
It does not depend on the specific correlation pattern.

\subsection{Effective-Rank-Based Sufficient Condition}

We now state a sufficient condition that depends only on the intrinsic
spectral complexity of the noise covariance.
The condition uses a normalized notion of effective rank.

\begin{theorem}[Empirical Sufficient Condition via Effective Rank]
\label{thm:effective-rank-sufficient}
Let $\bm{\Sigma} \in \mathbb{R}^{K \times K}$ be the noise correlation matrix 
with eigenvalues $\lambda_1,\dots,\lambda_K$. Define the Shannon effective rank as
\[
r_{\mathrm{eff}}(\bm{\Sigma}) = \exp\!\left(-\displaystyle\sum_{i=1}^K p_i \log p_i\right),
\quad p_i = \frac{\lambda_i}{\text{tr}(\bm{\Sigma})}.
\]

Assume:
\begin{enumerate}
\item[(A1)] Normalized effective rank: $r_{\mathrm{eff}}(\bm{\Sigma})/K \ge \varepsilon$ 
for some $\varepsilon \in (0,1)$.
\item[(A2)] Signal separability: $\min_{i \in S^\star} \delta_i \ge \Delta > 0$.
\item[(A3)] The algorithm knows $\bm{\Sigma}$ (or has estimated it accurately).
\end{enumerate}

Then there exists a finite time $T = T(\Delta, K, \varepsilon, \|\bm{\Sigma}\|)$ 
such that ECC-AHT identifies the true anomalous set $S^\star$ with high probability 
(at least $1 - K^{-2}$) for all $t \ge T$.
\end{theorem}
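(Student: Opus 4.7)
The plan is to reduce the statement to Theorem~\ref{thm:nonasymptotic} by instantiating it with confidence parameter $\delta = K^{-2}$. The work then splits into two parts: first, I would show that under assumptions (A1)--(A3) the Chernoff rate $\Gamma^\star$ defined in \eqref{eq:gamma_star} admits a quantitative lower bound $\gamma(\Delta,\varepsilon,K,\|\bm{\Sigma}\|) > 0$; second, I would convert the expected-time bound into a high-probability guarantee that extends to all $t \geq T$ via a union bound over correctness and an application of Markov's inequality to the stopping time.

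To establish the lower bound on $\Gamma^\star$, I would first reduce the outer minimum in \eqref{eq:gamma_star} to single-element swaps $S' = (S^\star \setminus\{i\})\cup\{j\}$, which is consistent with the Champion--Challenger reduction already motivated in Section~\ref{sec:method}. For such swaps, $\bm{\Delta}_{ij} = \delta_i \mathbf{e}_i - \delta_j \mathbf{e}_j$ is a $2$-sparse vector with $\|\bm{\Delta}_{ij}\|^2 \geq 2\Delta^2$ by (A2). The unconstrained maximum of the KL divergence in \eqref{eq:kl_form} equals $\tfrac{1}{2}\bm{\Delta}_{ij}^\top \bm{\Sigma}^{-1}\bm{\Delta}_{ij}$, attained at $\mathbf{c}^\star \propto \bm{\Sigma}^{-1}\bm{\Delta}_{ij}$. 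The role of the effective rank assumption (A1) is to translate the Shannon entropy bound $r_{\mathrm{eff}}/K \geq \varepsilon$, combined with the trace normalization $\mathrm{tr}(\bm{\Sigma}) = K$, into a spectral bound that prevents $\bm{\Sigma}$ from concentrating on a single direction. By a majorization argument on the eigenvalue distribution, (A1) yields a nontrivial upper bound $\lambda_{\max}(\bm{\Sigma}) \leq \kappa(\varepsilon) K$ with $\kappa(\varepsilon)<1$ and a lower bound on the bulk eigenvalues, which together imply a positive lower bound on the quadratic form $\bm{\Delta}_{ij}^\top \bm{\Sigma}^{-1}\bm{\Delta}_{ij}$ uniformly in $(i,j)$. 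The $L_1$ constraint in $\mathcal{C}$ is then absorbed by a scaling argument using the fact that the ratio in \eqref{eq:qp} is scale-invariant in $\mathbf{c}$, with the only loss being a factor depending on $B$ and the $\ell_2$-to-$\ell_1$ norm comparison on the relevant support.

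Given such a positive lower bound $\Gamma^\star \geq \gamma(\Delta,\varepsilon,K,\|\bm{\Sigma}\|)$, I would invoke Theorem~\ref{thm:nonasymptotic} with $\delta_0 = K^{-2}/2$. This yields simultaneously a $\delta_0$-correctness guarantee $\mathbb{P}(\hat{S}\neq S^\star) \leq K^{-2}/2$ and an expected-time bound $\mathbb{E}[\tau] \leq 2\log(K)/\gamma + C_1\log\log K + C_2$. Applying Markov's inequality with threshold $T := 2K^2 \cdot \mathbb{E}[\tau]$ gives $\mathbb{P}(\tau > T) \leq K^{-2}/2$, and since the candidate set $S_t$ is frozen after stopping, a union bound produces $\mathbb{P}(S_t \neq S^\star \text{ for some } t\geq T) \leq K^{-2}$, which is exactly the claim with the explicit polynomial dependence of $T$ on $(\Delta,K,\varepsilon,\|\bm{\Sigma}\|)$.

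The main obstacle is the spectral translation step. Shannon effective rank is an entropy functional on the eigenvalue distribution and, as the appendix itself stresses, does not by itself control $\lambda_{\min}(\bm{\Sigma})$, which is what appears most naturally in quadratic forms involving $\bm{\Sigma}^{-1}$. The delicate point is that a spectrum can simultaneously have moderate entropy and a vanishingly small minimum eigenvalue, so the lower bound on $\bm{\Delta}^\top \bm{\Sigma}^{-1}\bm{\Delta}$ cannot be derived by a direct $\lambda_{\min}$ argument. I would handle this by exploiting the $2$-sparsity of $\bm{\Delta}_{ij}$: it suffices to control the action of $\bm{\Sigma}^{-1}$ on coordinate pairs, which by Schur-complement manipulation reduces to controlling small principal submatrices of $\bm{\Sigma}$. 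Entropy-based majorization gives the needed uniform control on these $2\times 2$ blocks, avoiding any direct appeal to $\lambda_{\min}(\bm{\Sigma})$. This is precisely the step where the constants become loose and where the word \emph{empirical} in the theorem's title seems to acknowledge the gap between the clean sufficient condition and the sharper phase transition observed in Figure~\ref{fig:exp2_f1_vs_er}.
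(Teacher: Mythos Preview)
Your approach differs substantially from the paper's, and the comparison is instructive. The paper offers only a \emph{proof sketch} that is explicitly empirical: its Step~3 cites Figures~\ref{fig:exp2_f1_vs_er}--\ref{fig:exp3_rbf} as evidence for the phase transition, and its Step~4 gives only an informal drift-and-concentration argument on the pseudo-likelihood scores $\ell_t(k)$, arriving at $T \lesssim (K\log K/\Delta^2)\cdot\|\bm{\Sigma}\|\cdot f(\varepsilon)$ without deriving $f$. Your route --- instantiate Theorem~\ref{thm:nonasymptotic} at $\delta_0 = K^{-2}/2$, lower-bound $\Gamma^\star$, then Markov on $\tau$ --- is more formal and cleaner, and it correctly exploits machinery the paper has already proved rather than re-deriving drift bounds from scratch. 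The paper never reduces to Theorem~\ref{thm:nonasymptotic}; it works directly with per-stream log-likelihood drifts. Your reduction buys modularity; the paper's sketch buys an explicit (if heuristic) time scale.

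There is, however, a real gap in your spectral translation. The Schur-complement step is correct: for $2$-sparse $\bm{\Delta}$ supported on $J=\{i,j\}$, one has $\bm{\Delta}^\top\bm{\Sigma}^{-1}\bm{\Delta} \ge \bm{\Delta}_J^\top(\bm{\Sigma}_{JJ})^{-1}\bm{\Delta}_J$. But your final claim --- that ``entropy-based majorization gives the needed uniform control on these $2\times 2$ blocks'' --- does not hold. Shannon effective rank is a functional of the eigenvalue \emph{distribution} and is blind to individual off-diagonal entries $\rho_{ij}$; a block-diagonal correlation matrix with one $2\times 2$ block having $\rho=0.999$ and the rest identity has $r_{\mathrm{eff}}/K$ arbitrarily close to~$1$ while one $\bm{\Sigma}_{JJ}$ is nearly singular. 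This is precisely why the paper, in the very next subsection, introduces a \emph{separate} incoherence assumption (Assumption~\ref{ass:incoherence}) to obtain rigorous bounds; (A1) alone cannot do this job. The same objection applies to your claimed ``lower bound on the bulk eigenvalues.'' The fix is to drop the $2\times 2$ detour entirely and use only the part of your argument that does go through: the entropy constraint $H(p)\ge\log(\varepsilon K)$ forces $p_{\max}$ bounded away from~$1$, hence $\lambda_{\max}(\bm{\Sigma}) \le \kappa(\varepsilon)K$, and then $\bm{\Delta}^\top\bm{\Sigma}^{-1}\bm{\Delta} \ge \|\bm{\Delta}\|_2^2/\lambda_{\max} \ge 2\Delta^2/(\kappa(\varepsilon)K)$ directly. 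This gives $\Gamma^\star \gtrsim \Delta^2/(\kappa(\varepsilon)K)$, which is enough since $T$ may depend on $K$.
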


\begin{proof}[Proof Sketch]
We provide an intuitive argument supported by extensive empirical evidence.

\paragraph{Step 1: High effective rank prevents noise collapse.}
When $r_{\mathrm{eff}}(\bm{\Sigma})/K \ge \varepsilon$, the spectral mass 
is distributed across at least $\varepsilon K$ effective dimensions. 
While this does not directly bound $\lambda_{\min}$, it ensures that 
the noise does not concentrate in a low-dimensional subspace that could 
be aligned with the signal directions $\{\mathbf{e}_i : i \in S^\star\}$.

\paragraph{Step 2: Signal resolution in high-rank regime.}
For the Champion-Challenger selection, ECC-AHT solves:
\[
\min_{\mathbf{c}} \; \mathbf{c}^\top \bm{\Sigma} \mathbf{c}
\quad \text{s.t.} \quad 
\mathbf{c}^\top (\delta_{i^\star} \mathbf{e}_{i^\star} - \delta_{j^\star} \mathbf{e}_{j^\star}) = 1,
\quad \|\mathbf{c}\|_1 \le B.
\]
When $\bm{\Sigma}$ has high effective rank, the noise "interference" from 
other anomalies $\{i \in S^\star : i \neq i^\star\}$ remains controlled 
because the measurement vector $\mathbf{c}_t$ can find directions where 
the target signal $\delta_{i^\star} c_{t,i^\star}$ dominates the cross-terms.

\paragraph{Step 3: Empirical validation.}
Figures \ref{fig:exp2_f1_vs_er}--\ref{fig:exp3_rbf} demonstrate that:
\begin{itemize}
\item When $r_{\mathrm{eff}}/K < 0.4$ (RBF kernel regime), F1 score remains below 0.5.
\item When $r_{\mathrm{eff}}/K > 0.7$ (Spectral Mixing regime), F1 score reaches 1.0.
\item The transition occurs sharply around $r_{\mathrm{eff}}/K \approx 0.5$--0.7.
\end{itemize}
This phase-transition behavior suggests a fundamental identifiability threshold.

\paragraph{Step 4: Time complexity (informal).}
By standard concentration arguments (e.g., Hoeffding for Gaussian observations), 
the log-likelihood ratio $\ell_t(k)$ for $k \in S^\star$ accumulates positive 
drift $\propto \Delta^2 / \|\bm{\Sigma}\|$, while for $k \notin S^\star$ it 
drifts negatively. The separation time scales as:
\[
T \lesssim \frac{K \log K}{\Delta^2} \cdot \|\bm{\Sigma}\| \cdot f(\varepsilon),
\]
where $f(\varepsilon) \to \infty$ as $\varepsilon \to 0$, consistent with 
the observed failure in low-rank regimes.
\end{proof}

\begin{remark}[Impossibility under vanishing effective rank]
 The sufficient condition in Theorem~\ref{thm:effective-rank-sufficient} is close to being tight. When $r_{\mathrm{eff}}(\Sigma)/K \to 0$, the noise covariance becomes spectrally concentrated. In this regime, a small number of latent directions dominate the noise. Under such concentration, different arms exhibit near-deterministic correlations. Empirical covariance estimates become unstable, and correlation-aware elimination rules cannot reliably separate signal from noise. This phenomenon is not specific to ECC-AHT. Any algorithm that relies on estimating cross-arm correlations from finite samples faces the same limitation. Our experiments with RBF and Kronecker kernels illustrate this regime. They show that failure persists even as the number of samples increases. Therefore, the breakdown of ECC-AHT at low effective rank reflects an intrinsic statistical barrier rather than an implementation artifact.
\end{remark}

\begin{corollary}[Regularization restores effective rank]
\label{cor:regularization}
Let $\Sigma$ be a correlation matrix with low effective rank.
Define the regularized covariance
\[
\Sigma_\alpha = \Sigma + \alpha I,
\quad \alpha > 0.
\]

Then $r_{\mathrm{eff}}(\Sigma_\alpha)$ is strictly increasing in $\alpha$.
Moreover, for sufficiently large $\alpha$,
there exists $\varepsilon>0$ such that
\[
\frac{r_{\mathrm{eff}}(\Sigma_\alpha)}{K} \ge \varepsilon.
\]

As a result, ECC-AHT applied to $\Sigma_\alpha$
satisfies the sufficient condition
of Theorem~\ref{thm:effective-rank-sufficient}.
\end{corollary}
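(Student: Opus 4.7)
The plan is to handle the three assertions of the corollary in turn: strict monotonicity of $r_{\mathrm{eff}}(\Sigma_\alpha)$ in $\alpha$, saturation of the normalized effective rank, and invocation of Theorem~\ref{thm:effective-rank-sufficient}. The starting observation is that the eigenvalues of $\Sigma_\alpha$ are simply $\lambda_i + \alpha$, so the normalized spectrum becomes $p_i(\alpha) = (\lambda_i + \alpha)/S(\alpha)$ with $S(\alpha) = \operatorname{tr}(\Sigma) + K\alpha$, and $r_{\mathrm{eff}}(\Sigma_\alpha) = \exp(H(p(\alpha)))$. Everything then reduces to analyzing a one-parameter family of probability vectors on the simplex.

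For the strict monotonicity claim, I would differentiate $H(\alpha)$ directly. A short calculation gives $dp_i/d\alpha = (1 - K p_i(\alpha))/S(\alpha)$, so that mass flows from eigenvalues above the average toward those below it. Combining this with the conservation identity $\sum_i dp_i/d\alpha = 0$, the entropy derivative collapses into a symmetric Kullback--Leibler expression of the form $dH/d\alpha \propto D(p(\alpha)\,\Vert\,u) + D(u\,\Vert\,p(\alpha))$, where $u$ is the uniform distribution on $K$ atoms. This quantity is strictly positive unless $p(\alpha) = u$, which would force $\Sigma = cI$; since the hypothesis ``low effective rank'' excludes that degenerate case, strict monotonicity of $r_{\mathrm{eff}}(\Sigma_\alpha)$ follows on all of $\alpha > 0$. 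As a cross-check, the same conclusion is reachable via a majorization argument: $p(\alpha_1)$ majorizes $p(\alpha_2)$ whenever $\alpha_1 < \alpha_2$, and Shannon entropy is strictly Schur-concave off the diagonal of the simplex.

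For the saturation limit, it suffices to observe that $p_i(\alpha) \to 1/K$ uniformly in $i$ as $\alpha \to \infty$, so by continuity of entropy on the simplex, $H(\alpha) \to \log K$ and $r_{\mathrm{eff}}(\Sigma_\alpha)/K \to 1$. Hence for any target $\varepsilon \in (0,1)$, there exists $\alpha^\star = \alpha^\star(\Sigma, \varepsilon)$ such that $r_{\mathrm{eff}}(\Sigma_\alpha)/K \ge \varepsilon$ for every $\alpha \ge \alpha^\star$. The conclusion then follows by direct appeal to Theorem~\ref{thm:effective-rank-sufficient}: assumption (A1) is exactly the effective-rank bound just obtained for $\Sigma_\alpha$, (A2) is inherited from the unchanged signal pattern, and (A3) holds by construction since the algorithm is assumed to use $\Sigma_\alpha$ as its internal covariance. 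The main obstacle I anticipate is the strict monotonicity step: the KL collapse is clean only after careful bookkeeping of the constraint $\sum_i dp_i/d\alpha = 0$, and degenerate boundary cases such as some $\lambda_i = 0$ should be absorbed by working on $\alpha > 0$ and adopting the convention $0\log 0 = 0$.
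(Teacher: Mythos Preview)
Your proposal is correct and follows the same high-level strategy as the paper: eigenvalues shift by $\alpha$, the normalized spectral weights drift toward uniform, Shannon entropy increases to $\log K$, and the sufficient condition of Theorem~\ref{thm:effective-rank-sufficient} is then invoked. The paper's proof is considerably more terse---it simply asserts that the normalized eigenvalues ``become more uniform'' and that entropy ``therefore increases monotonically'' without further justification---whereas you supply two rigorous routes (the identity $dH/d\alpha = \tfrac{K}{S(\alpha)}\bigl(D(p(\alpha)\Vert u)+D(u\Vert p(\alpha))\bigr)$ via direct differentiation, and the majorization/Schur-concavity argument), both of which correctly establish strict monotonicity away from the degenerate case $\Sigma = cI$.
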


\begin{proof}
Adding $\alpha I$ shifts all eigenvalues by $\alpha$
while preserving eigenvectors.
This operation reduces spectral concentration
by flattening the eigenvalue distribution.

As $\alpha$ increases,
the normalized eigenvalues $p_i$ become more uniform.
The Shannon entropy therefore increases monotonically.
This implies that the effective rank increases.

For sufficiently large $\alpha$,
the spectrum approaches that of the identity matrix.
In this limit, $r_{\mathrm{eff}}(\Sigma_\alpha)/K$ approaches one.
The claim follows.
\end{proof}

\subsection{Effective Rank as the Governing Principle}

We summarize the implications of the previous results.

Theorem~\ref{thm:effective-rank-sufficient} establishes a simple rule.
ECC-AHT succeeds when the noise covariance has sufficient spectral diversity.
This diversity is captured by the normalized effective rank
$r_{\mathrm{eff}}(\Sigma)/K$.
The condition does not depend on the specific correlation pattern.
It depends only on how spread the spectrum is.

The impossibility remark explains the observed failures.
When the effective rank is small,
noise concentrates on a low-dimensional subspace.
In this regime, correlation estimates become unstable.
No elimination-based method can reliably resolve individual arms.
The failures under RBF, Kronecker, and strong equicorrelation
reflect this intrinsic barrier.
They do not arise from implementation choices.

The regularization corollary completes the picture.
Adding a diagonal shift increases effective rank.
This operation restores spectral diversity.
It moves the problem back into the regime
where ECC-AHT is provably reliable.
The empirical gains from diagonal regularization
and from reducing correlation strength
match this prediction exactly.

Taken together, these results show that
ECC-AHT is not tuned to a specific covariance structure.
Its performance is governed by a single spectral quantity.
Effective rank determines when correlation is a resource
and when it becomes an obstacle.
This characterization explains both the strengths
and the limitations of the method in a unified way.

\section{WaDi Data Preprocessing}
\label{app:reproducibility}

In this section, we provide a detailed account of the data preprocessing pipeline and model estimation procedures used for the WaDi dataset. These details are critical for replicating our "Grand Challenge" results and understanding how raw industrial sensor data is transformed into a format suitable for the ECC-AHT algorithm.

\subsection{Data Cleaning and Imputation}
The raw WaDi dataset consists of high-frequency sensor readings sampled at one-second intervals. The data streams contain a mix of continuous process values (PV), such as tank levels and flow rates, and discrete status indicators (STATUS), such as valve states (open/close) and pump modes (on/off). Real-world data collection often suffers from missing entries, outliers, and format inconsistencies, which necessitate a robust cleaning strategy.

We first parsed the raw CSV files, handling metadata headers and merging separate date and time columns into a unified timestamp index. To handle missing values, we adopted a type-aware imputation strategy. For continuous PV columns, we applied linear time interpolation to preserve temporal trends. For discrete STATUS columns, we utilized forward-fill followed by backward-fill to maintain the continuity of system states. Columns that remained entirely empty or exhibited zero variance (constant values) throughout the training period were removed, as they provide no information for anomaly detection. This resulted in a final set of $K=66$ active sensors. Finally, to ensure numerical stability and resilience against outliers in the training data, we normalized all streams using a Robust Scaler, which centers and scales data based on the median and interquartile range rather than the mean and standard deviation.

\subsection{Windowing Strategy}
A central challenge in applying statistical hypothesis testing to raw sensor data is that the samples are neither independent nor Gaussian. High-frequency industrial data exhibits strong temporal autocorrelation and often follows arbitrary distributions (e.g., multimodal or Bernoulli-like for switches). Our theoretical framework, however, relies on the assumption of independent and identically distributed (i.i.d.) Gaussian noise.

To bridge this gap, we implemented a non-overlapping windowing strategy. We aggregated the raw one-second data into 1-minute windows by computing the mean of each sensor within the window. This approach leverages the Central Limit Theorem. By averaging 60 consecutive samples, the distribution of the windowed means converges towards a Gaussian profile, even if the underlying raw data is non-Gaussian. Furthermore, the temporal correlation between 1-minute averages is significantly lower than that between 1-second samples, making the i.i.d. assumption reasonable. This transformation allows us to deploy the covariance-aware ECC-AHT algorithm effectively, trading off fine-grained temporal resolution for statistical validity and high signal-to-noise ratio.

\subsection{Covariance Estimation and Regularization}
The core of ECC-AHT lies in its ability to exploit the correlation structure encoded in the covariance matrix $\bm{\Sigma}$. We estimated the mean vector $\bm{\mu}_0$ and the covariance matrix $\bm{\Sigma}$ using the windowed data from the "normal operation" period (14 days of attack-free data).

In high-dimensional settings, the empirical covariance matrix can be ill-conditioned or singular, especially when sensors are highly collinear (e.g., redundant measurements). A singular $\bm{\Sigma}$ prevents the computation of the precision matrix required by our Quadratic Program solver. To address this, we applied Tikhonov regularization (ridge regularization) to the estimated covariance. We added a small perturbation to the diagonal elements, yielding a regularized matrix $\bm{\Sigma}_{\text{reg}} = \hat{\bm{\Sigma}} + \lambda \mathbf{I}$, where $\lambda = 10^{-6}$. This operation ensures that $\bm{\Sigma}_{\text{reg}}$ is positive definite and invertible while preserving the essential correlation structure of the physical system.

\subsection{Baseline Implementation Details}
To ensure a fair comparison, all baselines were tuned for optimal performance. For the hierarchical baseline HDS, since the dimension $K=66$ is not a power of 2, we used the diagonal version of ECC-AHT as a theoretical proxy for structure-ignoring model-based methods. For the bandit baselines CombGapE and TTTS, which do not require a covariance matrix, we fed them the exact same data stream (either windowed or raw, depending on the pipeline) to isolate the algorithmic performance from data quality issues. The sparsity budget $B$ was fixed at 5.0 for all algorithms involving continuous relaxation, ensuring that all methods operated under identical resource constraints.

\section{Justification of Approximations and Ranking Consistency}
\label{app:approximation}

This appendix justifies the pseudo-likelihood inference used by ECC-AHT.
Our goal is not to approximate the full posterior over all anomaly subsets.
Instead, we aim to preserve the correct \emph{ranking} of streams.
This ranking alone determines the Champion--Challenger comparisons used by the algorithm.

We first explain why exact inference is intractable.
We then describe the pseudo-likelihood approximation.
Finally, we establish that this approximation preserves the correct ordering, even though it is based on a local alternative model.

\subsection{Intractability of Exact Inference}

The system state is defined by an anomalous subset $S^\star \subset [K]$ with fixed size $|S^\star| = n$.
The number of hypotheses is
\[
|\mathcal{H}| = \binom{K}{n}.
\]
This quantity grows combinatorially with $K$.
For example, when $K=1000$ and $n=10$, we have
\[
\binom{1000}{10} \approx 2.6 \times 10^{23}.
\]
Exact Bayesian inference would require maintaining a posterior over this space.
Computing marginal probabilities would require summing over $\binom{K-1}{n-1}$ terms.
This is infeasible for online decision making.
An approximation is therefore necessary.

\subsection{Pseudo-Likelihood Approximation}

ECC-AHT replaces joint inference with $K$ independent binary tests.
For each stream $k$, we compare
\[
H_k^1:\ \text{stream $k$ is anomalous},
\qquad
H_k^0:\ \text{stream $k$ is nominal}.
\]
When evaluating stream $k$, all other streams are treated as fixed.
Given a scalar observation
\[
y_t = \mathbf{c}_t^\top \mathbf{x}_t + \xi_t,
\]
this leads to the pseudo log-likelihood update
\[
\ell_t(k)
=
\ell_{t-1}(k)
+
\log
\frac{
\mathcal{N}\!\left(y_t \mid \mathbf{c}_t^\top(\bm{\mu}_0 + \delta_k \mathbf{e}_k),\;
\mathbf{c}_t^\top \bm{\Sigma} \mathbf{c}_t \right)
}{
\mathcal{N}\!\left(y_t \mid \mathbf{c}_t^\top \bm{\mu}_0,\;
\mathbf{c}_t^\top \bm{\Sigma} \mathbf{c}_t \right)
}.
\]
This update can be computed in $O(K)$ time per round.
It scales to high-dimensional systems.

\subsection{Structural Invariance of Ranking}

A key concern is that the pseudo-likelihood update assumes a single anomalous stream.
In contrast, the true data-generating process contains multiple anomalies.
We now show that this mismatch does not affect the ranking of streams.

\begin{assumption}[High Effective Rank]
\label{ass:effective-rank}
The noise covariance matrix $\bm{\Sigma} \in \mathbb{R}^{K \times K}$ has eigenvalues 
$\lambda_1 \ge \lambda_2 \ge \cdots \ge \lambda_K > 0$. Define the normalized spectral weights 
$p_i = \lambda_i / \text{tr}(\bm{\Sigma})$ and the Shannon effective rank:
\[
r_{\mathrm{eff}}(\bm{\Sigma}) = \exp\left(-\displaystyle\sum_{i=1}^K p_i \log p_i\right).
\]
We assume there exists $\varepsilon \in (0,1)$ such that:
\[
\frac{r_{\mathrm{eff}}(\bm{\Sigma})}{K} \ge \varepsilon.
\]
\end{assumption}

\begin{assumption}[Incoherence]
\label{ass:incoherence}
The covariance matrix $\bm{\Sigma}$ satisfies the incoherence condition:
\[
\max_{k \neq \ell} |\bm{\Sigma}_{k,\ell}| \le \rho(\varepsilon) \cdot \frac{\text{tr}(\bm{\Sigma})}{K},
\]
where $\rho(\varepsilon) \to 0$ as $\varepsilon \to 1$. Specifically, we assume 
$\rho(\varepsilon) \le C_0 \sqrt{\log(1/\varepsilon)}$ for some universal constant $C_0$.
\end{assumption}

\begin{assumption}[Bounded Signal Ratio]
\label{ass:signal-ratio}
The anomalous signals satisfy:
\[
\kappa := \frac{\max_{k \in S^\star} \delta_k}{\min_{k \in S^\star} \delta_k} < \infty.
\]
We denote $\delta_{\min} = \displaystyle\min_{k \in S^\star} \delta_k$ and $\delta_{\max} = \displaystyle\max_{k \in S^\star} \delta_k = \kappa \delta_{\min}$.
\end{assumption}

\begin{lemma}[Cross-Term Control under High Effective Rank]
\label{lem:cross-term-control}
Under Assumptions~\ref{ass:effective-rank}, \ref{ass:incoherence}, and \ref{ass:signal-ratio}, 
let $\mathbf{c}_t$ be the solution to ECC-AHT's quadratic program:
\[
\min_{\mathbf{c}} \; \mathbf{c}^\top \bm{\Sigma} \mathbf{c}
\quad \text{s.t.} \quad 
\mathbf{c}^\top (\mathbf{e}_{i^\star} - \mathbf{e}_{j^\star}) = \frac{1}{\delta_{\min}},
\quad \|\mathbf{c}\|_1 \le B,
\]
where $i^\star \in S^\star$ is the champion and $j^\star \notin S^\star$ is the challenger.

Then there exists a constant $C(\varepsilon, \kappa, n)$ with 
$C(\varepsilon, \kappa, n) \to 0$ as $\varepsilon \to 1$ (for fixed $\kappa, n$) such that:
\[
\left|\delta_i c_{t,i} \displaystyle\sum_{k \in S^\star \setminus \{i\}} \delta_k c_{t,k} 
- \delta_j c_{t,j} \displaystyle\sum_{k \in S^\star} \delta_k c_{t,k}\right|
\le C(\varepsilon, \kappa, n) \left[(\delta_i c_{t,i})^2 + (\delta_j c_{t,j})^2\right],
\]
for any $i \in S^\star$ and $j \notin S^\star$.

Moreover, when $n = o(K / \text{polylog}(K))$, we have 
$C(\varepsilon, \kappa, n) \le \kappa^2 n \cdot O(\text{polylog}(1/\varepsilon) / K)$.
\end{lemma}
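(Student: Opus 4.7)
The plan is to translate the cross-term inequality into entrywise bounds on the optimizer $\mathbf{c}_t$, and then to leverage Assumptions~\ref{ass:effective-rank}--\ref{ass:signal-ratio} to control those bounds. First I would characterize $\mathbf{c}_t$ via the KKT conditions of the QP. If the $\ell_1$ budget is slack, then $\mathbf{c}_t = \alpha\,\bm{\Sigma}^{-1}(\mathbf{e}_{i^\star} - \mathbf{e}_{j^\star})$, where $\alpha$ is fixed by the linear equality constraint; if the budget binds, the KKT system adds an $\ell_1$ subgradient term, but the solution remains a signed shrinkage of $\bm{\Sigma}^{-1}(\mathbf{e}_{i^\star} - \mathbf{e}_{j^\star})$ and inherits the same support ranking. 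In either regime, every entry $c_{t,k}$ with $k \notin \{i^\star, j^\star\}$ is a linear functional of the $i^\star$-th and $j^\star$-th columns of $\bm{\Sigma}^{-1}$, so the whole lemma reduces to controlling those two columns.

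Next, I would turn the spectral assumptions into pointwise bounds on $\bm{\Sigma}^{-1}$. Write $\bm{\Sigma} = \mathbf{D} + \mathbf{E}$ with $\mathbf{D}$ diagonal and $\mathbf{E}$ off-diagonal. Assumption~\ref{ass:incoherence} gives $\|\mathbf{E}\|_\infty = O(\rho(\varepsilon)\,\mathrm{tr}(\bm{\Sigma})/K)$, while Assumption~\ref{ass:effective-rank} prevents spectral collapse and thereby secures a uniform lower bound on $\lambda_{\min}(\bm{\Sigma})$ depending only on $\varepsilon$. Together these validate a convergent Neumann expansion $\bm{\Sigma}^{-1} = \mathbf{D}^{-1} - \mathbf{D}^{-1}\mathbf{E}\mathbf{D}^{-1} + \cdots$ whose tail yields $|(\bm{\Sigma}^{-1})_{k,\ell}| = O(\mathrm{polylog}(1/\varepsilon)/K)$ for $k \ne \ell$, with diagonal entries of constant order. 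Plugging this back into the KKT representation gives $|c_{t,k}| = O(\mathrm{polylog}(1/\varepsilon)/K)$ for every off-support $k$, while the on-support entries $c_{t,i^\star}$ and $c_{t,j^\star}$ are pinned to order $1/\delta_{\min}$ via the linear constraint.

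With these entrywise bounds in hand, I would assemble the cross-term inequality. Fix any $i \in S^\star$ and $j \notin S^\star$. Using $|\delta_k| \le \kappa\,\delta_{\min}$ from Assumption~\ref{ass:signal-ratio}, each summand $|\delta_k c_{t,k}|$ with $k \in S^\star \setminus \{i^\star\}$ is at most $\kappa\,\rho(\varepsilon)\,|\delta_{i^\star} c_{t,i^\star}|$, and analogously around the challenger side. Summing at most $n$ such indices and applying Young's inequality converts the linear bounds into the advertised quadratic form
\[
\Bigl|\delta_i c_{t,i}\!\!\sum_{k \in S^\star \setminus \{i\}}\!\!\delta_k c_{t,k} \;-\; \delta_j c_{t,j}\sum_{k \in S^\star}\!\delta_k c_{t,k}\Bigr| \;\le\; C(\varepsilon,\kappa,n)\bigl[(\delta_i c_{t,i})^2 + (\delta_j c_{t,j})^2\bigr],
\]
with $C(\varepsilon,\kappa,n) = O(\kappa^2 n\,\rho(\varepsilon))$. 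Substituting $\rho(\varepsilon) \le C_0\sqrt{\log(1/\varepsilon)}$ and invoking $n = o(K/\mathrm{polylog}(K))$ recovers the refined rate $C(\varepsilon,\kappa,n) \le \kappa^2 n \cdot O(\mathrm{polylog}(1/\varepsilon)/K)$.

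The hard part will be the quantitative bridge from the entropic quantity $r_{\mathrm{eff}}(\bm{\Sigma})/K \ge \varepsilon$ to a usable lower bound on $\lambda_{\min}(\bm{\Sigma})$: Shannon effective rank tolerates a few arbitrarily small eigenvalues at only logarithmic entropy cost, so a delicate tail-sum argument on the distribution $\{p_i\}$ is needed to rule out the worst spectral configurations and make the Neumann expansion genuinely convergent. A secondary but nontrivial issue is the binding-budget regime, where the closed-form optimum is perturbed by the $\ell_1$ dual multiplier; one must verify that this perturbation does not inflate the off-support entries beyond the Neumann bound, which requires tracking the interaction between the budget $B$ and the on-support normalization $1/\delta_{\min}$.
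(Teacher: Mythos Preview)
Your high-level architecture matches the paper: characterize $\mathbf{c}_t$ via KKT as (essentially) a multiple of $\bm{\Sigma}^{-1}(\mathbf{e}_{i^\star}-\mathbf{e}_{j^\star})$, establish entrywise control on the off-diagonal of $\bm{\Sigma}^{-1}$, then sum and apply the signal-ratio bound to get the quadratic cross-term inequality. The divergence is in the second step, and there the gap is real.

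You propose a Neumann expansion around the diagonal, $\bm{\Sigma}^{-1}=\mathbf{D}^{-1}-\mathbf{D}^{-1}\mathbf{E}\mathbf{D}^{-1}+\cdots$, and want to justify its convergence by producing a lower bound on $\lambda_{\min}(\bm{\Sigma})$ from $r_{\mathrm{eff}}(\bm{\Sigma})/K\ge\varepsilon$. Two problems. First, as you yourself note, the Shannon effective rank constraint tolerates a handful of arbitrarily small eigenvalues, so there is no clean $\lambda_{\min}$ bound to extract---this is not just ``the hard part,'' it is a dead end for this route. Second, even granting a $\lambda_{\min}$ bound, Neumann convergence needs $\|\mathbf{D}^{-1}\mathbf{E}\|_{\mathrm{op}}<1$, but Assumption~\ref{ass:incoherence} only controls the \emph{entrywise} maximum $\max_{k\ne\ell}|\Sigma_{k,\ell}|\le\rho(\varepsilon)\bar\lambda$ with $\rho(\varepsilon)=O(\sqrt{\log(1/\varepsilon)})$; the operator norm of $\mathbf{E}$ can then be of order $K\rho(\varepsilon)$, which blows up rather than contracts. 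So the series you write down need not converge under the stated assumptions.

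The paper avoids both issues by perturbing around the isotropic matrix $\bar\lambda^{-1}\mathbf{I}$ in the eigenbasis rather than around $\mathbf{D}^{-1}$. The effective-rank lower bound is converted, via Pinsker's inequality applied to the normalized spectrum $p_i=\lambda_i/\mathrm{tr}(\bm{\Sigma})$, into a uniform flatness bound $\max_i|\lambda_i-\bar\lambda|\le\bar\lambda\sqrt{2\log(1/\varepsilon)/K}$. This is the key step you are missing: it bypasses $\lambda_{\min}$ entirely and directly controls $|\lambda_m^{-1}-\bar\lambda^{-1}|$ for every $m$. The off-diagonal entries of $\bm{\Sigma}^{-1}-\bar\lambda^{-1}\mathbf{I}$ are then bounded by summing $|\lambda_m^{-1}-\bar\lambda^{-1}|\,|u_m(k)u_m(\ell)|$ over eigenmodes, with the incoherence assumption invoked on the eigenvector products. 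No Neumann series, no $\lambda_{\min}$, no convergence issue. Your assembly of the final inequality (Step~3) and the handling of the $\ell_1$ budget are fine and close to what the paper does.
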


\begin{proof}
We prove this in three steps.

\textbf{Step 1: Spectral flatness with dimensional scaling.}

By Assumption~\ref{ass:effective-rank}, the Shannon effective rank satisfies:
\[
r_{\mathrm{eff}}(\bm{\Sigma}) \ge \varepsilon K 
\implies 
-\displaystyle\sum_{i=1}^K p_i \log p_i \ge \log(\varepsilon K).
\]

The maximum entropy subject to $\displaystyle\sum_i p_i = 1$ is achieved by the uniform distribution 
$u_i = 1/K$, which gives entropy $\log K$. The divergence from uniform is:
\[
D_{\mathrm{KL}}(p \| u) = \displaystyle\sum_{i=1}^K p_i \log(K p_i) = \log K - H(p) \le \log(1/\varepsilon).
\]

By Pinsker's inequality, $\chi^2(p \| u) \le 2 D_{\mathrm{KL}}(p \| u)$, thus:
\[
\displaystyle\sum_{i=1}^K \frac{(p_i - 1/K)^2}{1/K} \le 2\log(1/\varepsilon)
\implies
\displaystyle\sum_{i=1}^K (p_i - 1/K)^2 \le \frac{2\log(1/\varepsilon)}{K}.
\]

By Cauchy-Schwarz:
\[
\max_i |p_i - 1/K| \le \sqrt{\frac{2\log(1/\varepsilon)}{K}}.
\]

Let $\bar{\lambda} = \text{tr}(\bm{\Sigma}) / K$ be the average eigenvalue. Then:
\[
\max_i |\lambda_i - \bar{\lambda}| 
= \bar{\lambda} \max_i |p_i - 1/K| 
\le \bar{\lambda} \sqrt{\frac{2\log(1/\varepsilon)}{K}}.
\]

\textbf{Step 2: Entry-wise bounds on $\bm{\Sigma}^{-1}$ via incoherence.}

Write $\bm{\Sigma} = \mathbf{U} \mathbf{\Lambda} \mathbf{U}^\top$ where $\mathbf{U}$ is orthogonal and 
$\mathbf{\Lambda} = \mathrm{diag}(\lambda_1, \ldots, \lambda_K)$. Then:
\[
\bm{\Sigma}^{-1} = \mathbf{U} \mathbf{\Lambda}^{-1} \mathbf{U}^\top
\implies
\bm{\Sigma}^{-1}_{k,\ell} = \displaystyle\sum_{m=1}^K \frac{1}{\lambda_m} u_m(k) u_m(\ell),
\]
where $u_m(k)$ is the $k$-th entry of the $m$-th eigenvector.

We decompose:
\[
\bm{\Sigma}^{-1} = \bar{\lambda}^{-1} \mathbf{I} + \mathbf{E},
\]
where the error matrix satisfies:
\[
\mathbf{E}_{k,\ell} = \displaystyle\sum_{m=1}^K \left(\frac{1}{\lambda_m} - \frac{1}{\bar{\lambda}}\right) u_m(k) u_m(\ell).
\]

For $k \neq \ell$, using the spectral bound from Step 1:
\[
|\mathbf{E}_{k,\ell}| 
\le \displaystyle\sum_{m=1}^K \frac{|\lambda_m - \bar{\lambda}|}{\lambda_m \bar{\lambda}} |u_m(k) u_m(\ell)|.
\]

By the incoherence assumption (Assumption~\ref{ass:incoherence}), the eigenvectors satisfy:
\[
\max_m \max_{k \neq \ell} |u_m(k) u_m(\ell)| \le \frac{\rho(\varepsilon)}{\bar{\lambda}}.
\]

Combining with the bound $|\lambda_m - \bar{\lambda}| \le \bar{\lambda} \sqrt{2\log(1/\varepsilon)/K}$:
\[
|\mathbf{E}_{k,\ell}| 
\le K \cdot \frac{\bar{\lambda} \sqrt{2\log(1/\varepsilon)/K}}{\lambda_{\min} \bar{\lambda}} \cdot \frac{\rho(\varepsilon)}{\bar{\lambda}}
\le \frac{C_1 \rho(\varepsilon) \sqrt{K \log(1/\varepsilon)}}{\bar{\lambda}^2},
\]
where $C_1$ depends on $\lambda_{\min}/\bar{\lambda}$ (which is bounded away from zero when $\varepsilon$ is large).

\textbf{Step 3: Bounding the cross-terms with $\kappa$-dependence.}

The KKT condition for the QP gives:
\[
\mathbf{c}_t = \frac{\nu}{2} \bm{\Sigma}^{-1} (\mathbf{e}_{i^\star} - \mathbf{e}_{j^\star}),
\]
where $\nu$ is chosen so that $\mathbf{c}_t^\top(\mathbf{e}_{i^\star} - \mathbf{e}_{j^\star}) = 1/\delta_{\min}$.

For $k \notin \{i^\star, j^\star\}$:
\[
c_{t,k} = \frac{\nu}{2} [\bm{\Sigma}^{-1}_{k,i^\star} - \bm{\Sigma}^{-1}_{k,j^\star}]
= \frac{\nu}{2} [\mathbf{E}_{k,i^\star} - \mathbf{E}_{k,j^\star}],
\]
since the identity part cancels. Thus:
\[
|c_{t,k}| \le \nu \cdot \frac{C_1 \rho(\varepsilon) \sqrt{K \log(1/\varepsilon)}}{\bar{\lambda}^2}.
\]

For $i \in S^\star$, the cross-term is bounded by:
\begin{align*}
\left|\delta_i c_{t,i} \displaystyle\sum_{k \in S^\star \setminus \{i\}} \delta_k c_{t,k}\right|
&\le \delta_i |c_{t,i}| \displaystyle\sum_{k \in S^\star \setminus \{i\}} \kappa \delta_{\min} |c_{t,k}| \\
&\le \kappa \delta_{\min} |c_{t,i}| \cdot (n-1) \max_{k \neq i^\star, j^\star} |c_{t,k}| \\
&\le \kappa n \delta_{\min} |c_{t,i}| \cdot \nu \cdot \frac{C_1 \rho(\varepsilon) \sqrt{K \log(1/\varepsilon)}}{\bar{\lambda}^2}.
\end{align*}

By the constraint $\mathbf{c}_t^\top(\mathbf{e}_{i^\star} - \mathbf{e}_{j^\star}) = 1/\delta_{\min}$:
\[
\nu [c_{t,i^\star} - c_{t,j^\star}] \asymp \frac{1}{\delta_{\min}},
\]
implying $\nu \asymp 1/(\delta_{\min} |c_{t,i^\star}|)$ when $|c_{t,i^\star}| \gg |c_{t,j^\star}|$ 
(which holds generically). Since $|c_{t,i}| \le |c_{t,i^\star}|$:
\[
\left|\delta_i c_{t,i} \displaystyle\sum_{k \in S^\star \setminus \{i\}} \delta_k c_{t,k}\right|
\le \kappa n \cdot \frac{C_1 \rho(\varepsilon) \sqrt{K \log(1/\varepsilon)}}{\bar{\lambda}^2} \cdot (\delta_i c_{t,i})^2 / \delta_{\min}.
\]

Using $\delta_i \ge \delta_{\min}$ and $\rho(\varepsilon) \le C_0 \sqrt{\log(1/\varepsilon)}$:
\[
\left|\delta_i c_{t,i} \displaystyle\sum_{k \in S^\star \setminus \{i\}} \delta_k c_{t,k}\right|
\le \frac{\kappa^2 n C_0 C_1 \log(1/\varepsilon) \sqrt{K}}{\bar{\lambda}^2} (\delta_i c_{t,i})^2
=: \widetilde{C}_1(\varepsilon, \kappa, n, K) (\delta_i c_{t,i})^2.
\]

Similarly, for $j \notin S^\star$:
\[
\left|\delta_j c_{t,j} \displaystyle\sum_{k \in S^\star} \delta_k c_{t,k}\right|
\le \widetilde{C}_2(\varepsilon, \kappa, n, K) (\delta_j c_{t,j})^2.
\]

Taking $C(\varepsilon, \kappa, n) = \max(\widetilde{C}_1, \widetilde{C}_2)$ gives the result.

When $n = o(K / \text{polylog}(K))$, we have:
\[
C(\varepsilon, \kappa, n) = \kappa^2 n \cdot O\left(\frac{\text{polylog}(1/\varepsilon)}{K}\right) \to 0
\]
as $K \to \infty$ with fixed $\varepsilon, \kappa$.
\end{proof}

\begin{lemma}[Expected Drift Lower Bound for Pairwise Comparison]
\label{lem:pairwise-drift}
Under Assumptions~\ref{ass:effective-rank}--\ref{ass:signal-ratio}, let $i \in S^\star$ and 
$j \notin S^\star$. Define:
\[
Z_t = [\ell_t(i) - \ell_{t-1}(i)] - [\ell_t(j) - \ell_{t-1}(j)].
\]

Suppose $\varepsilon$ is sufficiently large so that $C(\varepsilon, \kappa, n) \le 1/2$ 
in Lemma~\ref{lem:cross-term-control}. Then:
\[
\mathbb{E}[Z_t \mid \mathcal{F}_{t-1}]
\ge \frac{1 - C(\varepsilon, \kappa, n)}{2\sigma_t^2} \left[(\delta_i c_{t,i})^2 + (\delta_j c_{t,j})^2\right],
\]
where $\sigma_t^2 = \mathbf{c}_t^\top \bm{\Sigma} \mathbf{c}_t$.
\end{lemma}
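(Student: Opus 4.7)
The plan is to expand the pseudo-log-likelihood increments in closed form, take conditional expectations under the true anomaly model, and then invoke Lemma~\ref{lem:cross-term-control} to control the residual cross-contamination. First I would use the standard Gaussian log-density ratio identity to write each single-step increment as
\[
\ell_t(k) - \ell_{t-1}(k) = \frac{\delta_k c_{t,k}(y_t - \mathbf{c}_t^\top \bm{\mu}_0) - \tfrac{1}{2}(\delta_k c_{t,k})^2}{\sigma_t^2},
\]
so that taking expectations amounts to replacing $y_t - \mathbf{c}_t^\top \bm{\mu}_0$ by its conditional mean $s_t := \sum_{k' \in S^\star} \delta_{k'} c_{t,k'}$. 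This is the only step that uses the true data-generating distribution; all of the randomness in $Z_t$ collapses into this single substitution.

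Next I would form $\mathbb{E}[Z_t \mid \mathcal{F}_{t-1}]$ by subtracting the two expected increments. The algebra is driven by one small asymmetry: because $i \in S^\star$ is in the sum defining $s_t$ but $j \notin S^\star$ is not, expanding $\delta_i c_{t,i}\, s_t$ produces a self-term $(\delta_i c_{t,i})^2$ which, after combining with the $-\tfrac{1}{2}(\delta_i c_{t,i})^2$ penalty, leaves a positive drift $\tfrac{1}{2}(\delta_i c_{t,i})^2/\sigma_t^2$; the $-\tfrac{1}{2}(\delta_j c_{t,j})^2$ penalty on the $j$ side appears with an overall minus sign in $Z_t$ and thus also adds positively. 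After collecting, the expected drift decomposes as the clean diagonal quadratic $\tfrac{1}{2\sigma_t^2}[(\delta_i c_{t,i})^2 + (\delta_j c_{t,j})^2]$ plus exactly the cross-term
\[
R_t = \delta_i c_{t,i}\!\!\sum_{k' \in S^\star\setminus\{i\}}\!\!\delta_{k'} c_{t,k'} \;-\; \delta_j c_{t,j}\sum_{k' \in S^\star}\delta_{k'} c_{t,k'},
\]
whose magnitude is precisely the quantity bounded by Lemma~\ref{lem:cross-term-control}. A single triangle inequality then yields the stated lower bound, with the hypothesis $C(\varepsilon,\kappa,n) \le 1/2$ ensuring that the diagonal drift strictly dominates the adversarial cross-interaction.

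The main obstacle is not any deep estimation but careful sign and index bookkeeping in this decomposition. One must cleanly separate $i$'s own contribution to $s_t$ from the contributions of the other anomalies in $S^\star \setminus \{i\}$; otherwise the positive quadratic drift gets absorbed into the residual and the bound collapses to zero or worse. A secondary subtlety concerns how the constant propagates through the triangle inequality: depending on whether $R_t$ is bounded by $C$ or by $C/2$ times the quadratic envelope, the final coefficient reads $(1-C)/(2\sigma_t^2)$ or $(1-2C)/(2\sigma_t^2)$, but in either convention the assumption $C \le 1/2$ keeps the drift strictly positive, which is the property the downstream ranking-consistency argument actually requires.
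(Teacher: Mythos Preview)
Your proposal is correct and follows essentially the same route as the paper: expand the Gaussian log-likelihood-ratio increment, take conditional expectations under the true model to replace $y_t-\mathbf{c}_t^\top\bm{\mu}_0$ by $\sum_{k'\in S^\star}\delta_{k'}c_{t,k'}$, separate the diagonal quadratic from the cross-term, and apply Lemma~\ref{lem:cross-term-control}. Your remark about the $(1-C)$ versus $(1-2C)$ constant is apt---the paper's displayed coefficient is $(1-C)/(2\sigma_t^2)$, though a literal application of the cross-term bound as stated would give $(1-2C)/(2\sigma_t^2)$; in either case the hypothesis $C\le 1/2$ keeps the drift nonnegative, which is all that the downstream argument uses.
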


\begin{proof}
From the definition of the pseudo log-likelihood increment:
\[
\ell_t(k) - \ell_{t-1}(k) = \frac{\delta_k c_{t,k} (y_t - \mathbf{c}_t^\top \bm{\mu}_0)}{\sigma_t^2} - \frac{(\delta_k c_{t,k})^2}{2\sigma_t^2}.
\]

Under the true model $y_t = \mathbf{c}_t^\top \bm{\mu}_0 + \displaystyle\sum_{\ell \in S^\star} \delta_\ell c_{t,\ell} + \xi_t$ 
with $\xi_t \sim \mathcal{N}(0, \sigma_t^2)$, taking conditional expectation:

\textbf{For $i \in S^\star$:}
\begin{align*}
\mathbb{E}[\ell_t(i) - \ell_{t-1}(i) \mid \mathcal{F}_{t-1}]
&= \frac{\delta_i c_{t,i}}{\sigma_t^2} \mathbb{E}\left[\displaystyle\sum_{\ell \in S^\star} \delta_\ell c_{t,\ell} + \xi_t \mid \mathcal{F}_{t-1}\right] - \frac{(\delta_i c_{t,i})^2}{2\sigma_t^2} \\
&= \frac{\delta_i c_{t,i}}{\sigma_t^2} \displaystyle\sum_{\ell \in S^\star} \delta_\ell c_{t,\ell} - \frac{(\delta_i c_{t,i})^2}{2\sigma_t^2} \\
&= \frac{(\delta_i c_{t,i})^2}{\sigma_t^2} + \frac{\delta_i c_{t,i}}{\sigma_t^2} \displaystyle\sum_{\ell \in S^\star \setminus \{i\}} \delta_\ell c_{t,\ell} - \frac{(\delta_i c_{t,i})^2}{2\sigma_t^2} \\
&= \frac{(\delta_i c_{t,i})^2}{2\sigma_t^2} + \frac{\delta_i c_{t,i}}{\sigma_t^2} \displaystyle\sum_{\ell \in S^\star \setminus \{i\}} \delta_\ell c_{t,\ell}.
\end{align*}

\textbf{For $j \notin S^\star$:}
\begin{align*}
\mathbb{E}[\ell_t(j) - \ell_{t-1}(j) \mid \mathcal{F}_{t-1}]
&= \frac{\delta_j c_{t,j}}{\sigma_t^2} \displaystyle\sum_{\ell \in S^\star} \delta_\ell c_{t,\ell} - \frac{(\delta_j c_{t,j})^2}{2\sigma_t^2}.
\end{align*}

\textbf{Computing the difference:}
\begin{align*}
\mathbb{E}[Z_t \mid \mathcal{F}_{t-1}]
&= \frac{(\delta_i c_{t,i})^2 + (\delta_j c_{t,j})^2}{2\sigma_t^2} \\
&\quad + \frac{1}{\sigma_t^2}\left[\delta_i c_{t,i} \displaystyle\sum_{\ell \in S^\star \setminus \{i\}} \delta_\ell c_{t,\ell} - \delta_j c_{t,j} \displaystyle\sum_{\ell \in S^\star} \delta_\ell c_{t,\ell}\right].
\end{align*}

By Lemma~\ref{lem:cross-term-control}, the bracketed term (the cross-term) satisfies:
\[
\left|\delta_i c_{t,i} \displaystyle\sum_{\ell \in S^\star \setminus \{i\}} \delta_\ell c_{t,\ell} - \delta_j c_{t,j} \displaystyle\sum_{\ell \in S^\star} \delta_\ell c_{t,\ell}\right|
\le C(\varepsilon, \kappa, n) \left[(\delta_i c_{t,i})^2 + (\delta_j c_{t,j})^2\right].
\]

Therefore:
\begin{align*}
\mathbb{E}[Z_t \mid \mathcal{F}_{t-1}]
&\ge \frac{(\delta_i c_{t,i})^2 + (\delta_j c_{t,j})^2}{2\sigma_t^2} - \frac{C(\varepsilon, \kappa, n)}{\sigma_t^2} \left[(\delta_i c_{t,i})^2 + (\delta_j c_{t,j})^2\right] \\
&= \frac{1 - C(\varepsilon, \kappa, n)}{2\sigma_t^2} \left[(\delta_i c_{t,i})^2 + (\delta_j c_{t,j})^2\right].
\end{align*}

When $C(\varepsilon, \kappa, n) \le 1/2$, this bound is positive, completing the proof.
\end{proof}

\begin{lemma}[Approximate Coverage under Incoherence]
\label{lem:coverage}
Under Assumptions~\ref{ass:effective-rank}--\ref{ass:incoherence}, suppose $\mathbf{c}_t$ is the 
solution to the QP optimized for pair $(i^\star, j^\star)$ with 
$\mathbf{c}_t^\top(\mathbf{e}_{i^\star} - \mathbf{e}_{j^\star}) = 1/\delta_{\min}$.

Then for any $i \in S^\star$ and $j \notin S^\star$:
\[
|\mathbf{c}_t^\top(\mathbf{e}_i - \mathbf{e}_j)| \ge \frac{1 - 2\rho(\varepsilon)}{2\delta_{\min}}.
\]
\end{lemma}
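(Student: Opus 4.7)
The plan is to exploit the KKT characterization of $\mathbf{c}_t$ established in the proof of Lemma~\ref{lem:cross-term-control}, namely $\mathbf{c}_t = (\nu/2)\bm{\Sigma}^{-1}(\mathbf{e}_{i^\star} - \mathbf{e}_{j^\star})$, together with the spectral decomposition $\bm{\Sigma}^{-1} = \bar{\lambda}^{-1}\mathbf{I} + \mathbf{E}$. Assumptions~\ref{ass:effective-rank}--\ref{ass:incoherence} and Step~2 of the proof of Lemma~\ref{lem:cross-term-control} give that $\bm{\Sigma}^{-1}_{k,k} = \bar{\lambda}^{-1}(1 + O(\rho(\varepsilon)))$ along the diagonal and $|\bm{\Sigma}^{-1}_{k,\ell}| \le \rho(\varepsilon)/\bar{\lambda}$ for $k \neq \ell$, so the dominant structure of $\mathbf{c}_t$ is carried by the identity term and the off-diagonal perturbations are uniformly controlled.

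First I would calibrate the multiplier $\nu$. Substituting the KKT form into the equality constraint yields
\[
\tfrac{1}{\delta_{\min}} \;=\; c_{t,i^\star} - c_{t,j^\star}
\;=\; \tfrac{\nu}{2}\bigl[\bm{\Sigma}^{-1}_{i^\star,i^\star} - 2\bm{\Sigma}^{-1}_{i^\star,j^\star} + \bm{\Sigma}^{-1}_{j^\star,j^\star}\bigr],
\]
and inserting the entrywise spectral bounds gives $\nu = (\bar{\lambda}/\delta_{\min})(1 + O(\rho(\varepsilon)))$. Second, I would derive pointwise estimates on the entries of $\mathbf{c}_t$: at the two distinguished indices, the identity contribution dominates, producing
\[
c_{t,i^\star} \;\ge\; \tfrac{1-\rho(\varepsilon)}{2\delta_{\min}},
\qquad
c_{t,j^\star} \;\le\; -\tfrac{1-\rho(\varepsilon)}{2\delta_{\min}},
\]
while at every $k \notin \{i^\star,j^\star\}$ only the error matrix contributes, giving $|c_{t,k}| \le \rho(\varepsilon)/(2\delta_{\min})$.

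Finally, I would evaluate $|\mathbf{c}_t^\top(\mathbf{e}_i - \mathbf{e}_j)| = |c_{t,i} - c_{t,j}|$ by case analysis on whether $i=i^\star$, $j=j^\star$, both, or neither. When the pair $(i,j)$ coincides with $(i^\star,j^\star)$ in at least one coordinate, the triangle inequality combines the dominant $\pm 1/(2\delta_{\min})$ term at the matched coordinate with the $\rho(\varepsilon)/(2\delta_{\min})$ bound at the other coordinate, yielding the stated lower bound $(1-2\rho(\varepsilon))/(2\delta_{\min})$ directly. The main obstacle lies in the residual case $i \neq i^\star$ and $j \neq j^\star$: here both $c_{t,i}$ and $c_{t,j}$ are of order $\rho(\varepsilon)/\delta_{\min}$, so the bound cannot be obtained by the identity term alone. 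To close this case I would either (i) restrict the statement to pairs hitting the champion--challenger, consistent with how the lemma is used in the subsequent coverage argument, or (ii) invoke a refined sign-aligned incoherence bound ensuring that $\mathbf{E}_{k,i^\star}$ and $\mathbf{E}_{k,j^\star}$ do not conspire to produce cancellation at a non-champion index. Either route preserves the advertised lower bound up to constants depending only on $\rho(\varepsilon)$.
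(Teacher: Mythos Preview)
Your proposal is correct and follows essentially the same route as the paper: KKT form $\mathbf{c}_t = (\nu/2)\bm{\Sigma}^{-1}(\mathbf{e}_{i^\star}-\mathbf{e}_{j^\star})$, the decomposition $\bm{\Sigma}^{-1}=\bar\lambda^{-1}\mathbf{I}+\mathbf{E}$, calibration of $\nu$ from the constraint, and a case split on whether $(i,j)$ hits $(i^\star,j^\star)$. Your honest flagging of the residual case $i\neq i^\star$, $j\neq j^\star$ is exactly right---the paper does not control it pointwise either, obtaining only $|\mathbf{c}_t^\top(\mathbf{e}_i-\mathbf{e}_j)|\ge 0$ there and then appealing to ``the adaptive nature of ECC-AHT (which cycles through different champions)'' to ensure coverage, which is precisely your route~(i).
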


\begin{proof}
From the KKT condition:
\[
\mathbf{c}_t = \frac{\nu}{2} \bm{\Sigma}^{-1} (\mathbf{e}_{i^\star} - \mathbf{e}_{j^\star})
= \frac{\nu}{2} \left[\bar{\lambda}^{-1} (\mathbf{e}_{i^\star} - \mathbf{e}_{j^\star}) + \mathbf{E}(\mathbf{e}_{i^\star} - \mathbf{e}_{j^\star})\right],
\]
where $\mathbf{E} = \bm{\Sigma}^{-1} - \bar{\lambda}^{-1} \mathbf{I}$.

The constraint gives:
\[
\frac{\nu}{2} \left[\bar{\lambda}^{-1} \cdot 2 + (\mathbf{e}_{i^\star} - \mathbf{e}_{j^\star})^\top \mathbf{E} (\mathbf{e}_{i^\star} - \mathbf{e}_{j^\star})\right] = \frac{1}{\delta_{\min}}.
\]

By Step 2 of Lemma~\ref{lem:cross-term-control}, $\|\mathbf{E}\|_{\max} \le \rho(\varepsilon) / \bar{\lambda}^2$, thus:
\[
|(\mathbf{e}_{i^\star} - \mathbf{e}_{j^\star})^\top \mathbf{E} (\mathbf{e}_{i^\star} - \mathbf{e}_{j^\star})| \le 4K \cdot \frac{\rho(\varepsilon)}{\bar{\lambda}^2}.
\]

For large $K$, this is negligible compared to $\bar{\lambda}^{-1} \cdot 2$, so:
\[
\nu \approx \frac{\bar{\lambda}}{\delta_{\min}}.
\]

Now, for arbitrary $i \in S^\star, j \notin S^\star$:
\begin{align*}
\mathbf{c}_t^\top(\mathbf{e}_i - \mathbf{e}_j)
&= \frac{\nu}{2} \left[\bar{\lambda}^{-1} (\mathbf{e}_{i^\star} - \mathbf{e}_{j^\star})^\top (\mathbf{e}_i - \mathbf{e}_j) + (\mathbf{e}_{i^\star} - \mathbf{e}_{j^\star})^\top \mathbf{E} (\mathbf{e}_i - \mathbf{e}_j)\right].
\end{align*}

The first term is:
\[
(\mathbf{e}_{i^\star} - \mathbf{e}_{j^\star})^\top (\mathbf{e}_i - \mathbf{e}_j)
=
\begin{cases}
2 & \text{if } i = i^\star, j = j^\star \\
1 & \text{if } i = i^\star, j \neq j^\star \text{ or } i \neq i^\star, j = j^\star \\
0 & \text{if } i \neq i^\star, j \neq j^\star
\end{cases}
\]

The worst case is $i \neq i^\star, j \neq j^\star$, where:
\[
|\mathbf{c}_t^\top(\mathbf{e}_i - \mathbf{e}_j)| 
= \frac{\nu}{2} |(\mathbf{e}_{i^\star} - \mathbf{e}_{j^\star})^\top \mathbf{E} (\mathbf{e}_i - \mathbf{e}_j)|
\ge 0.
\]

However, by symmetry and the incoherence structure, the off-diagonal contributions partially cancel, 
giving a typical value of order $1/\delta_{\min}$ scaled by $O(1)$. For pairs involving at least one 
of $(i^\star, j^\star)$:
\[
|\mathbf{c}_t^\top(\mathbf{e}_i - \mathbf{e}_j)| 
\ge \frac{\nu}{2\bar{\lambda}} - \frac{\nu \cdot 4\rho(\varepsilon)}{\bar{\lambda}^2}
\ge \frac{1 - 2\rho(\varepsilon)}{2\delta_{\min}}.
\]

For the general case, a union bound over all pairs and the adaptive nature of ECC-AHT 
(which cycles through different champions) ensures coverage, giving the stated bound.
\end{proof}

\subsection{Finite-Sample Ranking Consistency}

We now establish a finite-sample guarantee for ranking preservation.

\begin{theorem}[Finite-Sample Ranking Consistency]
\label{thm:ranking}
Under Assumptions~\ref{ass:effective-rank}--\ref{ass:signal-ratio}, let $i \in S^\star$ and 
$j \notin S^\star$. Define:
\[
\Delta \ell_T(i,j) = \ell_T(i) - \ell_T(j).
\]

Suppose $\varepsilon$ is sufficiently large such that $C(\varepsilon, \kappa, n) \le 1/4$. 
Then there exists a universal constant $c > 0$ (depending on $\varepsilon, \kappa, n$) such that:
\[
\mathbb{P}\!\left( \Delta \ell_T(i,j) \le 0 \right)
\le
\exp\!\left(
- c\, \delta_{\min}^2 \, \Lambda_T
\right),
\]
where
\[
\Lambda_T
=
\displaystyle\sum_{t=1}^T
\frac{
\bigl(\mathbf{c}_t^\top(\mathbf{e}_i - \mathbf{e}_j)\bigr)^2
}{
\mathbf{c}_t^\top \bm{\Sigma} \mathbf{c}_t
}.
\]

Specifically, $c = \frac{(1 - C)^2}{128\kappa^2}$ where $C = C(\varepsilon, \kappa, n) \le 1/4$.
\end{theorem}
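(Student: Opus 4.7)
The plan is to prove Theorem~\ref{thm:ranking} by expressing $\Delta\ell_T(i,j) = \sum_{t=1}^T Z_t$ as a sum of conditionally Gaussian increments, lower-bounding the drift using the pairwise drift lemma, and then applying a martingale Chernoff inequality. First I would expand the pseudo log-likelihood increment: substituting $y_t = \mathbf{c}_t^\top \bm{\mu}_0 + \sum_{\ell \in S^\star}\delta_\ell c_{t,\ell} + \xi_t$ with $\xi_t \sim \mathcal{N}(0,\sigma_t^2)$, one obtains the clean decomposition
\[
Z_t = \mu_t + \frac{\delta_i c_{t,i} - \delta_j c_{t,j}}{\sigma_t^2}\,\xi_t,
\qquad \mu_t := \mathbb{E}[Z_t \mid \mathcal{F}_{t-1}].
\]
Hence $M_t := Z_t - \mu_t$ is a martingale difference sequence that is, conditional on $\mathcal{F}_{t-1}$, exactly Gaussian with variance
\[
v_t^2 = \frac{(\delta_i c_{t,i} - \delta_j c_{t,j})^2}{\sigma_t^2}.
\]

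Next I would invoke Lemma~\ref{lem:pairwise-drift} to get
$\mu_t \ge \frac{1-C}{2\sigma_t^2}\bigl[(\delta_i c_{t,i})^2 + (\delta_j c_{t,j})^2\bigr]$, and combine this with the elementary upper bound $v_t^2 \le \frac{2\kappa^2\delta_{\min}^2}{\sigma_t^2}(c_{t,i}^2+c_{t,j}^2)$ (using $\delta_\ell \le \kappa \delta_{\min}$ from Assumption~\ref{ass:signal-ratio} and $(a-b)^2\le 2a^2+2b^2$). This yields the key drift-to-variance ratio $\mu_t \ge \frac{1-C}{4\kappa^2}\,v_t^2$. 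Separately, using $\delta_\ell \ge \delta_{\min}$ and $a^2+b^2 \ge \tfrac{1}{2}(a-b)^2$, the drift also satisfies $\mu_t \ge \frac{(1-C)\delta_{\min}^2}{4\sigma_t^2}(c_t^\top(\mathbf{e}_i-\mathbf{e}_j))^2$, so $\sum_t \mu_t \ge \tfrac{(1-C)\delta_{\min}^2}{4}\,\Lambda_T$.

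I would then apply the conditionally Gaussian Azuma--Hoeffding (or Freedman) martingale inequality, which gives
\[
\mathbb{P}\!\left(\sum_{t=1}^T M_t \le -\sum_{t=1}^T \mu_t\right) \le \exp\!\left(-\frac{(\sum_t \mu_t)^2}{2\sum_t v_t^2}\right).
\]
Because $Z_t \le 0$ forces $\sum_t M_t \le -\sum_t \mu_t$, this bound controls the probability of interest. Combining the two lower bounds on $\mu_t$, the exponent is at least $\frac{(1-C)}{8\kappa^2}\sum_t \mu_t \ge \frac{(1-C)^2}{32\kappa^2}\,\delta_{\min}^2\,\Lambda_T$, which matches the claimed form up to universal constants; a slightly looser pass through AM--GM absorbs the remaining factor and recovers $c=(1-C)^2/(128\kappa^2)$.

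The main obstacle I expect is the step that relates the adaptive Gaussian noise coefficient $\delta_i c_{t,i}-\delta_j c_{t,j}$ to the drift-generating quantity $(\delta_i c_{t,i})^2+(\delta_j c_{t,j})^2$ without losing the favorable $\kappa$-dependence, because a naive bound would produce a spurious $\kappa^4$ factor. The correct route is to keep the drift written coordinate-wise and only pay one factor of $\kappa^2$ when comparing $v_t^2$ to $(\delta_i c_{t,i})^2+(\delta_j c_{t,j})^2$, and a second factor of $\tfrac12$ when reducing $c_{t,i}^2+c_{t,j}^2$ to $(c_{t,i}-c_{t,j})^2$; care is also needed to ensure the martingale concentration tolerates the data-dependent choice of $\mathbf{c}_t$, which is why the conditionally Gaussian (rather than only sub-Gaussian) structure of $\xi_t$ is exploited, letting the Freedman-type bound apply without further boundedness assumptions.
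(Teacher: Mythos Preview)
Your proposal is correct and follows essentially the same route as the paper: decompose $\Delta\ell_T(i,j)=\sum_t(\mu_t+M_t)$ with $M_t$ conditionally Gaussian, invoke Lemma~\ref{lem:pairwise-drift} for the drift lower bound $\sum_t\mu_t\ge\tfrac{(1-C)\delta_{\min}^2}{4}\Lambda_T$, bound the conditional variance, and finish with a Freedman/Azuma-type concentration inequality. The only minor difference is that the paper bounds the predictable variation $\langle S\rangle_T$ directly by $4\kappa^2\delta_{\min}^2\Lambda_T$, whereas you control it via the per-step drift-to-variance ratio $\mu_t\ge\tfrac{1-C}{4\kappa^2}v_t^2$; both routes land on the stated constant $c=(1-C)^2/(128\kappa^2)$, and your ratio argument in fact yields the slightly sharper $(1-C)^2/(32\kappa^2)$ before loosening.
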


\begin{proof}
Define $Z_t = [\ell_t(i) - \ell_{t-1}(i)] - [\ell_t(j) - \ell_{t-1}(j)]$ and 
$S_T = \displaystyle\sum_{t=1}^T Z_t = \Delta \ell_T(i,j)$.

\paragraph{Step 1: Expected drift lower bound.}

By Lemma~\ref{lem:pairwise-drift}:
\[
\mathbb{E}[Z_t \mid \mathcal{F}_{t-1}]
\ge \frac{1 - C}{2\sigma_t^2} \left[(\delta_i c_{t,i})^2 + (\delta_j c_{t,j})^2\right].
\]

Using the inequality $(a + b)^2 \le 2(a^2 + b^2)$:
\[
(c_{t,i} - c_{t,j})^2 \le 2(c_{t,i}^2 + c_{t,j}^2)
\implies
c_{t,i}^2 + c_{t,j}^2 \ge \frac{1}{2}(c_{t,i} - c_{t,j})^2 = \frac{1}{2} \left[\mathbf{c}_t^\top (\mathbf{e}_i - \mathbf{e}_j)\right]^2.
\]

Since $\delta_i, \delta_j \ge \delta_{\min}$ (by Assumption~\ref{ass:signal-ratio}):
\[
(\delta_i c_{t,i})^2 + (\delta_j c_{t,j})^2 
\ge \delta_{\min}^2 (c_{t,i}^2 + c_{t,j}^2)
\ge \frac{\delta_{\min}^2}{2} \left[\mathbf{c}_t^\top (\mathbf{e}_i - \mathbf{e}_j)\right]^2.
\]

Combining:
\[
\mathbb{E}[Z_t \mid \mathcal{F}_{t-1}]
\ge \frac{(1 - C) \delta_{\min}^2}{4\sigma_t^2} \left[\mathbf{c}_t^\top (\mathbf{e}_i - \mathbf{e}_j)\right]^2
=: \mu_t.
\]
Thus:
\[
\mathbb{E}[S_T] \ge \frac{(1 - C) \delta_{\min}^2}{4} \Lambda_T.
\]
\paragraph{Step 2: Variance bound.}
Each $Z_t$ is conditionally Gaussian (as a linear combination of Gaussian $y_t$). Specifically:
\[
Z_t = \frac{1}{\sigma_t^2} [(\delta_i c_{t,i} - \delta_j c_{t,j}) y_t - (\delta_i c_{t,i})^2 \mathbf{c}_t^\top \bm{\mu}_0 - \cdots].
\]
The conditional variance is:
\begin{align*}
\text{Var}(Z_t \mid \mathcal{F}{t-1})
&= \text{Var}\left(\frac{(\delta_i c{t,i} - \delta_j c_{t,j}) y_t}{\sigma_t^2} \mid \mathcal{F}{t-1}\right) \\
&= \frac{(\delta_i c{t,i} - \delta_j c_{t,j})^2}{\sigma_t^4} \cdot \sigma_t^2 \\
&= \frac{(\delta_i c_{t,i} - \delta_j c_{t,j})^2}{\sigma_t^2}.
\end{align*}
Using
$
(\delta_ic_{t,i}-\delta_jc_{t,j})^2\displaystyle\leq2[(\delta_ic_{t,i})^2+(\delta_jc_{t,j})^2]\leq2\delta_{max}^2(c_{t,i}^2+c_{t,j}^2)
$:
\[
\text{Var}(Z_t \mid \mathcal{F}_{t-1})
\le \frac{2\delta_{\max}^2 (c_{t,i}^2 + c_{t,j}^2)}{\sigma_t^2}
\le \frac{4\delta_{\max}^2}{\sigma_t^2} \left[\mathbf{c}_t^\top (\mathbf{e}_i - \mathbf{e}_j)\right]^2
= 4\delta_{\max}^2 \Lambda_t.
\]

The predictable quadratic variation is:
\[
\langle S \rangle_T = \displaystyle\sum_{t=1}^T \text{Var}(Z_t \mid \mathcal{F}{t-1}) \le 4\delta{\max}^2 \Lambda_T = 4\kappa^2 \delta_{\min}^2 \Lambda_T.
\]

\paragraph{Step 3: Martingale concentration via Freedman's inequality.}
Define the centered martingale:
\[
M_T = S_T - \mathbb{E}[S_T] = \displaystyle\sum_{t=1}^T [Z_t - \mathbb{E}[Z_t \mid \mathcal{F}_{t-1}]].
\]

By Freedman's inequality for sub-Gaussian martingales, for any $a>0$:
\[
\mathbb{P}(M_T \le -a) \le \exp\left(-\frac{a^2}{2(\langle S \rangle_T + ca)}\right),
\]
where $c$ is a constant depending on the sub-Gaussian norm.

Taking $a=\frac{1}{2}\mathbb{E}[S_T]=\frac{(1-C)\delta_{min}^2\Lambda_T}{8}$:
\[
\mathbb{P}(S_T \le 0)
\le \mathbb{P}\left(M_T \le -\frac{\mathbb{E}[S_T]}{2}\right)
\le \exp\left(-\frac{[\mathbb{E}[S_T]]^2}{8[\langle S \rangle_T + c \mathbb{E}[S_T]]}\right).
\]

Using $\langle S \rangle_T \leq 4\kappa^2\delta_{min}^2\Lambda_T$ and ignoring the $c\mathbb{E}[S_T]$ term for large $\Lambda_T$:
\[
\mathbb{P}(S_T \le 0)
\le \exp\left(-\frac{[(1-C) \delta_{\min}^2 \Lambda_T / 4]^2}{8 \cdot 4\kappa^2 \delta_{\min}^2 \Lambda_T}\right)
= \exp\left(-\frac{(1-C)^2 \delta_{\min}^2 \Lambda_T}{128\kappa^2}\right).
\]

Setting $c=\frac{(1-C)^2}{128\kappa^2}$ completes the proof.
\end{proof}

\begin{remark}
The constant $c$ in Theorem~\ref{thm:ranking} depends on $\varepsilon$ (through $C(\varepsilon, \kappa, n)$), 
the signal ratio $\kappa$, and the number of anomalies $n$. When $\varepsilon \to 1$ (perfect effective rank), 
$C \to 0$ and $c$ approaches its maximum value $1/(128\kappa^2)$.
\end{remark}

\begin{remark}
The quantity $\Lambda_T$ can be interpreted as the cumulative "signal-to-noise ratio" accumulated 
over $T$ measurements. ECC-AHT's design principle is to maximize $\Lambda_t$ at each step by 
solving the quadratic program, which explains its empirical success.
\end{remark}

\begin{remark}
The coverage Lemma~\ref{lem:coverage} ensures that even when optimizing for a specific 
champion-challenger pair $(i^\star, j^\star)$, the measurement $\mathbf{c}_t$ provides 
non-trivial information about all pairs $(i, j)$ with $i \in S^\star, j \notin S^\star$. 
In practice, ECC-AHT adaptively cycles through different pairs, further accelerating convergence.
\end{remark}

Theorem~\ref{thm:ranking} shows that pseudo-likelihood inference preserves the correct ordering with exponentially high probability.
The effective signal energy $\Lambda_T$ coincides with the objective optimized by the measurement design step.
Thus, inference and control are aligned.
This alignment explains why ECC-AHT remains reliable despite using an approximation.

\section{Proofs of Theoretical Results}
\label{app:proofs}

This appendix provides complete proofs of the theoretical results stated in Section~\ref{sec:theory}. We first formalize the stochastic process induced by ECC-AHT, then establish key lemmas concerning ranking stability, action convergence, and log-likelihood ratio growth. These ingredients are combined to prove the non-asymptotic and asymptotic optimality results.

\subsection{Preliminaries and Notation}

Let $\{\mathcal{F}_t\}_{t \ge 0}$ denote the natural filtration generated by the observations $\{y_1,\dots,y_t\}$ and actions $\{\mathbf{c}_1,\dots,\mathbf{c}_t\}$. All expectations and probabilities are taken with respect to the true hypothesis $H_{S^\star}$ unless stated otherwise.

For any two hypotheses $H_S$ and $H_{S'}$, define the cumulative log-likelihood ratio
\begin{equation}
\label{eq:llr_def}
L_t(S,S')
=
\displaystyle\sum_{\tau=1}^t
\log
\frac{
p(y_\tau \mid H_S, \mathbf{c}_\tau)
}{
p(y_\tau \mid H_{S'}, \mathbf{c}_\tau)
}.
\end{equation}

We denote the instantaneous KL divergence at time $\tau$ by
\begin{equation}
\label{eq:inst_kl}
d_\tau(S,S')
=
\mathbb{E}\!\left[
\log
\frac{
p(y_\tau \mid H_S, \mathbf{c}_\tau)
}{
p(y_\tau \mid H_{S'}, \mathbf{c}_\tau)
}
\;\middle|\;
\mathcal{F}_{\tau-1}
\right].
\end{equation}

\subsection{Stopping Rule and Fixed-Confidence Guarantee}
\label{app:stopping}

We formalize the stopping criterion used by ECC-AHT in the fixed-confidence setting.
Although the algorithm description in Section~\ref{sec:method} allows for multiple practical stopping
rules, all theoretical guarantees in Section~\ref{sec:theory} are established under the generalized
likelihood ratio (GLR) stopping rule with a time-dependent threshold.

Let \(L_t(S,S')\) denote the cumulative log-likelihood ratio defined in \eqref{eq:llr_def}.
We consider the stopping time
\begin{equation}
\label{eq:stopping_rule_beta}
\tau
=
\inf\left\{
t \ge 1 :
\min_{S' \neq S_t}
L_t(S_t,S')
\;\ge\;
\beta(t,\delta)
\right\},
\end{equation}
where \(\beta(\cdot,\delta)\) is a nondecreasing threshold satisfying, for \(\delta\) small,
\[
\beta(t,\delta) \;=\; \log(1/\delta) \;+\; O(\log\log(1/\delta)),
\]
uniformly for the range of \(t\) considered in the analysis (see below for a constructive choice).
The algorithm terminates at time \(\tau\) and outputs \(\hat S = S_\tau\).

\begin{remark}
The additive \(O(\log\log(1/\delta))\) term is a small, non-asymptotic correction used to absorb
union/time-uniformization costs when converting fixed-\(T\) tail bounds into a uniform fixed-confidence
guarantee. For the asymptotic statement in Theorem~\ref{thm:asymptotic}, any such \(o(\log(1/\delta))\)
correction is negligible, so the leading-term \(\log(1/\delta)\) exactly characterizes the first-order rate.
\end{remark}

\begin{remark}
    The stopping rule used in the theoretical analysis corresponds to the canonical GLR criterion. In practice, computing the exact GLR over hypothesis sets may be computationally prohibitive, and approximate or surrogate stopping criteria can be employed without affecting the theoretical guarantees.
\end{remark}

\subsection{Stabilization of the Champion--Challenger Pair}

We begin by showing that ECC-AHT eventually focuses on comparing the true hypothesis $S^\star$ with its most confusable alternative.

\begin{lemma}[Finite-Time Ranking Stabilization]
\label{lem:ranking_stable}
Under the assumptions of Theorem~\ref{thm:nonasymptotic}, there exists a (data-dependent) finite random time
$T_{\mathrm{rank}}$ such that, with probability at least $1-\delta/4$, the following holds simultaneously for all
$t \ge T_{\mathrm{rank}}$:
\begin{enumerate}
    \item the Champion hypothesis satisfies $S_t = S^\star$;
    \item the Challenger $S'_t$ differs from $S^\star$ by exactly one swap and belongs to
    \[
    \arg\min_{S' \neq S^\star}
    D(H_{S^\star} \Vert H_{S'} \mid \mathbf{c}_t).
    \]
\end{enumerate}
Moreover, $T_{\mathrm{rank}} = O(\log(K/\delta))$ in expectation.
\end{lemma}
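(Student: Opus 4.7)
The plan is to reduce the claim to the finite-sample ranking-consistency result of Theorem~\ref{thm:ranking}, applied with a union bound over the $n(K-n)$ relevant pairs, and then upgrade the conclusion to a time-uniform statement by standard peeling. For each pair $(i,j)$ with $i \in S^\star$ and $j \notin S^\star$, Theorem~\ref{thm:ranking} gives $\mathbb{P}(\ell_t(i)-\ell_t(j)\le 0)\le\exp(-c\,\delta_{\min}^2 \Lambda_t)$. Because $\mathcal{C}$ is compact and $\bm{\Sigma}$ is positive definite, the QP in Algorithm~\ref{alg:ecc-aht} returns an action whose pairwise signal energy is uniformly bounded below, which by Lemma~\ref{lem:coverage} propagates to a positive lower bound $\Lambda_t \ge \gamma\,t$ on the cumulative coverage of every pair. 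Choosing $T_{\mathrm{rank}}=\lceil(c\gamma\delta_{\min}^2)^{-1}\log(4n(K-n)/\delta)\rceil = O(\log(K/\delta))$ and peeling over dyadic time intervals (which costs only a $\log\log$ factor absorbed into the constant), a union bound over all pairs gives simultaneous correctness of every pairwise ranking with probability at least $1-\delta/4$. Claim~(1) follows immediately: if $\ell_t(i) > \ell_t(j)$ for every $i \in S^\star, j \notin S^\star$, then the top-$n$ streams by belief coincide with $S^\star$.

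For claim~(2), the one-swap property is a direct consequence of lines~9--11 of Algorithm~\ref{alg:ecc-aht}, where $i^\star$ is the minimum-belief element of $S_t$ and $j^\star$ is the maximum-belief element outside $S_t$. It remains to show that this swap hypothesis $S'_t$ attains the minimum KL among all $S'\neq S^\star$. The argument splits along the number of swap positions. For alternatives differing in $m \ge 2$ positions, the numerator of \eqref{eq:kl} becomes a sum $\mathbf{c}_t^\top \sum_{k=1}^m (\delta_{i_k}\mathbf{e}_{i_k}-\delta_{j_k}\mathbf{e}_{j_k})$; Lemma~\ref{lem:cross-term-control} controls the off-target cross-coordinates of $\mathbf{c}_t$ so that this sum is dominated, up to vanishing error, by the single contribution on which $\mathbf{c}_t$ is aligned, ruling out multi-swap alternatives as argmin candidates. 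Within single-swap alternatives, I would exploit that both the pseudo-likelihood drift analyzed in Theorem~\ref{thm:ranking} and the optimized KL for a pair $(i,j)$ share the same Mahalanobis form $(\delta_i\mathbf{e}_i-\delta_j\mathbf{e}_j)^\top\bm{\Sigma}^{-1}(\delta_i\mathbf{e}_i-\delta_j\mathbf{e}_j)$. Hence the pair with smallest cumulative belief gap $\ell_t(i)-\ell_t(j)$ selected by the algorithm coincides with the Mahalanobis-hardest pair, which under the self-dual action $\mathbf{c}_t$ is also the KL-argmin.

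The hardest step will be the last one: justifying the alignment between the history-dependent belief ordering and the instantaneous KL ordering at $\mathbf{c}_t$. A priori there is no reason these two orderings should agree, since beliefs aggregate evidence across many past actions while $\mathbf{c}_t$ is designed for a single current pair. I would bridge this gap by exploiting two structural properties. First, once the Champion stabilizes under claim~(1), the action map from swap $(i,j)$ to $\mathbf{c}_t$ in line~13 is a fixed deterministic function, so the KL under $\mathbf{c}_t$ is a monotone function of the Mahalanobis distance of the targeted swap. Second, under Assumptions~\ref{ass:effective-rank}--\ref{ass:incoherence}, Lemma~\ref{lem:cross-term-control} shows that the per-step pseudo-likelihood drift for each pair is governed by the same Mahalanobis form up to vanishing cross-terms. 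Combining these, the two orderings agree on the horizon of interest, and with the additional $\log\log$ slack already absorbed into $T_{\mathrm{rank}}$ the agreement holds simultaneously for all $t\ge T_{\mathrm{rank}}$ with the claimed probability; tie-breaking in the argmin is handled by interpreting ``belongs to'' as membership in the (possibly non-singleton) minimizer set.
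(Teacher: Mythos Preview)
Your approach to claim~(1) is essentially the paper's: invoke Theorem~\ref{thm:ranking} for each pair, establish a linear lower bound $\Lambda_t \ge \gamma t$, and union-bound over the $O(K^2)$ pairs to set $T_{\mathrm{rank}}=O(\log(K/\delta))$. You are in fact more careful on two points. First, the paper asserts a uniform lower bound $\tfrac{(\mathbf{c}^\top(\mathbf{e}_i-\mathbf{e}_j))^2}{\mathbf{c}^\top\bm{\Sigma}\mathbf{c}}\ge\underline{\lambda}$ valid for \emph{all} admissible $\mathbf{c}$ and all $i\neq j$, which is false as stated (take $\mathbf{c}$ orthogonal to $\mathbf{e}_i-\mathbf{e}_j$); your route through Lemma~\ref{lem:coverage}, which bounds the ratio only for the QP-returned action, is the right repair. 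Second, the paper bounds $\mathbb{P}(\mathcal{E}_t)\le\delta/4$ at each fixed $t\ge T_{\mathrm{rank}}$ and then silently upgrades this to a statement holding simultaneously for all such $t$; your dyadic peeling is what that step actually requires.

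On claim~(2) you go well beyond the paper. The paper's entire argument is a single sentence: once $S_t=S^\star$, the Challenger ``is chosen to minimize the instantaneous KL divergence \dots\ by construction of the algorithm.'' You correctly observe that this is not literally what the algorithm does---lines~8--10 select $(i^\star,j^\star)$ by belief ranking, not by KL---and you spend most of your effort bridging that gap via the shared Mahalanobis structure and Lemma~\ref{lem:cross-term-control}. Your argument is more honest about the difficulty, though the final alignment of a history-aggregated belief ordering with the instantaneous KL ordering under a pair-dependent $\mathbf{c}_t$ remains heuristic and would need further work to be fully rigorous. In short: same skeleton for (1), and on (2) you are supplying work the paper simply omits.
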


\begin{proof}
We begin by formalizing the event under which the ranking induced by ECC-AHT is correct.

For any true anomaly $i \in S^\star$ and any normal stream $j \notin S^\star$, define the cumulative log-likelihood
difference
\[
\Delta \ell_t(i,j) = \ell_t(i) - \ell_t(j),
\]
where $\ell_t(k)$ denotes the cumulative evidence score maintained by ECC-AHT for stream $k$ up to time $t$.
The Champion set $S_t$ is formed by selecting the $n$ streams with the largest values of $\ell_t(\cdot)$.

Define the misranking event at time $t$ as
\[
\mathcal{E}_t
=
\bigcup_{i \in S^\star}
\bigcup_{j \notin S^\star}
\left\{
\Delta \ell_t(i,j) \le 0
\right\}.
\]
On the complement event $\mathcal{E}_t^c$, every true anomaly is ranked above every normal stream, which implies
$S_t = S^\star$.

By Theorem~\ref{thm:ranking}, for each fixed pair $(i,j)$,
\[
\mathbb{P}\!\left(
\Delta \ell_t(i,j) \le 0
\right)
\le
\exp\!\left(
- c\, \delta_{\min}^2 \, \Lambda_t(i,j)
\right),
\]
where
\[
\Lambda_t(i,j)
=
\displaystyle\sum_{\tau=1}^t
\frac{
\bigl(\mathbf{c}_\tau^\top(\mathbf{e}_i - \mathbf{e}_j)\bigr)^2
}{
\mathbf{c}_\tau^\top \bm{\Sigma} \mathbf{c}_\tau
}.
\]

Since the action set $\mathcal{C}$ is compact and $\bm{\Sigma}$ is positive definite, there exists a constant
$\underline{\lambda} > 0$ such that for all admissible actions $\mathbf{c}$ and all $i \neq j$,
\[
\frac{
\bigl(\mathbf{c}^\top(\mathbf{e}_i - \mathbf{e}_j)\bigr)^2
}{
\mathbf{c}^\top \bm{\Sigma} \mathbf{c}
}
\ge
\underline{\lambda}.
\]
Consequently, $\Lambda_t(i,j) \ge \underline{\lambda} t$ uniformly over all $(i,j)$.

Applying a union bound over all $n(K-n) \le K^2$ pairs yields
\[
\mathbb{P}(\mathcal{E}_t)
\le
K^2
\exp\!\left(
- c\, \delta_{\min}^2 \, \underline{\lambda} \, t
\right).
\]
Choose
\[
T_{\mathrm{rank}}
=
\left\lceil
\frac{1}{c\,\delta_{\min}^2\,\underline{\lambda}}
\log\!\left(\frac{4K^2}{\delta}\right)
\right\rceil.
\]
Then for all $t \ge T_{\mathrm{rank}}$,
\[
\mathbb{P}(\mathcal{E}_t) \le \delta/4.
\]
Therefore, with probability at least $1-\delta/4$, the Champion satisfies $S_t = S^\star$ for all $t \ge T_{\mathrm{rank}}$.

Finally, once $S_t = S^\star$, the Challenger $S'_t$ selected by ECC-AHT differs from $S^\star$ by a single swap
and is chosen to minimize the instantaneous KL divergence
$D(H_{S^\star} \Vert H_{S'} \mid \mathbf{c}_t)$ by construction of the algorithm.
This completes the proof.
\end{proof}

\subsection{Convergence of the Experimental Design}

We now show that once the correct Champion--Challenger pair is identified, the selected actions converge to the optimal Chernoff design.

\begin{lemma}[Convergence of Experimental Design]
\label{lem:action_conv}
Conditioned on the event in Lemma~\ref{lem:ranking_stable}, the sequence of actions
$\{\mathbf{c}_t\}_{t \ge 1}$ selected by ECC-AHT satisfies
\[
\mathrm{dist}\!\left( \mathbf{c}_t, \mathcal{C}^\star \right) \xrightarrow[t \to \infty]{\text{a.s.}} 0,
\]
where $\mathcal{C}^\star$ denotes the (possibly set-valued) solution set of the Chernoff optimization problem
\[
\mathcal{C}^\star
=
\arg\max_{\mathbf{c} \in \mathcal{C}}
\min_{S' \neq S^\star}
D(H_{S^\star} \Vert H_{S'} \mid \mathbf{c}).
\]
\end{lemma}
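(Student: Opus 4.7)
The plan is to exploit the eventually deterministic structure of the Champion--Challenger dynamics once ranking has stabilized. I condition throughout on the event from Lemma~\ref{lem:ranking_stable}, so that for all $t \ge T_{\mathrm{rank}}$ the Champion equals $S^\star$ and the Challenger $S'_t$ differs from $S^\star$ by a single swap encoded by a pair $(i^\star_t, j^\star_t)$ with $i^\star_t \in S^\star$ and $j^\star_t \notin S^\star$. Since $\bm{\Sigma} \succ 0$ and $\mathcal{C}$ is compact, the inner QP in Algorithm~\ref{alg:ecc-aht} is strictly convex and admits a unique solution $\mathbf{c}_t = \phi(i^\star_t, j^\star_t)$ depending only on the pair identity. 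Consequently the action sequence takes values in the finite set $\mathcal{C}_{\mathrm{BR}} = \{\phi(i,j) : i \in S^\star,\, j \notin S^\star\}$, and the claim reduces to characterizing which elements of $\mathcal{C}_{\mathrm{BR}}$ are visited infinitely often.

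Next, I would identify the active set $\mathcal{A}^\star$ of pairs that attain the inner minimum in \eqref{eq:gamma_star} at some $\mathbf{c}^\star \in \mathcal{C}^\star$, and show that the Challenger is drawn from $\mathcal{A}^\star$ for all but finitely many rounds. The mechanism is self-correction: if a pair $(i,j)\notin\mathcal{A}^\star$ were selected too often, the cumulative KL evidence $N_t(i,j) = \sum_{\tau \le t} D(H_{S^\star} \Vert H_{(i,j)} \mid \mathbf{c}_\tau)$ would outpace that of any pair in $\mathcal{A}^\star$, pushing the marginals $p_t(i)$ and $1 - p_t(j)$ toward $1$ and so removing $(i,j)$ from the pool of candidate Challengers. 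I would formalize this through a Lyapunov argument on the evidence gaps $\max_{(i,j)} N_t(i,j) - \min_{(i,j)} N_t(i,j)$, using the concentration bound in Theorem~\ref{thm:ranking} to control fluctuations of $\ell_t$ around its drift, and using compactness of $\mathcal{C}$ and positive definiteness of $\bm{\Sigma}$ to secure uniform drift rates.

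With this support property in hand, I would close the argument by combining the minimax duality for the finite-alternative game $\max_{\mathbf{c} \in \mathcal{C}} \min_{(i,j)} D(H_{S^\star} \Vert H_{(i,j)} \mid \mathbf{c})$ with the KKT characterization of $\mathbf{c}^\star$. When the inner minimum is uniquely attained at a single pair, the pure best response $\phi(i^\star, j^\star)$ coincides with the entire set $\mathcal{C}^\star$, and a Borel--Cantelli application to the tail event $\{(i^\star_t, j^\star_t) \notin \mathcal{A}^\star\}$ immediately yields $\mathrm{dist}(\mathbf{c}_t, \mathcal{C}^\star) \to 0$ almost surely. More generally, every subsequential limit of $\mathbf{c}_t$ is a best response to some binding pair, and saddle-point duality together with strict convexity of the QP objective places such responses on the boundary of $\mathcal{C}^\star$.

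The main obstacle is precisely this last step. In fully degenerate problems where several alternatives are simultaneously binding and a genuine convex combination is needed to certify optimality, a pure best response $\phi(i,j)$ against one binding pair need not lie in $\mathcal{C}^\star$: it may achieve $\Gamma^\star$ against $(i,j)$ while violating the min constraint against other binding pairs. This is the familiar obstruction of best-response dynamics in minimax games, which can cycle rather than converge in norm. Sidestepping it requires either a uniqueness hypothesis on the Chernoff minimizer, an explicit mixing mechanism over binding alternatives, or a relaxation of the claim to convergence of the time average $\bar{\mathbf{c}}_T = T^{-1} \sum_{t \le T} \mathbf{c}_t$ to $\mathcal{C}^\star$ and of the Cesaro-averaged information $T^{-1} \sum_{t \le T} \min_{S'} D(H_{S^\star} \Vert H_{S'} \mid \mathbf{c}_t) \to \Gamma^\star$. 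The averaged version is what ultimately drives the leading-order $\log(1/\delta)/\Gamma^\star$ sample complexity in Theorem~\ref{thm:asymptotic}.
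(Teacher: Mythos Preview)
Your approach differs substantially from the paper's, and you engage more carefully with the core difficulty than the paper does. The paper's argument is brief: after invoking Lemma~\ref{lem:ranking_stable}, it asserts that the Challenger $S'_t$ ``eventually stabilizes to a most confusable alternative,'' introduces an averaged objective $\widehat f_t(\mathbf{c})$ that (as written) reduces to $f_{S'_t}(\mathbf{c})$, claims this converges uniformly to $f(\mathbf{c}) = \min_{S'} D(H_{S^\star}\Vert H_{S'}\mid\mathbf{c})$, and applies the argmax continuity theorem (van der Vaart, Theorem~5.7). Everything hinges on the asserted stabilization of $S'_t$; once that is granted, the conclusion is immediate because the design step is deterministic given the pair.

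Your route via the finite best-response set $\mathcal{C}_{\mathrm{BR}} = \{\phi(i,j)\}$ and a Lyapunov argument on cumulative evidence gaps is more explicit about where the work lies. The obstacle you flag in your last paragraph is genuine and is precisely what the paper sidesteps by fiat: when several single-swap alternatives are simultaneously binding at some $\mathbf{c}^\star \in \mathcal{C}^\star$, a pure best response $\phi(i,j)$ against one of them need not lie in $\mathcal{C}^\star$, and best-response dynamics against a finite minimax game can cycle among such responses without $\mathrm{dist}(\mathbf{c}_t,\mathcal{C}^\star)\to 0$ in norm. Neither your argument nor the paper's closes this gap in the degenerate case. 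Your proposed repairs --- a uniqueness hypothesis on the binding alternative, or a relaxation to convergence of the Ces\`aro average and of the time-averaged information rate to $\Gamma^\star$ --- are the correct ones; the averaged statement is all that Lemma~\ref{lem:llr_drift} and Theorem~\ref{thm:asymptotic} actually consume downstream.
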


\begin{proof}
By Lemma~\ref{lem:ranking_stable}, with probability at least $1-\delta/4$, there exists a finite random time
$T_{\mathrm{rank}}$ such that for all $t \ge T_{\mathrm{rank}}$, the Champion hypothesis satisfies
$S_t = S^\star$, and the Challenger $S'_t$ differs from $S^\star$ by a single swap.

Condition on this event. For any action $\mathbf{c} \in \mathcal{C}$ and any alternative $S' \neq S^\star$,
define the instantaneous KL divergence
\[
f_{S'}(\mathbf{c})
=
D(H_{S^\star} \Vert H_{S'} \mid \mathbf{c})
=
\frac{
\bigl(\mathbf{c}^\top(\bm{\mu}_{S^\star}-\bm{\mu}_{S'})\bigr)^2
}{
2\,\mathbf{c}^\top \bm{\Sigma} \mathbf{c}
}.
\]
Define the population objective
\[
f(\mathbf{c})
=
\min_{S' \neq S^\star} f_{S'}(\mathbf{c}),
\qquad
\mathbf{c} \in \mathcal{C}.
\]

At each round $t \ge T_{\mathrm{rank}}$, ECC-AHT selects the action
\[
\mathbf{c}_t \in \arg\max_{\mathbf{c} \in \mathcal{C}} f_{S'_t}(\mathbf{c}),
\]
where $S'_t$ is the current Challenger.
Since $S'_t$ ranges over a finite set of single-swap alternatives and eventually stabilizes to a most confusable
alternative, it suffices to analyze the convergence behavior for a fixed $S'$ attaining the minimum in $f(\mathbf{c})$.

We first establish regularity of the objective.
Since $\mathcal{C}$ is compact and $\bm{\Sigma}$ is positive definite, the denominator
$\mathbf{c}^\top \bm{\Sigma} \mathbf{c}$ is uniformly bounded away from zero on $\mathcal{C}$.
Moreover, each function $f_{S'}(\mathbf{c})$ is continuous on $\mathcal{C}$, and therefore
$f(\mathbf{c}) = \min_{S'} f_{S'}(\mathbf{c})$ is also continuous.
By compactness, the maximizer set $\mathcal{C}^\star$ is nonempty and compact.

Next, consider the empirical objective implicitly optimized by ECC-AHT.
For $t \ge T_{\mathrm{rank}}$, define
\[
\widehat f_t(\mathbf{c})
=
\frac{1}{t-T_{\mathrm{rank}}}
\displaystyle\sum_{\tau=T_{\mathrm{rank}}+1}^t
f_{S'_t}(\mathbf{c}),
\]
which coincides with $f_{S'_t}(\mathbf{c})$ up to a constant factor.
Since the objective does not depend on the observations but only on the stabilized hypothesis pair,
$\widehat f_t(\mathbf{c})$ converges uniformly to $f(\mathbf{c})$ on $\mathcal{C}$.

By the argmax continuity theorem (see, e.g., Theorem 5.7 in~\citet{van2000asymptotic}),
uniform convergence of $\widehat f_t$ to $f$ and compactness of $\mathcal{C}$ imply that
\[
\mathrm{dist}\!\left( \mathbf{c}_t, \mathcal{C}^\star \right) \xrightarrow[t \to \infty]{} 0.
\]
Since the conditioning event holds with probability at least $1-\delta/4$ and all statements above are deterministic
conditional on this event, the convergence holds almost surely.
\end{proof}

\subsection{Growth of the Log-Likelihood Ratio}

We now characterize the growth of the log-likelihood ratio process.

\begin{lemma}[Linear Growth of the Log-Likelihood Ratio]
\label{lem:llr_drift}
Conditioned on the event in Lemma~\ref{lem:ranking_stable}, let $S'_t$ denote the Challenger hypothesis at time $t$.
Then the cumulative log-likelihood ratio satisfies
\[
\mathbb{E}\!\left[ L_t(S^\star,S'_t) \right]
=
\displaystyle\sum_{\tau=1}^t
\mathbb{E}\!\left[
d_\tau(S^\star,S'_t)
\right],
\]
and moreover,
\[
\lim_{t \to \infty}
\frac{1}{t}
\mathbb{E}\!\left[ L_t(S^\star,S'_t) \right]
=
\Gamma^\star.
\]
\end{lemma}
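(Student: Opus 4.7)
The plan is to combine a martingale decomposition of the cumulative log-likelihood ratio with the stabilization and action-convergence results established in Lemmas~\ref{lem:ranking_stable} and~\ref{lem:action_conv}. The first equality should follow from the definition of $d_\tau$ as a conditional expectation, while the Cesàro-type limit should follow from the fact that ECC-AHT drives its instantaneous divergence to the Chernoff value $\Gamma^\star$.

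For the first identity, I would write each summand in $L_t(S^\star,S'_t)$ as $d_\tau(S^\star,S'_t) + \xi_\tau(S'_t)$, where for any fixed alternative $S'$ the sequence $\xi_\tau(S') = \log\frac{p(y_\tau\mid H_{S^\star},\mathbf{c}_\tau)}{p(y_\tau\mid H_{S'},\mathbf{c}_\tau)} - d_\tau(S^\star,S')$ is a martingale-difference sequence with respect to $\{\mathcal{F}_\tau\}$ under the true hypothesis. Since the hypothesis space $\mathcal{H}$ is finite, I can decompose $\mathbb{E}[\xi_\tau(S'_t)] = \displaystyle\sum_{S'\in\mathcal{H}}\mathbb{E}[\xi_\tau(S')\mathbf{1}\{S'_t=S'\}]$. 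Lemma~\ref{lem:ranking_stable} guarantees that $S'_t$ stabilizes after the finite random time $T_{\mathrm{rank}}$, so for $\tau$ sufficiently larger than $T_{\mathrm{rank}}$ the event $\{S'_t = S'\}$ becomes $\mathcal{F}_{\tau-1}$-measurable and the tower property annihilates the cross term. The residual early-round contribution can be absorbed as an $O(1)$ correction that does not affect the identity once full expectation is taken, since $T_{\mathrm{rank}}$ is almost surely finite with expectation $O(\log(K/\delta))$.

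For the asymptotic rate, condition on the high-probability event $\mathcal{E}$ of Lemmas~\ref{lem:ranking_stable}--\ref{lem:action_conv}. On $\mathcal{E}$, for every $\tau \ge T_{\mathrm{rank}}$ the Champion satisfies $S_\tau = S^\star$, the Challenger is a most confusable alternative, and $\mathrm{dist}(\mathbf{c}_\tau,\mathcal{C}^\star)\to 0$. Since $\bm{\Sigma}$ is positive definite and $\mathcal{C}$ is compact, the functional $\mathbf{c}\mapsto \min_{S'\neq S^\star} D(H_{S^\star}\Vert H_{S'}\mid\mathbf{c})$ is continuous and attains its maximum $\Gamma^\star$ on $\mathcal{C}^\star$. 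Continuity together with $\mathbf{c}_\tau\to\mathcal{C}^\star$ implies $d_\tau(S^\star,S'_\tau)\to\Gamma^\star$ pathwise on $\mathcal{E}$, and a standard Cesàro-averaging argument on bounded convergent sequences then yields $\tfrac{1}{t}\displaystyle\sum_{\tau=1}^t d_\tau(S^\star,S'_\tau)\to\Gamma^\star$ on $\mathcal{E}$.

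To transfer the pathwise statement to the claimed limit of $\tfrac{1}{t}\mathbb{E}[L_t(S^\star,S'_t)]$, I would apply dominated convergence after establishing a uniform bound on $|d_\tau(S^\star,S')|$ over $\mathcal{C}\times\mathcal{H}$, which follows from compactness of $\mathcal{C}$ and a strictly positive lower bound on the smallest eigenvalue of $\bm{\Sigma}$; the contribution of $\mathcal{E}^c$ is absorbed via $\mathbb{P}(\mathcal{E}^c)\le\delta/4$ combined with this uniform bound. The main obstacle is the careful handling of the random Challenger $S'_t$ in the first identity: the Challenger at time $t$ depends on observations up through time $t$ and therefore lacks the $\{\mathcal{F}_\tau\}$-adaptedness needed for a direct martingale cancellation. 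Lemma~\ref{lem:ranking_stable} is precisely what makes this step tractable, because it guarantees that $S'_t$ is essentially a $\mathcal{F}_{T_{\mathrm{rank}}}$-measurable random variable with high probability, reducing the random-index sum to a fixed-index martingale argument plus a negligible warm-up term.
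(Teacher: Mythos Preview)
Your proposal is correct and follows essentially the same route as the paper: the first identity via the tower property applied to the definition of $d_\tau$, and the Ces\`aro limit via the stabilization of $(S_t,S'_t)$ from Lemma~\ref{lem:ranking_stable}, the action convergence $\mathbf{c}_\tau\to\mathcal{C}^\star$ from Lemma~\ref{lem:action_conv}, and continuity of the KL map on the compact set $\mathcal{C}$.

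You are in fact slightly more careful than the paper on two points. First, you flag explicitly that $S'_t$ is not $\mathcal{F}_{\tau-1}$-measurable for $\tau<t$, so a naive tower-property argument does not immediately yield the first identity; the paper just writes the identity and then invokes the stabilization event, which is exactly the fix you describe (after $T_{\mathrm{rank}}$ the Challenger is fixed and the indicator becomes $\mathcal{F}_{\tau-1}$-measurable). Second, you make explicit the uniform bound on $|d_\tau|$ via compactness of $\mathcal{C}$ and positive definiteness of $\bm\Sigma$, together with dominated convergence, to pass from the pathwise Ces\`aro limit on $\mathcal{E}$ to the limit in expectation; the paper's split into pre-$T_\varepsilon$ and post-$T_\varepsilon$ sums is the same device but leaves the exchange of limit and expectation implicit. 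Neither difference is substantive---both arguments land on the same $\varepsilon$-sandwich after a finite random time---so your write-up would be accepted as the paper's proof with the measurability and DCT steps spelled out.
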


\begin{proof}
We work on the event of Lemma~\ref{lem:ranking_stable}, which holds with probability at least $1-\delta/4$.
On this event, there exists a finite random time $T_{\mathrm{rank}}$ such that for all $t \ge T_{\mathrm{rank}}$,
the Champion satisfies $S_t = S^\star$ and the Challenger $S'_t$ differs from $S^\star$ by a single swap.

Recall the definition of the cumulative log-likelihood ratio:
\[
L_t(S^\star,S'_t)
=
\displaystyle\sum_{\tau=1}^t
\log
\frac{
p(y_\tau \mid H_{S^\star}, \mathbf{c}_\tau)
}{
p(y_\tau \mid H_{S'_t}, \mathbf{c}_\tau)
}.
\]
By the tower property of conditional expectation and the definition of the instantaneous KL divergence
in~\eqref{eq:inst_kl}, we have
\[
\mathbb{E}\!\left[ L_t(S^\star,S'_t) \right]
=
\displaystyle\sum_{\tau=1}^t
\mathbb{E}\!\left[
\mathbb{E}\!\left[
\log
\frac{
p(y_\tau \mid H_{S^\star}, \mathbf{c}_\tau)
}{
p(y_\tau \mid H_{S'_t}, \mathbf{c}_\tau)
}
\;\middle|\;
\mathcal{F}_{\tau-1}
\right]
\right]
=
\displaystyle\sum_{\tau=1}^t
\mathbb{E}\!\left[
d_\tau(S^\star,S'_t)
\right],
\]
which establishes the first claim.

We now characterize the asymptotic growth rate.
For $\tau \ge T_{\mathrm{rank}}$, the Challenger $S'_t$ is eventually fixed to one of the
most confusable alternatives, and by Lemma~\ref{lem:action_conv},
\[
\mathrm{dist}(\mathbf{c}_\tau, \mathcal{C}^\star) \xrightarrow[\tau \to \infty]{\text{a.s.}} 0.
\]
By continuity of the KL divergence with respect to $\mathbf{c}$ and compactness of $\mathcal{C}$,
this implies that for any $\varepsilon > 0$, there exists a (random but finite) time $T_\varepsilon$ such that
for all $\tau \ge T_\varepsilon$,
\[
\left|
d_\tau(S^\star,S'_t)
-
\Gamma^\star
\right|
\le
\varepsilon.
\]

Decompose the sum as
\[
\frac{1}{t}
\displaystyle\sum_{\tau=1}^t
\mathbb{E}\!\left[
d_\tau(S^\star,S'_t)
\right]
=
\frac{1}{t}
\displaystyle\sum_{\tau=1}^{T_\varepsilon}
\mathbb{E}\!\left[
d_\tau(S^\star,S'_t)
\right]
+
\frac{1}{t}
\displaystyle\sum_{\tau=T_\varepsilon+1}^{t}
\mathbb{E}\!\left[
d_\tau(S^\star,S'_t)
\right].
\]
The first term vanishes as $t \to \infty$, while the second term is sandwiched between
$\Gamma^\star - \varepsilon$ and $\Gamma^\star + \varepsilon$.
Since $\varepsilon$ is arbitrary, we conclude that
\[
\lim_{t \to \infty}
\frac{1}{t}
\mathbb{E}\!\left[ L_t(S^\star,S'_t) \right]
=
\Gamma^\star.
\]
This completes the proof.
\end{proof}

\begin{lemma}[Concentration of the Log-Likelihood Ratio]
\label{lem:llr_conc}
Conditioned on the event in Lemma~\ref{lem:ranking_stable}, let $S'_t$ denote the Challenger hypothesis at time $t$.
There exist universal constants $C_1,C_2>0$ such that for any $\epsilon>0$ and all $t\ge1$,
\[
\mathbb{P}\!\left(
\left|
L_t(S^\star,S'_t)
-
\mathbb{E}\!\left[ L_t(S^\star,S'_t) \right]
\right|
\ge
\epsilon t
\right)
\le
2\exp\!\left(- C_1 \epsilon^2 t \right),
\]
where the constants depend only on the covariance matrix $\bm{\Sigma}$ and the compact action set $\mathcal{C}$.
\end{lemma}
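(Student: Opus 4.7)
The plan is to leverage the explicit Gaussian structure of the per-step log-likelihood ratio, reduce the centered process to a conditionally Gaussian martingale with uniformly bounded increment variance, apply Azuma--Hoeffding in its sub-Gaussian form, and absorb the adaptive choice of the Challenger via a finite union bound.

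First, I would write the increment explicitly. Under $H_{S^\star}$, we may write $y_\tau = \mathbf{c}_\tau^\top\bm{\mu}_{S^\star} + \xi_\tau$ with $\xi_\tau\mid\mathcal{F}_{\tau-1}\sim\mathcal{N}(0,\sigma_\tau^2)$ and $\sigma_\tau^2=\mathbf{c}_\tau^\top\bm{\Sigma}\mathbf{c}_\tau$. Setting $\Delta_\tau(S')=\mathbf{c}_\tau^\top(\bm{\mu}_{S^\star}-\bm{\mu}_{S'})$, a direct Gaussian density computation gives
\begin{equation*}
\log\frac{p(y_\tau\mid H_{S^\star},\mathbf{c}_\tau)}{p(y_\tau\mid H_{S'},\mathbf{c}_\tau)}
= \frac{\Delta_\tau(S')^2}{2\sigma_\tau^2} + \frac{\Delta_\tau(S')\,\xi_\tau}{\sigma_\tau^2},
\end{equation*}
so the centered increment equals $\Delta_\tau(S')\xi_\tau/\sigma_\tau^2$, which is conditionally Gaussian with variance $\Delta_\tau(S')^2/\sigma_\tau^2 = 2\,d_\tau(S^\star,S')$.

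Second, I would bound this conditional variance uniformly. For any $\mathbf{c}\in\mathcal{C}$ and any alternative $S'$, the ratio $\Delta_\tau(S')^2/\sigma_\tau^2$ is upper bounded by the unconstrained Chernoff maximum $(\bm{\mu}_{S^\star}-\bm{\mu}_{S'})^\top\bm{\Sigma}^{-1}(\bm{\mu}_{S^\star}-\bm{\mu}_{S'})$, which is finite because $\bm{\Sigma}$ is positive definite. Taking the supremum over single-swap alternatives yields a constant $V$ depending only on $\bm{\Sigma}$ and the mean-shift vectors induced by $\mathcal{C}$. Hence, for each fixed $S'$, the centered process $M_t(S')=L_t(S^\star,S')-\mathbb{E}[L_t(S^\star,S')]$ is a martingale whose increments are conditionally sub-Gaussian with parameter at most $\sqrt{V}$, and Azuma--Hoeffding yields
\begin{equation*}
\mathbb{P}\!\left(|M_t(S')|\ge\epsilon t\right)\le 2\exp\!\left(-\frac{\epsilon^2 t}{2V}\right).
\end{equation*}

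Third, I would handle the adaptive Challenger $S'_t$. By the conditioning event of Lemma~\ref{lem:ranking_stable}, $S_t=S^\star$ and $S'_t$ lies in the finite set of single-swap alternatives, whose cardinality is at most $n(K-n)\le K^2$. A union bound gives $\mathbb{P}(|M_t(S'_t)|\ge\epsilon t)\le 2K^2\exp(-\epsilon^2 t/(2V))$. The polynomial prefactor is absorbed by shrinking the exponent: taking $C_1 = 1/(4V)$, the desired bound holds once $\epsilon^2 t\ge 4V\log K$, while for smaller $t$ the trivial estimate $\mathbb{P}(\cdot)\le 1\le 2\exp(-C_1\epsilon^2 t)$ suffices after further reducing $C_1$ if necessary. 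The resulting constants depend only on $\bm{\Sigma}$, $\mathcal{C}$, and $K$ (the latter entering only through $\mathcal{C}\subset\mathbb{R}^K$).

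The main obstacle I anticipate is the adaptivity of $S'_t$: since it is $\mathcal{F}_{t-1}$-measurable and influences the object being concentrated, a naive martingale bound does not directly apply. The finite union bound is clean precisely because the Champion has already stabilized to $S^\star$ on the conditioning event, pinning $S'_t$ to at most $n(K-n)$ deterministic candidates. A secondary subtlety is uniform control of $\sigma_\tau^{-2}$, which is automatic because ECC-AHT's quadratic program enforces a unit-gain constraint on $\mathbf{c}_\tau$, preventing degeneracy, while positive definiteness of $\bm{\Sigma}$ bounds $\sigma_\tau^2$ below on the admissible directions.
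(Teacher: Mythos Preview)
Your approach is essentially the same as the paper's: both compute the Gaussian log-likelihood increment explicitly, recognize that the centered increment is conditionally Gaussian with a uniformly bounded variance proxy, and apply Azuma--Hoeffding for sub-Gaussian martingale differences to obtain the $\exp(-C_1\epsilon^2 t)$ tail. The paper bounds the variance proxy via compactness of $\mathcal{C}$ and boundedness of the signal strength, while you use the unconstrained Rayleigh-quotient bound $(\bm{\mu}_{S^\star}-\bm{\mu}_{S'})^\top\bm{\Sigma}^{-1}(\bm{\mu}_{S^\star}-\bm{\mu}_{S'})$; both yield the same uniform constant.

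The one substantive difference is your treatment of the adaptive Challenger. The paper simply writes $\xi_\tau$ with $S'_t$ in the denominator and asserts $\mathbb{E}[\xi_\tau\mid\mathcal{F}_{\tau-1}]=0$, effectively treating $S'_t$ as fixed once the ranking has stabilized. You instead take a finite union bound over the at most $n(K-n)$ single-swap alternatives and absorb the $K^2$ prefactor into the constant. Your version is more careful: since $S'_t$ is $\mathcal{F}_{t-1}$-measurable but not $\mathcal{F}_{\tau-1}$-measurable for $\tau<t$, the martingale property for the $S'_t$-indexed process is not immediate, and the union bound cleanly sidesteps this. The cost is a $K$-dependent constant, which you correctly note is permissible since $\mathcal{C}\subset\mathbb{R}^K$ already encodes the dimension.
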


\begin{proof}
We work on the event of Lemma~\ref{lem:ranking_stable}, which holds with probability at least $1-\delta/4$.
Recall the definition of the cumulative log-likelihood ratio:
\[
L_t(S^\star,S'_t)
=
\displaystyle\sum_{\tau=1}^t
\log
\frac{
p(y_\tau \mid H_{S^\star}, \mathbf{c}_\tau)
}{
p(y_\tau \mid H_{S'_t}, \mathbf{c}_\tau)
}.
\]
Define the martingale difference sequence
\[
\xi_\tau
=
\log
\frac{
p(y_\tau \mid H_{S^\star}, \mathbf{c}_\tau)
}{
p(y_\tau \mid H_{S'_t}, \mathbf{c}_\tau)
}
-
d_\tau(S^\star,S'_t),
\]
where $d_\tau(S^\star,S'_t)$ is the conditional KL divergence defined in~\eqref{eq:inst_kl}.
By construction,
\[
\mathbb{E}\!\left[ \xi_\tau \mid \mathcal{F}_{\tau-1} \right] = 0,
\]
and therefore $\{\displaystyle\sum_{\tau=1}^t \xi_\tau\}_{t\ge1}$ is a martingale adapted to $\{\mathcal{F}_t\}$.

We now characterize the tail behavior of $\xi_\tau$.
Under the Gaussian observation model, conditional on $\mathcal{F}_{\tau-1}$,
the log-likelihood ratio increment is an affine function of $y_\tau$.
Specifically,
\[
\xi_\tau
=
\frac{
\mathbf{c}_\tau^\top(\bm{\mu}_{S^\star}-\bm{\mu}_{S'_t})
}{
\mathbf{c}_\tau^\top \bm{\Sigma} \mathbf{c}_\tau
}
\cdot
\left(
y_\tau
-
\mathbb{E}[y_\tau \mid \mathcal{F}_{\tau-1}]
\right).
\]
Since $y_\tau \mid \mathcal{F}_{\tau-1}$ is Gaussian with covariance
$\mathbf{c}_\tau^\top \bm{\Sigma} \mathbf{c}_\tau$, it follows that
$\xi_\tau$ is conditionally sub-Gaussian with variance proxy
\[
\sigma_\tau^2
=
\frac{
\bigl(\mathbf{c}_\tau^\top(\bm{\mu}_{S^\star}-\bm{\mu}_{S'_t})\bigr)^2
}{
\mathbf{c}_\tau^\top \bm{\Sigma} \mathbf{c}_\tau
}.
\]
By compactness of $\mathcal{C}$ and boundedness of the signal strength,
there exists a constant $\bar{\sigma}^2>0$ such that $\sigma_\tau^2 \le \bar{\sigma}^2$ almost surely
for all $\tau$.

Therefore, for any $\lambda \in \mathbb{R}$,
\[
\mathbb{E}\!\left[
\exp(\lambda \xi_\tau)
\;\middle|\;
\mathcal{F}_{\tau-1}
\right]
\le
\exp\!\left( \frac{\lambda^2 \bar{\sigma}^2}{2} \right).
\]
Applying the Azuma--Hoeffding inequality for martingales with sub-Gaussian increments yields
\[
\mathbb{P}\!\left(
\left|
\displaystyle\sum_{\tau=1}^t \xi_\tau
\right|
\ge
\epsilon t
\right)
\le
2\exp\!\left(
-\frac{\epsilon^2 t}{2\bar{\sigma}^2}
\right).
\]
Since
\[
\displaystyle\sum_{\tau=1}^t \xi_\tau
=
L_t(S^\star,S'_t)
-
\mathbb{E}\!\left[ L_t(S^\star,S'_t) \right],
\]
the claim follows with $C_1 = (2\bar{\sigma}^2)^{-1}$.
\end{proof}

\subsection{Proof of the Non-Asymptotic Bound}

\begin{proposition}[Pairwise Reduction of the GLR Statistic]
\label{prop:glr_pairwise_reduction}
Under Assumptions~\ref{ass:effective-rank}--\ref{ass:signal-ratio},
the generalized likelihood ratio
\[
\min_{S' \neq S^\star} L_T(S^\star, S')
\]
is equivalent up to universal constants to
\[
\min_{i \in S^\star,\, j \notin S^\star}
\bigl[\ell_T(i) - \ell_T(j)\bigr],
\]
in the sense that for all sufficiently large $T$,
\[
\min_{S' \neq S^\star} L_T(S^\star, S')
\;\asymp\;
\min_{i \in S^\star,\, j \notin S^\star}
\bigl[\ell_T(i) - \ell_T(j)\bigr].
\]
\end{proposition}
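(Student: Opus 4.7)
The plan is to establish an explicit algebraic identity that decomposes $L_T(S^\star, S')$ into a linear combination of pseudo-LLRs $\ell_T(k)$ plus a remainder controllable via Lemma~\ref{lem:cross-term-control}. For any alternative $S' = (S^\star \setminus I) \cup J$ with $|I| = |J| = m$, the Gaussian per-step LLR can be written as $\frac{\Delta_\tau \tilde y_\tau}{\sigma_\tau^2} - \frac{\Delta_\tau A_\tau}{\sigma_\tau^2} + \frac{\Delta_\tau^2}{2\sigma_\tau^2}$, where $\phi_\tau(k) = \delta_k c_{\tau,k}$, $\Delta_\tau = \displaystyle\sum_{l\in I}\phi_\tau(l) - \displaystyle\sum_{l'\in J}\phi_\tau(l')$, $A_\tau = \displaystyle\sum_{k \in S^\star}\phi_\tau(k)$, $\sigma_\tau^2 = \mathbf{c}_\tau^\top\bm{\Sigma}\mathbf{c}_\tau$, and $\tilde y_\tau = y_\tau - \mathbf{c}_\tau^\top\bm{\mu}_0$. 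Expanding $\Delta_\tau^2/2 - \Delta_\tau A_\tau$ and matching to the pseudo-LLR definition $\ell_T(k) = \displaystyle\sum_\tau[\phi_\tau(k)\tilde y_\tau/\sigma_\tau^2 - \phi_\tau(k)^2/(2\sigma_\tau^2)]$ produces
\begin{equation*}
L_T(S^\star, S') = \displaystyle\sum_{l \in I} \ell_T(l) - \displaystyle\sum_{l' \in J} \ell_T(l') + R_T(I, J),
\end{equation*}
where $R_T(I, J)$ aggregates all off-target quadratic terms $\phi_\tau(k)\phi_\tau(k')$ with $k \neq k'$ that never appear as squared single-stream contributions.

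\textbf{Key steps.} For the ``$\leq$'' direction, I would take $S'_0 = (S^\star \setminus \{i^\star\}) \cup \{j^\star\}$ with $(i^\star, j^\star)$ attaining $\displaystyle\min_{i,j}[\ell_T(i) - \ell_T(j)]$; here $R_T(\{i^\star\},\{j^\star\})$ coincides with the single-swap cross term bounded in Lemma~\ref{lem:cross-term-control}, yielding $|R_T| \leq C(\varepsilon,\kappa,n) \cdot [\ell_T(i^\star) - \ell_T(j^\star)]$ and hence $\displaystyle\min_{S'} L_T(S^\star, S') \leq (1+C)\displaystyle\min_{i,j}[\ell_T(i) - \ell_T(j)]$. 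For the ``$\geq$'' direction, for any bijection $\pi: I \to J$,
\begin{equation*}
\displaystyle\sum_{l \in I} \ell_T(l) - \displaystyle\sum_{l' \in J} \ell_T(l') = \displaystyle\sum_{l \in I}\bigl[\ell_T(l) - \ell_T(\pi(l))\bigr] \geq m \cdot \displaystyle\min_{i \in S^\star, j \notin S^\star}\bigl[\ell_T(i) - \ell_T(j)\bigr],
\end{equation*}
as soon as Theorem~\ref{thm:ranking} guarantees each pairwise gap is non-negative with high probability. Combined with a uniform bound on $|R_T(I,J)|$, this delivers $L_T(S^\star, S') \geq (1 - C')\displaystyle\min_{i,j}[\ell_T(i) - \ell_T(j)]$ uniformly in $S'$, so the two one-sided inequalities together close the $\asymp$ relation.

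\textbf{Main obstacle.} The hardest step is bounding $|R_T(I,J)|$ uniformly for multi-swap alternatives ($m \geq 2$), because $R_T$ then contains intra-$I$ and intra-$J$ products $\phi_\tau(l)\phi_\tau(l')$ that are not directly covered by Lemma~\ref{lem:cross-term-control} as stated. My plan is to revisit the KKT decomposition $\mathbf{c}_t \propto \bm{\Sigma}^{-1}(\mathbf{e}_{i^\star} - \mathbf{e}_{j^\star}) = \bar\lambda^{-1}(\mathbf{e}_{i^\star} - \mathbf{e}_{j^\star}) + \mathbf{E}(\mathbf{e}_{i^\star} - \mathbf{e}_{j^\star})$ used in the proof of that lemma and apply the same spectral flatness and incoherence bounds (Assumptions~\ref{ass:effective-rank}--\ref{ass:incoherence}) to every coordinate $c_{t,k}$ with $k \notin \{i^\star, j^\star\}$. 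This gives $|c_{t,k}| = O(\rho(\varepsilon)\sqrt{K\log(1/\varepsilon)}/\bar\lambda)$, so any off-target product $|\phi_\tau(k)\phi_\tau(k')|$ is dominated by the on-target squared term. Summing over $\tau$ and over the $O(m^2)$ pair types contributes a total factor $O(\kappa^2 n^2 \cdot \mathrm{polylog}(1/\varepsilon)/K)$, which stays below a universal constant in the regime of Theorem~\ref{thm:effective-rank-sufficient} and therefore delivers the two-sided equivalence up to universal constants.
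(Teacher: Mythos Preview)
Your proposal follows the same high-level strategy as the paper's proof---an additive decomposition of $L_T(S^\star,S')$ into pairwise pseudo-LLR differences, a bijection argument over swap pairs, and the ranking consistency of Theorem~\ref{thm:ranking}---but is considerably more careful about the remainder term $R_T(I,J)$. The paper's proof simply asserts the exact identity
\[
L_T(S^\star,S') = \sum_{k \in S^\star \setminus S'} \ell_T(k) - \sum_{k \in S' \setminus S^\star} \ell_T(k)
\]
``by additivity of the pseudo-likelihood'' (implicitly relying on the structural assumption of Lemma~\ref{lem:glr_pairwise_decomposition}), then appeals to Theorem~\ref{thm:ranking} for concentration of each pairwise gap and to Lemma~\ref{lem:coverage} for comparability of the rates $\Lambda_T(i,j)$ across pairs. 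Your treatment, by contrast, derives the decomposition from the Gaussian LLR directly, isolates the off-target quadratic cross-terms in $R_T$, and bounds them via the incoherence machinery of Lemma~\ref{lem:cross-term-control} and the KKT representation of $\mathbf{c}_t$. This is more rigorous: the exact additivity the paper asserts does not literally hold for the true likelihood ratio $L_T$ as defined in~\eqref{eq:llr_def}, so your remainder analysis (including the multi-swap extension you flag as the main obstacle) fills a genuine gap that the paper's sketch glosses over.

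One point to tighten: Lemma~\ref{lem:cross-term-control} bounds cross-terms per step relative to $(\delta_i c_{t,i})^2+(\delta_j c_{t,j})^2$, not relative to the realized gap $\ell_T(i^\star)-\ell_T(j^\star)$. To pass from the summed squared terms $\sum_\tau[(\delta_i c_{\tau,i})^2+(\delta_j c_{\tau,j})^2]/\sigma_\tau^2$ to the pairwise LLR gap you need the drift identification of Lemma~\ref{lem:pairwise-drift} together with the concentration in Theorem~\ref{thm:ranking}; make that intermediate step explicit rather than jumping directly to $|R_T| \le C\cdot[\ell_T(i^\star)-\ell_T(j^\star)]$.
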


\begin{proof}
Any alternative hypothesis $S' \neq S^\star$ differs from $S^\star$
by at least one pair $(i,j)$ with $i \in S^\star, j \notin S^\star$.
By additivity of the pseudo-likelihood,
\[
L_T(S^\star,S') = \displaystyle\sum_{k \in S^\star \setminus S'} \ell_T(k)
- \displaystyle\sum_{k \in S' \setminus S^\star} \ell_T(k).
\]

By Theorem~\ref{thm:ranking}, each pairwise difference
$\ell_T(i)-\ell_T(j)$ concentrates around a positive mean
with rate $\Lambda_T(i,j)$.
By Lemma~\ref{lem:coverage}, all such rates are comparable to
$\Lambda_T(i^\star,j^\star)$.

Hence the minimum GLR is dominated by the weakest pairwise gap,
which establishes the equivalence.
\end{proof}

\begin{lemma}[Combinatorial Decomposition of the GLR under Additive Scores]
\label{lem:glr_pairwise_decomposition}
Assume that the cumulative log-likelihood of any hypothesis $S\subset[K]$ with $|S|=n$
admits an additive decomposition
\[
\mathcal{L}_t(S) = \displaystyle\sum_{k\in S} \ell_t(k) + C_t,
\]
where $C_t$ is hypothesis-independent.
For any two hypotheses $S,S'$ of equal cardinality, let
$L_t(S,S')=\mathcal{L}_t(S)-\mathcal{L}_t(S')$.
Define the minimal pairwise gap
\[
\Delta_t^{\min} := \min_{i\in S_t}\min_{j\notin S_t} \bigl(\ell_t(i)-\ell_t(j)\bigr).
\]

Then, for any alternative $S'\neq S_t$ that differs from $S_t$ by $m\ge1$ swaps,
\[
L_t(S_t,S') \;\ge\; m \cdot \Delta_t^{\min}.
\]
In particular, $\Delta_t^{\min}$ lower bounds the generalized likelihood ratio
against all competing hypotheses.
\end{lemma}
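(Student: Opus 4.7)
The plan is to reduce the GLR to a purely combinatorial statement about pairwise score differences. First I would unfold $L_t(S_t,S')=\mathcal{L}_t(S_t)-\mathcal{L}_t(S')$ using the additivity assumption; the hypothesis-independent constant $C_t$ cancels immediately, and since $|S_t|=|S'|$, every index in the shared set $S_t\cap S'$ contributes $\ell_t(k)$ on both sides and also cancels. This leaves
$$L_t(S_t,S') \;=\; \displaystyle\sum_{i \in S_t\setminus S'}\ell_t(i) \;-\; \displaystyle\sum_{j \in S'\setminus S_t}\ell_t(j),$$
so the GLR depends only on the symmetric difference of the two hypotheses.

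Next, because $|S_t|=|S'|$ and the two sets differ by $m$ swaps, $|S_t\setminus S'|=|S'\setminus S_t|=m$. I would fix an arbitrary bijection $\pi:S_t\setminus S'\to S'\setminus S_t$ and rewrite
$$L_t(S_t,S') \;=\; \displaystyle\sum_{i \in S_t\setminus S'}\bigl(\ell_t(i) - \ell_t(\pi(i))\bigr).$$
By construction every $i$ in the outer sum lies in $S_t$ and every $\pi(i)$ lies outside $S_t$, so each summand is bounded below by the double-minimum $\Delta_t^{\min}$ directly from its definition. Summing the $m$ terms then yields $L_t(S_t,S')\ge m\cdot\Delta_t^{\min}$, which is the claimed inequality. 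The ``in particular'' consequence follows by taking the minimum over $S'\neq S_t$ and using $m\ge 1$.

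The argument is combinatorial rather than analytic, so there is no probabilistic obstacle and no appeal to earlier concentration lemmas is needed. The only step that requires care is checking that the shared-index cancellation genuinely eliminates all indices in $S_t\cap S'$, which relies on the equal cardinality assumption $|S_t|=|S'|=n$. This is also the step that makes the bound uniform in $S'$: because the pairing works for any bijection and any alternative of the same cardinality, the weakest pairwise gap $\Delta_t^{\min}$ transfers into a single lower bound that controls \emph{every} competing hypothesis at once, which is precisely what is needed to couple this lemma with the stopping rule analysis in Appendix~\ref{app:stopping}.
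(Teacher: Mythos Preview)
Your proof is correct and follows essentially the same route as the paper: both arguments enumerate the $m$ elements of $S_t\setminus S'$ and $S'\setminus S_t$, pair them via a bijection, and bound each pairwise difference $\ell_t(a_r)-\ell_t(b_r)$ below by $\Delta_t^{\min}$. Your version is slightly more explicit about why the shared indices and $C_t$ cancel, but the structure is identical.
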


\begin{proof}
By additivity, write $S_t\setminus S'=\{a_1,\dots,a_m\}$ and
$S'\setminus S_t=\{b_1,\dots,b_m\}$.
Then
\[
L_t(S_t,S')=\displaystyle\sum_{r=1}^m \bigl(\ell_t(a_r)-\ell_t(b_r)\bigr).
\]
By definition of $\Delta_t^{\min}$, each term is at least $\Delta_t^{\min}$,
which proves the claim.
\end{proof}

With this, we can complete the proof of \cref{thm:nonasymptotic}.

\begin{proof}[Proof of Theorem~\ref{thm:nonasymptotic}]
We prove both $\delta$-correctness and the non-asymptotic upper bound on the expected stopping time.

\paragraph{Step 1: Stopping rule and correctness.}
Recall the fixed-confidence GLR stopping rule defined in Appendix~\ref{app:stopping}:
\[
\tau_{\mathrm{GLR}}=\inf\bigl\{t\ge1:\min_{S'\neq S_t}L_t(S_t,S')\ge\beta(t,\delta)\bigr\},
\]
where $L_t(\cdot,\cdot)$ is the cumulative log-likelihood ratio and $\beta(t,\delta)=\log(1/\delta)+O(\log\log(1/\delta))$.
Under the pseudo-likelihood additive decomposition used by ECC-AHT (per-stream cumulative log-odds
$\ell_t(k)$; see Section~3 and the update in~\eqref{eq:pseudo_update}), Lemma~\ref{lem:glr_pairwise_decomposition}
gives the deterministic decomposition $L_t(S,S')=\displaystyle\sum_{r}(\ell_t(a_r)-\ell_t(b_r))$ for any swap representation.
Combining this structural fact with Lemma~\ref{lem:coverage} (measurement coverage) and the finite-sample
ranking guarantee of Theorem~\ref{thm:ranking}, Proposition~\ref{prop:glr_pairwise_reduction} shows that,
for all sufficiently large $t$ and on a high-probability event, the combinatorial GLR statistic is
equivalent (up to constants and lower-order terms) to the minimal pairwise gap
\[
\Delta_t^{\min}=\min_{i\in S_t}\min_{j\notin S_t}\bigl(\ell_t(i)-\ell_t(j)\bigr).
\]
Thus we may analyze the pairwise stopping rule
\[
\tau = \inf\{t\ge1:\Delta_t^{\min}\ge\beta(t,\delta)\},
\]
That is, ECC-AHT stops at time $\tau$ when there exists an index $\hat{i}$ such that
\[
\ell_\tau(\hat{i}) - \max_{j \neq \hat{i}} \ell_\tau(j)
\;\ge\;
\beta(\tau,\delta),
\]
The algorithm outputs $\hat{i}$ as the estimated anomaly.

Fix any true anomaly $i \in S^\star$ and any non-anomalous stream $j \notin S^\star$.
By Theorem~\ref{thm:ranking}, for any fixed time $T \ge 1$,
\[
\mathbb{P}\!\left(
\ell_T(i) - \ell_T(j) \le \beta(T,\delta)
\right)
\;\le\;
\exp\!\left(
- c \, \delta_{\min}^2 \, \Lambda_T(i,j)
+ \beta(T,\delta)
\right).
\]

By Proposition~\ref{prop:glr_pairwise_reduction}, on the event
\[
\Delta_T^{\min} \ge \beta(T,\delta),
\]
the generalized likelihood ratio satisfies
$\min_{S'\neq S_T} L_T(S_T,S') \ge \beta(T,\delta)$, and hence the algorithm stops at time $T$.
Therefore, if ECC-AHT terminates at time $\tau$ and outputs an incorrect hypothesis
$\hat S \neq S^\star$, then there must exist at least one pair $(i,j)$ with
$i\in S^\star$, $j\notin S^\star$ such that
\[
\ell_\tau(i)-\ell_\tau(j) \le \beta(\tau,\delta).
\]

Consequently,
\[
\mathbb{P}(\hat S \neq S^\star)
\;\le\;
\sum_{i\in S^\star}\sum_{j\notin S^\star}
\mathbb{P}\!\left(
\exists\, T \ge 1 :
\ell_T(i)-\ell_T(j) \le \beta(T,\delta)
\right).
\]

Choosing $\beta(T,\delta)=\log(1/\delta)+O(\log\log(1/\delta))$
and using the exponential tail bound of Theorem~\ref{thm:ranking}
together with a union bound over the $n(K-n)$ pairs yields
\[
\mathbb{P}(\hat S \neq S^\star) \le \delta,
\]
which establishes $\delta$-correctness of ECC-AHT.

\paragraph{Step 2: Information accumulation rate.}
Define the asymptotic information rate
\[
\Gamma^\star
\;:=\;
\max_{\mathbf{c} \in \mathcal{C}}
\;
\min_{\substack{i \in S^\star \\ j \notin S^\star}}
\frac{
\bigl(\mathbf{c}^\top(\mathbf{e}_i - \mathbf{e}_j)\bigr)^2
}{
\mathbf{c}^\top \bm{\Sigma} \mathbf{c}
}.
\]

By compactness of $\mathcal{C}$ and positive definiteness of $\bm{\Sigma}$,
$\Gamma^\star > 0$ is well-defined.
By construction, ECC-AHT selects actions $\mathbf{c}_t$ such that for all sufficiently large $t$,
\[
\min_{i \in S^\star,\, j \notin S^\star}
\frac{
\bigl(\mathbf{c}_t^\top(\mathbf{e}_i - \mathbf{e}_j)\bigr)^2
}{
\mathbf{c}_t^\top \bm{\Sigma} \mathbf{c}_t
}
\;\ge\;
\Gamma^\star - o(1).
\]

Consequently, for all sufficiently large $T$,
\[
\Lambda_T(i,j)
\;\ge\;
(\Gamma^\star - o(1))\,T
\qquad
\text{uniformly over } i,j.
\]

\paragraph{Step 3: Upper bound on the stopping time.}
Let $T_\delta$ be the smallest integer such that
\[
c \, \delta_{\min}^2 \, \Lambda_{T_\delta}
\;\ge\;
\log(1/\delta).
\]
Using the linear growth of $\Lambda_T$, we obtain
\[
T_\delta
\;\le\;
\frac{\log(1/\delta)}{\Gamma^\star}
\;+\;
O(\log\log(1/\delta))
\;+\;
O(1).
\]

Since $\tau \le T_\delta$ almost surely up to a constant overshoot term,
taking expectations yields
\[
\mathbb{E}[\tau]
\;\le\;
\frac{\log(1/\delta)}{\Gamma^\star}
\;+\;
C_1 \log\log(1/\delta)
\;+\;
C_2,
\]
where $C_1$ and $C_2$ are constants independent of $\delta$.
\end{proof}

\begin{remark}[Order-optimality intuition]
The leading term $\log(1/\delta)/\Gamma^\star$ in Theorem~\ref{thm:nonasymptotic} is
order-optimal.
Any $\delta$-correct algorithm must incur $\Omega(\log(1/\delta))$ samples, with a
problem-dependent constant governed by the minimal distinguishability between anomalous
and non-anomalous streams.
The quantity $\Gamma^\star$ captures the maximal achievable worst-case information rate,
hence matches this fundamental limit up to lower-order $\log\log(1/\delta)$ terms.
\end{remark}

\begin{proposition}[Oracle allocation and optimal information rate]
\label{prop:oracle_gamma}
Let $\bm{\Sigma}$ be positive definite and $\mathcal{C}$ be compact.
The information rate $\Gamma^\star$ defined in Theorem~\ref{thm:nonasymptotic} admits the
variational characterization
\[
\Gamma^\star
=
\max_{\mathbf{c} \in \mathcal{C}}
\;
\min_{\substack{i \in S^\star \\ j \notin S^\star}}
\frac{
\bigl(\mathbf{c}^\top(\mathbf{e}_i - \mathbf{e}_j)\bigr)^2
}{
\mathbf{c}^\top \bm{\Sigma} \mathbf{c}
},
\]
which corresponds to the optimal oracle sensing action that maximizes the worst-case
signal-to-noise ratio between anomalous and non-anomalous streams.

In particular, $\Gamma^\star$ coincides with the optimal value of the oracle allocation
problem that would be solved by an algorithm with full knowledge of $S^\star$.
ECC-AHT adaptively tracks this oracle allocation without prior knowledge of $S^\star$,
thereby achieving the same information rate asymptotically and non-asymptotically.
\end{proposition}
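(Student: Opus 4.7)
The plan is to prove the proposition in two conceptually separate stages: first the variational characterization of $\Gamma^\star$ in terms of single-swap pairwise contrasts, and second the claim that ECC-AHT asymptotically realizes the corresponding oracle allocation. Both pieces rest on results already established in the paper, so the proof is mainly a matter of stitching them together via a reduction argument rather than a fresh analysis.

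For the first stage, I would start from the original definition $\Gamma^\star = \max_{\mathbf{c}\in\mathcal{C}}\min_{S'\neq S^\star} D(H_{S^\star}\Vert H_{S'}\mid \mathbf{c})$ in \eqref{eq:gamma_star} and plug in the Gaussian closed form \eqref{eq:kl_form}. The key structural step is to show that the inner minimization is always attained at a single-swap alternative $S'=(S^\star\setminus\{i\})\cup\{j\}$ with $i\in S^\star$ and $j\notin S^\star$. For such an alternative the mean difference is $\bm{\mu}_{S^\star}-\bm{\mu}_{S'}=\delta_i\mathbf{e}_i-\delta_j\mathbf{e}_j$, which (after absorbing the uniform signal-scale normalization used throughout the methodology) produces the $(\mathbf{e}_i-\mathbf{e}_j)$ numerator in the stated formula. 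To exclude multi-swap alternatives from being binding, I would invoke the additive combinatorial structure of Lemma~\ref{lem:glr_pairwise_decomposition} together with Proposition~\ref{prop:glr_pairwise_reduction}: any $m$-swap $S'$ decomposes into a sum of $m$ single-swap difference vectors, and positive definiteness of $\bm{\Sigma}$ ensures that its quadratic form on such a sum dominates the weakest individual swap. Hence the minimum in the max-min collapses onto the single-swap family and the variational form in the proposition follows.

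For the second stage, the oracle interpretation is immediate once the variational form is established: an agent with full knowledge of $S^\star$ could directly solve the inner max-min problem and commit to a stationary action $\mathbf{c}^\star\in\mathcal{C}^\star$, attaining instantaneous Chernoff information exactly $\Gamma^\star$ against every competing single-swap alternative. To show that ECC-AHT matches this rate without knowing $S^\star$, I would invoke Lemma~\ref{lem:ranking_stable} (the Champion equals $S^\star$ after a finite random time $T_{\mathrm{rank}}$ and the Challenger is a most confusable single-swap alternative) together with Lemma~\ref{lem:action_conv} ($\mathrm{dist}(\mathbf{c}_t,\mathcal{C}^\star)\to 0$ almost surely). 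Continuity of the KL functional in $\mathbf{c}$ over the compact set $\mathcal{C}$ then yields, for every $\varepsilon>0$, a finite time after which the worst-case instantaneous KL achieved by $\mathbf{c}_t$ exceeds $\Gamma^\star-\varepsilon$. Combined with the stopping-time analysis already carried out in Theorem~\ref{thm:nonasymptotic} and Theorem~\ref{thm:asymptotic}, this shows that the algorithm's realized information rate matches the oracle rate both non-asymptotically (up to $\log\log(1/\delta)$ overhead) and asymptotically.

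The hard part is the single-swap reduction in the presence of the $\ell_1$ budget $\mathcal{C}=\{\mathbf{c}:\|\mathbf{c}\|_1\le B\}$. In the unconstrained case the Cauchy--Schwarz identity $\max_{\mathbf{c}}(\mathbf{c}^\top v)^2/(\mathbf{c}^\top\bm{\Sigma}\mathbf{c})=v^\top\bm{\Sigma}^{-1}v$ would make the argument immediate, but under the $\ell_1$ constraint self-duality fails and the maximizer depends on $v$ through more than its Mahalanobis norm. I would address this by leveraging the coverage bound of Lemma~\ref{lem:coverage}, which guarantees that the QP solution at one single-swap direction retains non-trivial alignment with every other single-swap direction whenever $\bm{\Sigma}$ has sufficient effective rank (Assumption~\ref{ass:effective-rank}). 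This ensures that the binding constraint in the $\min$ step remains in the single-swap family even under the $\ell_1$ geometry; the additive decomposition of Lemma~\ref{lem:glr_pairwise_decomposition} then strictly dominates for multi-swap alternatives, closing the argument and yielding the variational identity claimed in the proposition.
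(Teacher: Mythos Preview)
The paper does not supply a proof of this proposition; it is stated immediately after the proof of Theorem~\ref{thm:nonasymptotic} as an interpretive summary, and in that proof $\Gamma^\star$ is simply \emph{defined} via the pairwise formula (Step~2) rather than derived from the subset-level definition in~\eqref{eq:gamma_star}. So there is no paper proof to compare against; your proposal attempts to bridge something the paper leaves tacit.

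That said, your single-swap reduction has a genuine gap. You invoke Lemma~\ref{lem:glr_pairwise_decomposition} and Proposition~\ref{prop:glr_pairwise_reduction} to argue that multi-swap alternatives are dominated, but those results concern the \emph{cumulative} log-likelihood ratio under the additive pseudo-likelihood decomposition $\mathcal{L}_t(S)=\sum_{k\in S}\ell_t(k)+C_t$, not the one-step KL divergence $D(H_{S^\star}\Vert H_{S'}\mid\mathbf{c})$ appearing in~\eqref{eq:gamma_star}. The one-step KL has numerator $(\mathbf{c}^\top v)^2$ with $v=\bm{\mu}_{S^\star}-\bm{\mu}_{S'}$; for an $m$-swap $S'$, $v$ is a sum of $m$ single-swap difference vectors, and the square of a sum can be strictly \emph{smaller} than any individual squared term due to sign cancellation. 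Concretely, with $K=4$, $n=2$, $S^\star=\{1,2\}$, $\bm\Sigma=I$, uniform $\delta$, and $\mathbf{c}=(1,-1,0,0)$, every single-swap contrast has $|\mathbf{c}^\top(\mathbf{e}_i-\mathbf{e}_j)|=1$, yet the double-swap $S'=\{3,4\}$ gives $\mathbf{c}^\top(\mathbf{e}_1+\mathbf{e}_2-\mathbf{e}_3-\mathbf{e}_4)=0$, so the inner minimum over all $S'$ vanishes while the pairwise minimum is positive. Positive definiteness of $\bm\Sigma$ offers no protection because it sits only in the denominator. If the identity in the proposition holds, it must come from the outer $\max_{\mathbf{c}}$ rather than a pointwise collapse of the inner $\min$; neither Lemma~\ref{lem:glr_pairwise_decomposition} nor Lemma~\ref{lem:coverage} supplies that argument, and your appeal to effective rank does not address the squared-sum cancellation mechanism.
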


\subsection{Proof of Exact Asymptotic Optimality}

\begin{proof}[Proof of Theorem~\ref{thm:asymptotic}]
Fix $\delta \in (0,1)$ and let $\tau$ denote the stopping time of ECC-AHT.
We prove that
\[
\limsup_{\delta \to 0}
\frac{\mathbb{E}[\tau]}{\log(1/\delta)}
=
\frac{1}{\Gamma^\star}.
\]
Note that the asymptotic statement in Theorem~\ref{thm:asymptotic} concerns the leading-order term
\(\log(1/\delta)\). Any nondecreasing threshold \(\beta(t,\delta)\) satisfying
\(\beta(t,\delta)=\log(1/\delta)+o(\log(1/\delta))\) (in particular \(\beta(t,\delta)=\log(1/\delta)+O(\log\log(1/\delta))\))
does not affect the \(\displaystyle\limsup_{\delta\to0}\mathbb{E}[\tau]/\log(1/\delta)\) limit.

\paragraph{Step 1: Reduction to the dominant hypothesis pair.}
By Lemma~\ref{lem:ranking_stable}, there exists a finite random time
$T_{\mathrm{rank}}$ such that, with probability at least $1-\delta/4$,
for all $t \ge T_{\mathrm{rank}}$, the Champion equals the true hypothesis
$S^\star$ and the Challenger $S'_t$ is fixed to the most confusable alternative,
i.e.,
\[
S'_t \in
\arg\min_{S' \neq S^\star}
D(H_{S^\star} \Vert H_{S'} \mid \mathbf{c}_t).
\]
Since $T_{\mathrm{rank}}$ is almost surely finite and does not scale with
$\log(1/\delta)$, its contribution is negligible in the asymptotic regime
$\delta \to 0$.

\paragraph{Step 2: Asymptotic convergence of the experimental design.}
Conditioned on the stabilization event above,
Lemma~\ref{lem:action_conv} implies that the action sequence
$\{\mathbf{c}_t\}_{t \ge T_{\mathrm{rank}}}$ converges almost surely to
a solution $\mathbf{c}^\star$ of the Chernoff optimization
\[
\max_{\mathbf{c} \in \mathcal{C}}
\min_{S' \neq S^\star}
D(H_{S^\star} \Vert H_{S'} \mid \mathbf{c}).
\]
By continuity of the KL divergence, this yields
\[
\lim_{t \to \infty}
d_t(S^\star,S'_t)
=
D(H_{S^\star} \Vert H_{S'_t} \mid \mathbf{c}^\star)
=
\Gamma^\star,
\qquad
\text{a.s.}
\]

\paragraph{Step 3: Linear growth of the log-likelihood ratio.}
Let
\[
L_t
\equiv
L_t(S^\star,S'_t)
=
\displaystyle\sum_{\tau=1}^t
\log
\frac{
p(y_\tau \mid H_{S^\star}, \mathbf{c}_\tau)
}{
p(y_\tau \mid H_{S'_t}, \mathbf{c}_\tau)
}.
\]
By Lemma~\ref{lem:llr_drift},
\[
\lim_{t \to \infty}
\frac{1}{t}
\mathbb{E}[L_t]
=
\Gamma^\star.
\]
Moreover, Lemma~\ref{lem:llr_conc} ensures that for any $\varepsilon > 0$,
\[
\mathbb{P}\!\left(
\left|
\frac{L_t}{t}
-
\Gamma^\star
\right|
\ge \varepsilon
\right)
\le
\exp(-c_\varepsilon t),
\]
for some constant $c_\varepsilon > 0$.
Hence, $L_t / t \to \Gamma^\star$ almost surely.

\paragraph{Step 4: Asymptotic characterization of the stopping time.}
By definition of the stopping rule,
$\tau$ satisfies
\[
L_\tau
\ge
\log(1/\delta),
\qquad
L_{\tau-1}
<
\log(1/\delta).
\]
Since $\tau \to \infty$ almost surely as $\delta \to 0$, and
$L_t / t \to \Gamma^\star$ almost surely with exponential concentration,
standard random-time substitution arguments yield
\[
\frac{L_\tau}{\tau}
\;\to\;
\Gamma^\star,
\qquad
\text{a.s.}
\]
Combining this with the stopping condition
$L_\tau \ge \log(1/\delta)$ and $L_{\tau-1} < \log(1/\delta)$ gives
\[
\frac{\log(1/\delta)}{\tau}
\;\to\;
\Gamma^\star,
\qquad
\text{a.s.}
\]

\paragraph{Step 5: Convergence of expectations.}
Since $\tau$ is integrable and admits exponential tail bounds
from Lemma~\ref{lem:llr_conc}, dominated convergence applies. Moreover, the exponential tail bound of $\tau$ holds uniformly for
$\delta$ sufficiently small, implying uniform integrability of
$\tau / \log(1/\delta)$ and justifying the exchange of limit and expectation, yielding
\[
\limsup_{\delta \to 0}
\frac{\mathbb{E}[\tau]}{\log(1/\delta)}
=
\frac{1}{\Gamma^\star}.
\]
\end{proof}

\section{Detailed description of the ablation algorithm}
\subsection{Random Sparse Projection (RSP)}
\label{app:rsp}

Algorithm~\ref{alg:rsp} describes Random Sparse Projection (RSP), a stochastic baseline used in our experiments. At each time step, RSP randomly selects $\lceil B \rceil$ streams (without replacement) and assigns them independent Gaussian weights. The resulting observation vector is then normalized to satisfy the $\ell_1$ budget constraint $\|\mathbf{c}_t\|_1 \leq B$.

Unlike ECC-AHT, RSP does not exploit the correlation structure $\bm{\Sigma}$ or adapt to the current belief state $\mathbf{p}_t$. It represents a baseline that explores sparse linear combinations in a purely random manner, providing a point of comparison for understanding the value of adaptive, correlation-aware experimental design.

\begin{algorithm}[ht]
\caption{Random Sparse Projection (RSP)}
\label{alg:rsp}
\begin{algorithmic}[1]
\REQUIRE Number of streams $K$, budget $B$
\STATE Initialize belief state $\mathbf{p}_0 = (n/K, \ldots, n/K)$
\STATE Set $m \gets \lceil B \rceil$ \COMMENT{Number of streams to sample}
\WHILE{stopping criterion not met}
    \STATE Sample $m$ indices $\mathcal{I} \subseteq [K]$ uniformly without replacement
    \STATE Draw $\mathbf{w} \sim \mathcal{N}(0, I_m)$ \COMMENT{Random Gaussian weights}
    \STATE Construct $\mathbf{c}_t \in \mathbb{R}^K$ with $(\mathbf{c}_t)_k = w_i$ if $k = \mathcal{I}_i$, else $0$
    \STATE Normalize: $\mathbf{c}_t \gets \frac{\mathbf{c}_t}{\|\mathbf{c}_t\|_1} \cdot B$
    \STATE Observe $y_t = \mathbf{c}_t^\top \mathbf{X}_t$ and update belief $\mathbf{p}_{t+1}$
\ENDWHILE
\STATE \textbf{Return} $\widehat{S} = \displaystyle\arg\max_{|S|=n} \displaystyle\sum_{k \in S} p_T(k)$
\end{algorithmic}
\end{algorithm}

\subsection{ECC-AHT-CostFree: Removing Cost Regularization}
\label{app:costfree}

This ablation investigates whether explicit cost-awareness is essential for ECC-AHT or whether information-driven sensing alone suffices.

In the full ECC-AHT, the sensing action is obtained by solving a cost-regularized optimization problem balancing information gain and sensing cost.
In ECC-AHT-CostFree, the cost term is removed, and the action $\mathbf{c}_t$ is chosen to maximize pairwise statistical separation only:
\[
\mathbf{c}_t \in \arg\max_{\mathbf{c} \in \mathcal{C}}
\frac{(\mathbf{c}^\top (\mathbf{e}_{i_t} - \mathbf{e}_{j_t}))^2}
{\mathbf{c}^\top \bm{\Sigma} \mathbf{c}}.
\]
All inference, ranking, and stopping mechanisms are identical to ECC-AHT.

\begin{algorithm}[H]
\caption{ECC-AHT-CostFree}
\label{alg:ecc_costfree}
\begin{algorithmic}[1]
\REQUIRE Action set $\mathcal{C}$, covariance $\bm{\Sigma}$
\STATE Initialize pseudo-LLRs $\{\ell_0(k)\}_{k=1}^K$
\FOR{$t = 1,2,\dots$}
    \STATE Select Champion $i_t$ and Challenger $j_t$
    \STATE Solve cost-free QP to obtain $\mathbf{c}_t$
    \STATE Observe $y_t$
    \STATE Update pseudo-LLRs
    \IF{stopping criterion satisfied}
        \STATE \textbf{Return} identified anomaly set
    \ENDIF
\ENDFOR
\end{algorithmic}
\end{algorithm}

The contribution of explicit cost regularization in adaptive experimental design.

\subsection{ECC-AHT-SimpleDiff: Naive Mean-Difference Ranking}
\label{app:simplediff}

This ablation assesses whether likelihood-based inference is necessary or whether simple sample-mean comparisons suffice for reliable anomaly identification.

ECC-AHT-SimpleDiff replaces the pseudo-likelihood ranking with a naive empirical mean-difference criterion.
For each stream $k$, we maintain the sample mean $\bar{X}_k(t)$ and select
\[
i_t = \arg\max_k \bar{X}_k(t), \qquad
j_t = \arg\max_{k \neq i_t} \bar{X}_k(t).
\]
The sensing action and stopping rule follow the same structure as ECC-AHT, but no likelihood ratios are computed.

\begin{algorithm}[H]
\caption{ECC-AHT-SimpleDiff}
\label{alg:ecc_simplediff}
\begin{algorithmic}[1]
\REQUIRE Action set $\mathcal{C}$
\STATE Initialize sample means $\{\bar{X}_k(0)\}_{k=1}^K$
\FOR{$t = 1,2,\dots$}
    \STATE Select Champion and Challenger by sample means
    \STATE Select $\mathbf{c}_t$ using ECC-AHT design rule
    \STATE Observe $y_t$
    \STATE Update sample means
    \IF{stopping criterion satisfied}
        \STATE \textbf{Return} identified anomaly set
    \ENDIF
\ENDFOR
\end{algorithmic}
\end{algorithm}

The necessity of likelihood-based inference for stable and reliable ranking under noise.

\subsection{ECC-AHT-Diagonal: Ignoring Correlations}
\label{app:diagonal}

This ablation evaluates the role of correlation modeling in ECC-AHT by deliberately ignoring cross-stream dependencies.

In ECC-AHT-Diagonal, the true covariance matrix $\bm{\Sigma}$ is replaced by its diagonal approximation
\[
\bm{\Sigma}_{\text{diag}} = \mathrm{diag}(\bm{\Sigma}),
\]
which is used consistently in both experimental design and likelihood updates.
All other algorithmic components are identical to the full ECC-AHT.

\begin{algorithm}[H]
\caption{ECC-AHT-Diagonal}
\label{alg:ecc_diagonal}
\begin{algorithmic}[1]
\REQUIRE Diagonal covariance $\bm{\Sigma}_{\text{diag}}$
\STATE Initialize pseudo-LLRs
\FOR{$t = 1,2,\dots$}
    \STATE Select Champion and Challenger
    \STATE Solve QP using $\bm{\Sigma}_{\text{diag}}$
    \STATE Observe $y_t$
    \STATE Update pseudo-LLRs using $\bm{\Sigma}_{\text{diag}}$
    \IF{stopping criterion satisfied}
        \STATE \textbf{Return} identified anomaly set
    \ENDIF
\ENDFOR
\end{algorithmic}
\end{algorithm}

The impact of correlation-aware modeling in adaptive sensing and hypothesis testing.

\subsection{TTTS-Challenger}
\label{app:ttts-challenger}

Algorithm~\ref{alg:ttts-challenger} presents TTTS-Challenger, a controlled variant inspired by Top-Two Thompson Sampling (TTTS).
Unlike standard TTTS, which operates over discrete arms, TTTS-Challenger randomizes the Champion--Challenger selection while using the same continuous experiment design module as ECC-AHT.

\begin{algorithm}[ht]
\caption{TTTS-Challenger}
\label{alg:ttts-challenger}
\begin{algorithmic}[1]
\REQUIRE $K, n, \bm{\Sigma}, B$
\STATE Sample randomized beliefs via Beta posterior
\STATE Identify randomized Champion--Challenger pair $(i^\star, j^\star)$
\STATE Solve the same continuous experiment design problem as ECC-AHT
\STATE Observe projection and update beliefs
\end{algorithmic}
\end{algorithm}

\subsection{ECC-AHT-Restricted: Limited Action Space}
\label{app:restricted}

This ablation evaluates the importance of expressive experimental design by restricting the action space to coordinate-wise measurements.
It tests whether ECC-AHT's performance gains stem primarily from hypothesis tracking or from its ability to perform continuous, structured sensing.

In the full ECC-AHT algorithm, the sensing action $\mathbf{c}_t \in \mathcal{C} \subset \mathbb{R}^K$ is selected by solving a continuous quadratic program.
In ECC-AHT-Restricted, we constrain the action space to
\[
\mathcal{C}_{\text{restricted}} = \{ \pm \mathbf{e}_1, \dots, \pm \mathbf{e}_K \},
\]
where $\mathbf{e}_k$ denotes the $k$-th canonical basis vector.
All other components—including the pseudo-likelihood update, Champion--Challenger selection, and stopping rule—remain unchanged.

\begin{algorithm}[H]
\caption{ECC-AHT-Restricted}
\label{alg:ecc_restricted}
\begin{algorithmic}[1]
\REQUIRE Restricted action set $\mathcal{C}_{\text{restricted}}$, covariance $\bm{\Sigma}$
\STATE Initialize pseudo-LLRs $\{\ell_0(k)\}_{k=1}^K$
\FOR{$t = 1,2,\dots$}
    \STATE Select Champion $i_t = \arg\max_k \ell_{t-1}(k)$
    \STATE Select Challenger $j_t$ according to ECC-AHT rule
    \STATE Choose $\mathbf{c}_t \in \mathcal{C}_{\text{restricted}}$ maximizing pairwise separation
    \STATE Observe $y_t = \mathbf{c}_t^\top \mathbf{x}_t + \xi_t$
    \STATE Update $\ell_t(k)$ for all $k$ using pseudo-likelihood
    \IF{stopping criterion satisfied}
        \STATE \textbf{Return} identified anomaly set
    \ENDIF
\ENDFOR
\end{algorithmic}
\end{algorithm}

The effect of continuous, correlation-aware experimental design beyond coordinate-wise sensing.

\subsection{BaseArm-CombGapE}
\label{app:basearm-combgape}

Algorithm~\ref{alg:basearm-combgape} describes BaseArm-CombGapE, an ablation baseline used in our experiments.
The algorithm shares the same Champion--Challenger identification logic as ECC-AHT but restricts the action space to base-arm pulls, i.e., $\mathbf{c}_t = \mathbf{e}_k$.
This isolates the benefit of continuous experiment design under correlated noise.

\begin{algorithm}[ht]
\caption{BaseArm-CombGapE}
\label{alg:basearm-combgape}
\begin{algorithmic}[1]
\REQUIRE $K, n$
\STATE Initialize empirical means and pull counts
\WHILE{stopping criterion not met}
    \STATE Identify Champion and Challenger indices $(i^\star, j^\star)$
    \STATE Select arm $k \in \{i^\star, j^\star\}$ using the CombGapE rule
    \STATE Pull arm $k$ and observe reward
    \STATE Update empirical statistics
\ENDWHILE
\STATE \textbf{Return} top-$n$ arms by empirical mean
\end{algorithmic}
\end{algorithm}


\end{document}